\DeclarePairedDelimiterX{\norm}[1]{\lVert}{\rVert}{\mathopen{}\mathclose\bgroup#1\egroup}
\newtheorem{theorem}{Theorem}[section]
\newtheorem{lemma}[theorem]{Lemma}
\definecolor{plasmaA}{HTML}{6A00A8}
\definecolor{plasmaB}{HTML}{B12A90}
\definecolor{plasmaC}{HTML}{E16462}
\definecolor{plasmaBG}{rgb}{0.98,0.98,0.98} 
\newcommand{\highlight}[1]{\textbf{\textcolor{plasmaB}{##1}}}
\newcommand{\highlightA}[1]{\textbf{\textcolor{plasmaA}{##1}}}
\newcommand{\highlightB}[1]{\textbf{\textcolor{plasmaC}{##1}}}
\lstdefinestyle{pyplasma}{
  language=Python,
  basicstyle=\ttfamily\small,
  numbers=left,
  numberstyle=\color{black}\scriptsize, 
  numbersep=7pt,                        
  xleftmargin=1.5em,
  showstringspaces=false, upquote=true,
  columns=fullflexible, keepspaces=true,
  breaklines=true, breakatwhitespace=false, breakindent=1.5em,
  postbreak=\mbox{\textcolor{gray}{$\hookrightarrow$}\space},
  literate=* {.}{{.}}1 {(}{{(}}1 {)}{{)}}1 {[}{{[}}1 {]}{{]}}1
            {/}{{/}}1 {=}{{=}}1 {,}{{,}}1,
  alsoletter={_},
  moredelim=**[is][\bfseries\color{plasmaB}]{\\highlight\{}{\}},
  moredelim=**[is][\bfseries\color{plasmaA}]{\\highlightA\{}{\}},
  moredelim=**[is][\bfseries\color{plasmaC}]{\\highlightB\{}{\}},
}
\newcommand{\TwIST}{\texttt{TwIST}}
\renewcommand{\Notice@String}{} 
\begin{document}
\twocolumn[
\mlsystitle{TwIST: Rigging the Lottery in Transformers with \\ Independent Subnetwork Training}



\mlsyssetsymbol{equal}{*}
\begin{mlsysauthorlist}
\mlsysauthor{Michael Menezes}{rice}
\mlsysauthor{Barbara Su}{rice}
\mlsysauthor{Xinze Feng}{rice}
\mlsysauthor{Yehya Farhat}{rice}
\mlsysauthor{Hamza Shili}{rice}
\mlsysauthor{Anastasios Kyrillidis}{rice}
\end{mlsysauthorlist}

\mlsysaffiliation{rice}{Department of Computer Science, Rice University, Texas, USA}

\mlsyscorrespondingauthor{Anastasios Kyrillidis}{anastasios@rice.edu}

\mlsyskeywords{Machine Learning, MLSys}

\vskip 0.3in

\begin{abstract}
We introduce \TwIST{}, a novel distributed system for efficient Large Language Model (LLM) training. Motivated by our ``\textit{golden lottery ticket hypothesis},'' \TwIST{} trains subnetworks in parallel, periodically aggregating and resampling, yielding high-performance subnets (``golden tickets'') that require no fine-tuning. This enables robust, zero-cost pruning at deployment, achieving perplexity scores close to state-of-the-art post-training methods while bypassing their post-training overhead (e.g., calibration, Hessian inversion). \TwIST{}'s advantage emerges under aggressive pruning (e.g., 50\%+ sparsity), where it significantly outperforms baselines; for example, achieving 23.14 PPL while the closest baseline follows at 31.64. As a \textit{structured pruning} method, \TwIST{} produces smaller, dense matrices, translating to tangible inference speedups and memory savings on commodity hardware deployments (e.g., CPUs) that lack sparse computation support. We provide the complete implementation \href{https://anonymous.4open.science/r/twist2-373F}{here}.
\end{abstract}
]

\printAffiliationsAndNotice{}  

\section{Introduction}

Large Language Models (LLMs) \citep{achiam2023gpt, brown2020language} have reshaped the field of AI with their performance across a wide range of tasks. The Generative Pretrained Transformer (GPT) family has proven to be a powerful architecture, demonstrating generalization on diverse and complex language benchmarks \citep{bommarito2022gpt, chen2021evaluating, wei2022emergent}. However, training and deploying these models come with massive computational costs. For instance, DeepSeek-V3 has approximately 671 billion parameters. Storing such a model with half-precision floating-point numbers (FP16) would require around 1.34 TB of memory ($2$ bytes per parameter $\times$ $671\text{B}$ parameters). To put this into perspective, even for holding the parameters alone, this would necessitate about 17-20 NVIDIA A100 GPUs, each equipped with 80 GB of memory; real inference typically would need 25-100 A100s depending on context length and KV precision. Thus, to democratize the use of LLMs, the research community has explored powerful techniques to mitigate these resource bottlenecks, primarily through model compression. Two of the most prominent approaches are quantization and pruning. 

Quantization reduces the memory footprint of a model by representing its parameters with lower-precision numerical formats \citep{dettmers2022gpt3, frantar2022gptq, ahmadian2023intriguing, yao2022zeroquant}. This technique is a cornerstone of model compression, with two main strategies: Post-Training Quantization (PTQ) and Quantization-Aware Training (QAT). PTQ offers a straightforward way to compress a pre-trained model but could sometimes lead to a non-neglibigle decline in accuracy. On the other hand, QAT simulates the quantization process during training, often preserving higher accuracy at the cost of increased training complexity and computational overhead. A commonality in these approaches is that the expensive full-precision full-model training rounds could still often be prerequisite.

A complementary approach to quantization is \textit{pruning} \citep{lecun1989optimal, hassibi1993optimal}, which removes individual weights from the model. This approach is supported by concepts like the Lottery Ticket Hypothesis (LTH) \citep{frankle2018lottery}, which claims that dense networks contain sparse ``winning tickets'' that can match full model accuracy. Despite LTH, the wide adoption of pruning for LLMs remains an open challenge, largely because finding these subnetworks is difficult. Some of the more successful methods require extensive retraining \citep{liu2018rethinking, blalock2020state} or costly iterative and fine-tuning procedures \citep{frankle2018lottery, renda2020comparing}. In general, although other sparsity inducing \citep{evci2020rigging, sanh2020movement} or pruning-aware \citep{han2015learning, liu2021sparse} training regimes have shown some moderate success,  they still require multiple training passes and extensive amounts of memory. The more practical pruning techniques for LLMs are post-training pruning (PTP) methods, which compress an already trained model without any re-training \citep{bhuiyan2025z, frantar2023sparsegpt, sun2023simple}. While PTP is computationally less demanding, it often still involves solving complex, and usually expensive subproblems. A common thread among these pruning strategies is that they require full-model training iterations, attempting to find important weights after this step. 

It is unclear if the performance of current pruning algorithms represents an upper bound on the quality of sparse models. \citet{gale2019state} found that three different post-training methods all achieve about the same sparsity / accuracy trade-off. Thus, it is an open question whether better performance trade-offs are possible. \citet{prasanna2020bert} study the LTH from the perspective of a pretrained BERT model \citep{devlin2019bert} and empirically confirm the existence of winning tickets. But their results also reveal other interesting insights. The same non-retrained winning lottery tickets of these models are actually not that far behind the retrained tickets. Furthermore, their work hints at the surprising viability of random pruning. While a randomly pruned and then retrained subnet often lags behind a winning ticket, its performance is not negligible. Such random pruning at initialization \citep{su2020sanity, liu2022unreasonable, gadhikar2023random} is almost always favored for its simple, computationally cheap, and data independent nature.

Inspired by \citep{prasanna2020bert} and focusing on Transformer-based neural network training, if the main bottleneck is the expensive, iterative search for a specific ``winning ticket,'' and not the retraining, we ask whether we can redesign the training process itself to eliminate the search. I.e., what if, instead of creating a dense model with a few high-performing subnets, we could train a model where high performance is the default for most subnets?

This leads us to propose an extension of the LTH, which we term the \textit{golden lottery ticket hypothesis}: \textbf{\textit{A dense neural network can be trained such that the vast majority of its randomly sampled subnets achieve high performance without any subsequent training or further fine-tuning.}} Such a model would be inherently compressible and efficient, as sparsity could be achieved by simple random selection rather than a costly search algorithm. From here we ask: how can we regularize a model during training such that nearly any randomly sampled subnet is a ``golden ticket?''

The practical benefits of such an approach could be significant. Consider a scenario with a diverse ecosystem of end-user devices, from high-end servers to everyday smartphones. With an inherently compressible parent model, we could deploy a spectrum of smaller ``tickets'' tailored to the computational capabilities of each device. This would enable a consistent user experience across different hardware, with each device running a version of the model that is not just smaller, but also comparably proficient. This paradigm shifts away from a one-size-fits-all deployment strategy to a more flexible and efficient distribution of AI.

\textbf{Our approach and contributions.} We sidestep the expensive search-and-retrain paradigm of pruning and instead introduce \textit{Transformers with Independent Subnetwork Training (\TwIST{})}, a novel distributed training algorithm that is inspired by \citet{yuan2019distributed, wolfe2024gist, dun2023efficient, hu2023federated,  dun2022resist}. The algorithm is efficient compared to other standard distributed algorithms and designed to imbue the model with an inherent structural robustness. By training independent subnets across different compute nodes, where each subnet spans the same number of layers as the original mode, \TwIST{} encourages the full model to develop a weight structure where multiple pathways are inherently performant. This approach tackles training and deployment efficiency simultaneously, aiming to produce a model that is compressible by default.
The key contributions of our work can be summarized as follows: 
\begin{itemize}
    \item We introduce the \textit{golden lottery ticket hypothesis}: that a dense network can be trained so that randomly sampled subnets achieve high performance \textit{without fine-tuning}.
    \item We empirically validate our hypothesis on text generation, using \TwIST. Demonstrating the feasibility of zero-cost pruning at deployment. As shown in Table~\ref{tab:subnet_performance}, \TwIST{} is highly competitive with SOTA methods while incurring low post-training overhead (e.g., calibration, Hessian inversion).
    \item We show that \TwIST's subnets induce both system stability and architectural robustness, making it suitable for fault-tolerant model-parallel deployments (See Figures~\ref{fig:subnet_loss_distribution_attn} and \ref{fig:robustness_heatmap_attn}).
    \item We highlight that in aggressive structured pruning scenarios \TwIST{} excels (e.g. at a 4/12 ratio, \TwIST{} achieves 23.14 PPL while the closest PTP baseline follows at 31.64) and unlike unstructured methods we see tangible speedups on commodity hardware.
\end{itemize}

\section{Background}

\textbf{Notation.} Vectors and matrices are represented with bold font (e.g., $\bm{x}$), while scalars by plain font (e.g., $x$ or $S$). Capital letters distinguish matrices from vectors (e.g., $\bm{W}$ vs $\bm{w}$). Calligraphic uppercase letters denote sets (e.g., $\mathcal{D}$); the cardinality of $\mathcal{D}$ is represented as $|\mathcal{D}|$. 

\textbf{Problem formulation.} We consider a distributed training setup over $S$ compute nodes, where node $s$ holds local data $\mathcal{D}_s = \{(\bm{X}_i, \bm{Y}_i)\}_{i=1}^{|\mathcal{D}_s|}$. We assume each local dataset $\mathcal{D}_s$ is drawn independently and identically distributed (i.i.d.) from a global data distribution. Our goal is to train a transformer-based model, whose parameters are collectively denoted
\begin{equation*}
\bm{W} = \{ \bm{W}^{\text{embd}}, \bm{W}^{\text{proj}} \} \cup \{ \bm{W}^{\text{layer}}_l \}_{l = 1}^{L}
\end{equation*}
where $\bm{W}^{\text{embd}}$ is the token embedding, $\bm{W}^{\text{proj}}$ is the task specific final projection, and each layer
\begin{equation*}
\bm{W}^{\text{layer}} = \{ \bm{W}^{\text{ln}}, \bm{W}^Q, \bm{W}^K, \bm{W}^V, \bm{C}^{\text{attn}}, \bm{W}^{\text{ffn}}, \bm{C}^{\text{ffn}} \}
\end{equation*}
contains the layer norm parameters ($\bm{W}^{\text{ln}} \in \mathbb{R}^{d_{\text{model}} \times 2}$); the query, key, value ($\bm{W}^Q, \bm{W}^K, \bm{W}^V \in \mathbb{R}^{d_{\text{model}} \times H d_{\text{head}}}$), and output ($\bm{C}^{\text{attn}} \in \mathbb{R}^{H d_{\text{head}} \times d_{\text{model}}}$) projection matrices for multi-head attention; and the feedforward network weights ($\bm{W}^{\text{ffn}} \in \mathbb{R}^{d_\text{model} \times d_{\text{inner}}}, \bm{C}^{\text{ffn}} \in \mathbb{R}^{d_\text{inner} \times d_{\text{model}}}$). Moreover, we define $d_{\text{model}}$ as the model embedding dimension, $d_{\text{head}}$ as the dimension of a single attention head, $H$ as the number of attention heads, and $d_{\text{inner}}$ as the feedforward hidden dimension. The goal is to find values for $\bm{W}$ that achieve good accuracy on all data $\mathcal{D}=\cup_s \mathcal{D}_s$, by minimizing the following optimization objective:
\begin{equation*}
    \bm{W}^\star \in \underset{\bm{W}}{\arg\min}~
    \Biggl\{\,\mathcal{L}(\bm{W})
    := \frac{1}{S}\sum_{s=1}^S
    \ell\!\left(\bm{W}_s, \mathcal{D}_s\right)
    \Biggr\},
    \label{eq:Twist_loss}
\end{equation*}
where $\ell\!\left(\bm{W}_s, \mathcal{D}_s\right) = \tfrac{1}{|\mathcal{D}_s|} \sum_{( \bm{X}_i, \bm{Y}_i ) \in \mathcal{D}_s} \ell\!\left(\bm{W}_s, ( \bm{X}_i, \bm{Y}_i ) \right)$. Here, $\ell\!\left(\bm{W}_s, \mathcal{D}_s\right)$ denotes the \textit{local} loss function for user $s$, associated with a local model $\bm{W}_s$, that gets aggregated with the models of other users. $\bm{W}_s$ is either a full copy of the global model at the current training round or a selected submodel of the global one. 

Traditional distributed training follows either \textit{data parallelism} \citep{farber1997parallel, raina2009large,li2020pytorch}, where each node trains the full model on local data, or \textit{model parallelism} \citep{dean2012large, huang2019gpipe}, where model layers or partitions are split across nodes.  While both optimize the same global objective, they suffer from communication bottlenecks: data parallelism requires synchronizing large dense models, and model parallelism involves fine-grained layer-level exchanges that are costly in practice. Furthermore, while achieving a global model $\widehat{\bm{W}} \approx \bm{W}^\star$ is the theoretical goal, a significant practical consideration remains: the deployability of this model. In many real-world scenarios, the compute nodes are resource-constrained edge devices where a large, dense model $\widehat{\bm{W}}$ would incur unacceptable latency and energy costs during inference. 

The conventional solution here is to treat model compression as a separate, post-training step. This involves taking the fully trained $\widehat{\bm{W}}$ and applying pruning techniques to obtain a pruned model $\widehat{\bm{W}}'$, where, with a slight abuse of notation $|\widehat{\bm{W}}| \gg |\widehat{\bm{W}}'|$. Such a process is notoriously expensive as it necessitates an iterative cycle of removing parameters which might require solving complex subproblems and/or extensive fine-tuning to regain the initial accuracy \citep{han2015deep, frantar2023sparsegpt, yang2025wanda++}. Moreover, it decouples the primary training objective from the ultimate goal of obtaining an efficient final model.

\section{Overview of \TwIST{}}



\begin{figure*}[htbp]
\centering
\includegraphics[width=\textwidth]{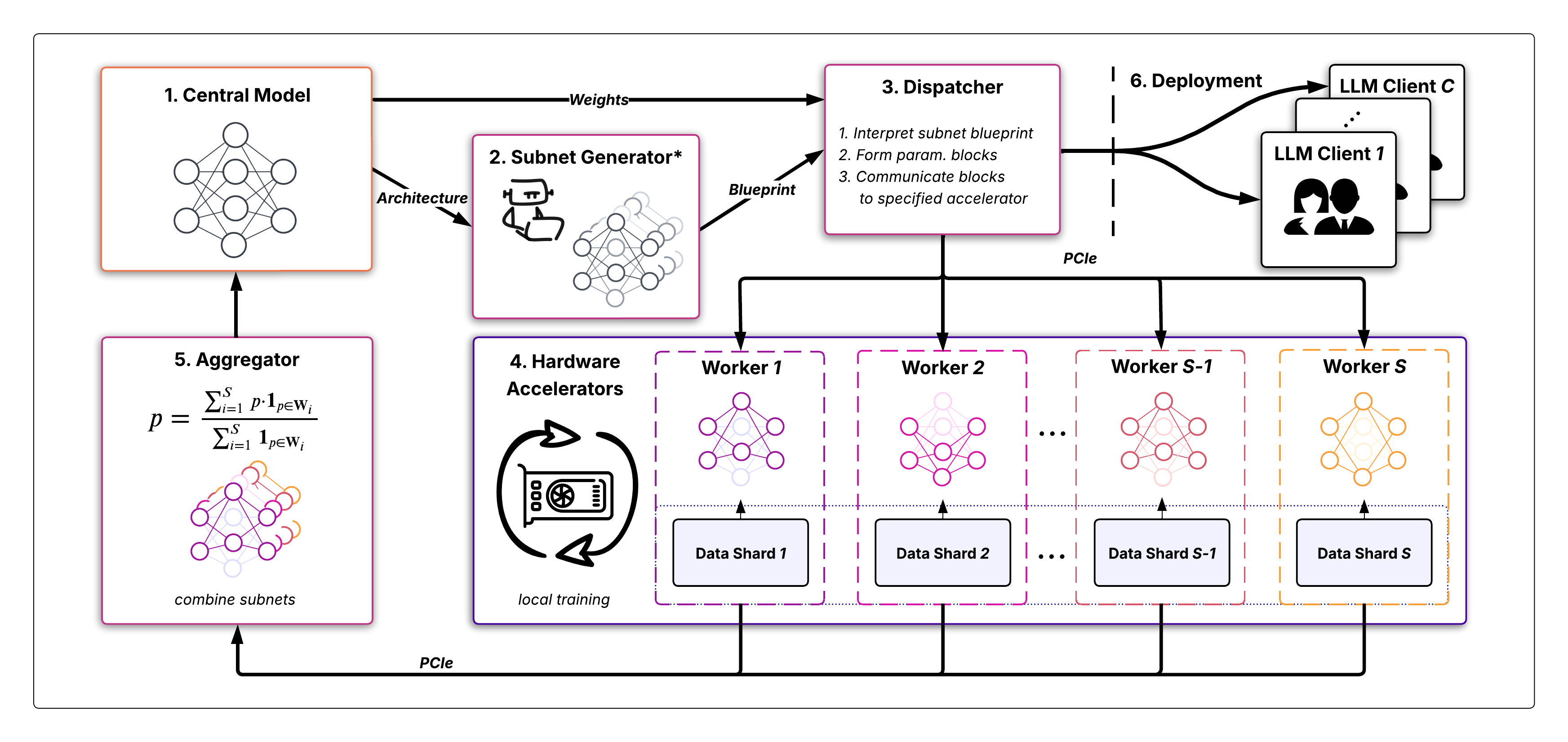} \vspace{-0.4cm}
\caption{
    \TwIST{} system overview. 
    \textbf{(1)} From a central model, \textbf{(2)} a subnet generator* creates diverse subnets. 
    \textbf{(3)} A dispatcher sends these subnets via Peripheral Component Interconnect express (PCIe) to \textbf{(4)} multiple workers for parallel training on distinct data shards. 
    \textbf{(5)} An aggregator updates the central model by averaging the parameters from the trained subnets using the shown formula. 
    \textbf{(6)} The final model is then deployed to LLM (Large Language Model) clients for inference. 
    (*The generator supports different heuristics for training vs. deployment.)
}
\label{fig:twist_system} \vspace{-0.4cm}
\end{figure*}

Figure \ref{fig:twist_system} presents an overview of the \TwIST{} system architecture. The design enables flexible and efficient model compression while also reducing cumulative training costs. \TwIST{} is comprised of three main components, which abstractly relates to previous work \citep{yuan2019distributed, wolfe2024gist, dun2023efficient, hu2023federated,  dun2022resist}: \textbf{\textit{(i)}} a \textbf{subnet generator} to create model blueprints based on the available edge devices,  \textbf{\textit{(ii)}} a \textbf{dispatcher} to materialize them on each edge device, and \textbf{\textit{(iii)}} an \textbf{aggregator} to update the central model on the server. For each communication round, we first generate a set of subnetwork blueprints, create and send a subnetwork to each respective device, and finally send back the updated subnetworks to the server for aggregation. This process is repeated until convergence. For deployment, the subnetwork generator is supplied with the target constraints to construct a blueprint of the desired model size before the dispatcher sends the pruned model to the target client (edge device).

\subsection{Subnet Generator} 
\label{sec:subnet_generator}

\begin{algorithm}[tb]
\caption{Generate Training Subnet Blueprint}
\label{alg:gen_subnet}
\begin{algorithmic}[1]
\REQUIRE $N_{\text{full}}$ (\# of full model blocks), $S$ (\# of subnets to generate), $N_{\text{sub}}$ (\# of blocks per subnet), $\mathcal{C}$ (set of block indices common to all subnets)
\ENSURE $\bm{A} \in \mathbb{N}^{S \times N_{\text{sub}}}$

\STATE \textcolor{violet}{\# Assign common blocks}
\STATE $\bm{A}[:, :|\mathcal{C}| ] = \mathcal{C}$

\STATE \textcolor{violet}{\# Ensure every block is assigned}
\FOR {$i, b \in \text{enumerate} (\mathcal{C'})$}
    \STATE $\bm{A}[i \bmod S, |\mathcal{C}| + \lfloor i / S \rfloor] = b$
\ENDFOR

\STATE \textcolor{violet}{\# Ensure every subnet has $N_{sub}$ blocks}
\FOR{$s \in \text{range}(S)$}
    \STATE $\mathcal{B}_{\text{filled}} \leftarrow \text{set}(\bm{A}[s, :])$
    \STATE $N_{\text{empty}} \leftarrow N_{\text{sub}} - |\mathcal{B}_{\text{filled}}|$
    \STATE $\bm{A}[s, -N_{\text{empty}}:] \leftarrow \text{unique\_choice} ( \mathcal{B}_{\text{filled}}', N_{\text{empty}} )$
\ENDFOR

\STATE \textbf{return} $[\text{sorted} (\bm{A}_s) \textbf{ for } \bm{A}_s \in \bm{A}] $
\end{algorithmic}
\end{algorithm}

For efficient subnetwork creation during training, the \textit{subnet generator} generates subnets uniformly at random while ensuring that every parameter is included in training. We focus only on the case of transformers that was missed by literature. Since subnets are formed by subsampling attention heads from the attention layers and neuron blocks from the feedforward layers, we henceforth use the term ``blocks'' to refer agnostically to either attention heads or feedforward layer neuron chunks. Put simply, a block is a chunk of parameters in memory. Formally, we represent the $l$-th attention layer with $H$ attention heads as $\bm{W}^{\text{attn}}= \{ \bm{W}^Q, \bm{W}^K, \bm{W}^V, \bm{C}^{\text{attn}} \}$. For the $l$-th feedforward layer, we define it as $\bm{W}^{\text{ffn}} = \{\bm{W}^{\text{ffn}}, \bm{C}^{\text{ffn}} \}$. We partition the attention layer into $H$ distinct blocks, where the $h$-th block of the attention layer is defined as $\bm{W}^{\text{AttnBlock}}_h = \{ \bm{W}^Q_h, \bm{W}^K_h, \bm{W}^V_h, \bm{C}^{\text{attn}}_h \}$ where $\bm{W}^Q_h, \bm{W}^K_h, \bm{W}^V_h, ( \bm{C}^{\text{attn}}_h )^\top \in \mathbb{R}^{d_{\text{model}} \times d_{\text{head}}}$. For the feedforward layer, we partition it into $R$ distinct blocks, where $R$ is a parameter that is defined by the user and influenced by the capacity of the edge device. The $r$-th feedforward block is defined as $\bm{W}^{\text{FfnBlock}}_r = \{\bm{W}^{\text{ffn}}_r, \bm{C}^{\text{ffn}}_r \}$ where $\bm{W}^{\text{ffn}}_r, ( \bm{C}^{\text{ffn}}_r )^\top \in \mathbb{R}^{d_{\text{model}} \times \frac{d_{\text{inner}}}{R} }$. All other parameters (e.g layernorm, token embedding, and final projection) are shared across subnetworks.

By interpreting our weights as a concatenation of these blocks, the problem of creating a subnetwork simplifies to choosing which blocks from the central model should be included in each respective subnetwork. Our work focuses on the case of workers with homogeneous compute (e.g., identical GPUs). Thus, we represent the block assignment blueprint for each layer of the network at every communication round as a matrix $\bm{A} \in \mathbb{N}^{S \times N_{\text{sub}}}$ where $S$ represents the number of workers and $N_{\text{sub}}$ represents the number of blocks in a subnetwork. One core idea in pruning is that some blocks are more critical than others for overall subnet performance \citep{michel2019sixteenheadsreallybetter, zheng2025dense2moe}. For this reason one can also define a set of blocks $\mathcal{C}$ that is common across all subnetworks. We note that in all our experiments this is not necessary, as \TwIST{} exhibits competitive results without the need for fixing parameters across subnetworks more than necessary (i.e.,  $\mathcal{C} = \emptyset$). We make an exception and share the entire first and last few layers to avoid severe performance drops \citep{kim2024shortened}.

Algorithm \ref{alg:gen_subnet} provides an overview of the subnet generation algorithm used during training based on the random heuristic. Every generated subnetwork must $i)$ contain the set of common blocks, $ii)$ every block is assigned to at least one subnetwork, $iii)$ and all subnetwork must satisfy  the size constraint $N_{\text{sub}}$. These constraints can be formally defined as two inequalities $N_{\text{sub}} \leq N_{\text{full}}$ and $N_{\text{full}} - |\mathcal{C}| \leq S (N_{\text{sub}} - |\mathcal{C}|)$. Taken together we get a bound on the size of our subnets
\begin{equation*}
\frac{N_{\text{full}} + (S - 1) |\mathcal{C}|}{S} \leq N_{\text{sub}} \leq N_{\text{full}}.
\end{equation*}
The above constraint only applies to subnetwork during training. When deploying, we relax our constraints and shift our focus to creating a single strong subnetwork.

\subsection{Dispatcher}

The \textit{dispatcher} is responsible for materializing the subnet generator's blueprint. For this work, we employ a standard server-client network topology where the full central model is hosted on the server. After a user configures the number of blocks, the dispatcher iterates through the transformer's layers and determines the specific weights and biases to chunk and the dimension along which to do so.

The shared weight matrices (e.g., word embedding and layer norms) are simply broadcasted. The remaining weight matrices (e.g., $\bm{W}^Q, \bm{W}^K, \bm{W}^V, \bm{C}^{\text{attn}}, \bm{W}^{\text{ffn}}, \bm{C}^{\text{ffn}}$) are broken up into blocks, and based on the subnet generator's blueprint are concatenated to form a submatrices before being scattered. This concatenation before scattering reduces the number of inter-worker communication rounds and is key to reaping the benefits in training latency demonstrated in Figure~\ref{fig:latency_ablation}.

\subsection{Aggregator}

The \textit{aggregator} is the counterpart of the dispatcher. It combines the partially trained subnets and updates the central model parameters. The $S$ subnets produced by \TwIST{} are mostly disjoint, meaning that most model parameters are not simultaneously partitioned to multiple subnetworks. Given that most parameters of the subnets are disjoint the aggregator copies the parameters back into the full central model, where no collisions occur. For the shared parameters we borrow the updated procedure of FedAvg \cite{mcmahan2017communication} and update the shared parameters by the average value across the subnetworks. Formally the updated value of a central model parameter $p$ is given by
\begin{equation*}
 p = \frac{\sum_{i = 1}^S p \cdot \bm{1}_{p \in \bm{W}_i}}{\sum_{i = 1}^S \bm{1}_{p \in \bm{W}_i}}
\end{equation*}
where $\bm{W}_i$ represents the $i$-th subnet's trainable parameters.

\subsection{Algorithmic Properties}

The pursuit of desirable algorithmic properties shapes the of design of \TwIST. We explore these choices here.


\textbf{Correcting Activation Shift.}
We have theoretically shown and empirically verified that subsampling blocks (i.e., attention heads or feedforward neurons) shifts the distribution of activations under common assumptions. In particular, when sampling $N_{\text{sub}}$ out of $N_{\text{full}}$ blocks from a layer, we observe a general relationship for the layer's output activation $\bm{y}$:
\begin{align*}
\mathbb{E} [\norm{\bm{y}'}] = \sqrt{\frac{N_{\text{sub}}}{N_{\text{full}}}} \mathbb{E} [\norm{\bm{y}}],
\end{align*}
where $\bm{y}$ and $\bm{y}'$ are the output activations of the full and subsampled layers, respectively. We scale subnet activations prior to the residual connection in the attention and feedforward layers by $\sqrt{\frac{N_{\text{full}}}{N_{\text{sub}}}}$ to counteract this effect. Derivations of the above relations are given in Appendix Theorem~\ref{thm:ffn_scale_factor} and Theorem~\ref{thm:attn_scale_factor}.


\begin{figure}[htbp]
\centering
\includegraphics[width=0.95\columnwidth]{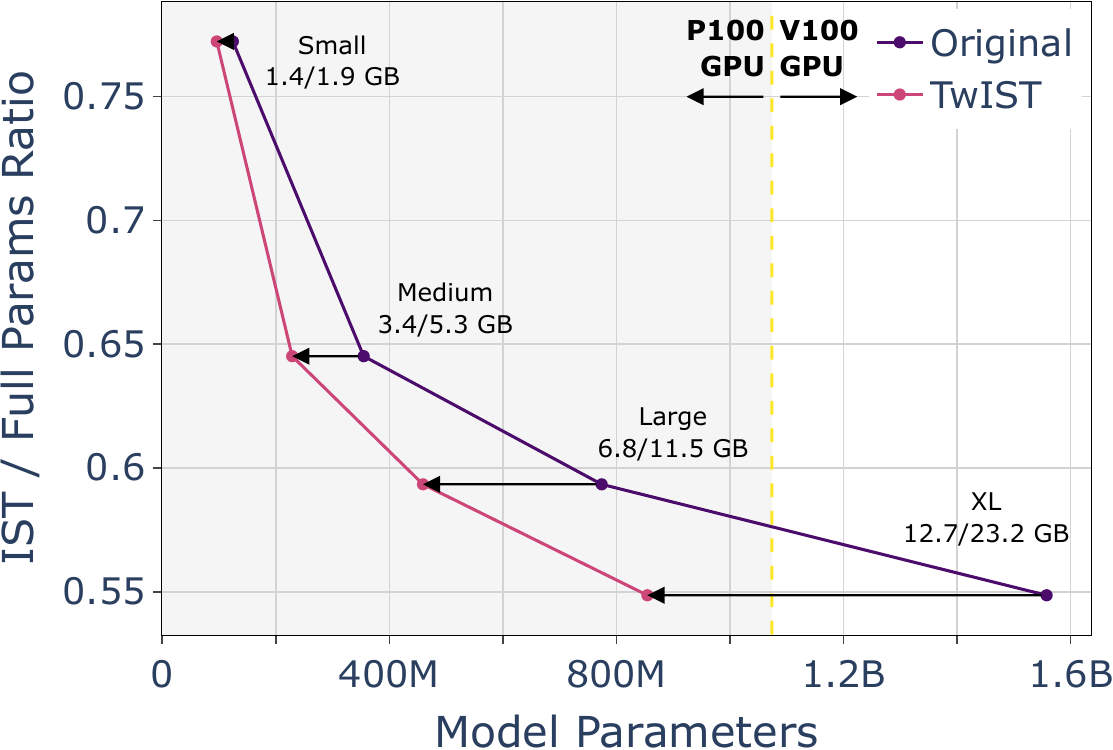}
\caption{\TwIST's asymptotic impact on memory for GPT-2 model variants. Subnets have half the blocks of the full model.}
\label{fig:asymptotic_memory}
\end{figure}

\textbf{Asymptotic Memory Ratio.}
The \TwIST{} method only partitions a subset of the parameters when forming models. Specifically, when constructing a subnet with $N_{\text{sub}}$ blocks out of a total of $N_{\text{full}}$ blocks per layer, the physical memory consumption on the hardware accelerator exceeds the ideal proportional ratio of $\tfrac{N_{\text{sub}}}{N_{\text{full}}}$ due to the inclusion of unpartitioned shared parameters (e.g., embeddings and normalization layers). However, an important architectural trend in large-scale transformers is the relative increase in the proportion of attention and feedforward network parameters compared to static parameters, such as token embeddings and layer normalizations. As demonstrated by our analysis on variants of the GPT-2 architecture (Figure \ref{fig:asymptotic_memory}), this trend leads to an asymptotic memory convergence: for increasingly large full models, subnets comprising half the total blocks ($\tfrac{N_{\text{full}}}{2}$) approach $\approx 50\%$ of the full model's total physical memory footprint. This pronounced asymptotic effect corroborates the model size reduction efficacy of memory-efficient techniques targeting sparsity in attention and feedforward layers.

\textbf{Exploration-Exploitation Trade-off.}
\TwIST{} builds on established methods like IST-family \citep{yuan2019distributed, wolfe2024gist, dun2023efficient, hu2023federated,  dun2022resist} or RaPTr \citep{panigrahi2024efficient} and acts as a form of aggressive, structured dropout, providing strong regularization properties. By repeatedly forming new subnets throughout training using a random heuristic, \TwIST{} unlocks a tradeoff between the full chaos of random pruning without further finetuning and the stability of a prune-at-initialization (PaI) approach. Conceptually, the dynamic resembles block coordinate descent, as the system iteratively optimizes distinct groups of parameters (i.e., subnets) \citep{beck2013convergence}, and the entire system dynamics can be understood through the lens of an exploration-exploitation trade-off \citep{gupta2006interplay}.

\textit{Exploration.} A small $\mathcal{C}$, $\tfrac{N_{\text{sub}}}{N_{\text{full}}}$, or repartition interval, each promote exploration as the system continuously trains samples from a diverse population of random subnets. This injection of stochasticity, compared to a static PaI method, helps prevent the optimization from getting caught in sharp local minima \citep{evci2019difficulty, frankle2020pruning, kumar2024no}. Moreover, high physical subnet diversity facilitates the creation of more independent, functionally diverse subnets, giving all subnets an equal opportunity to train and effectively creating a population of ``lottery tickets'' \citep{evci2020rigging}. As seen with ensemble learning, functional diversity is a well-established method for improving robustness to perturbations and adversarial attacks, as the ensemble members are less likely to share common failure modes \citep{pang2019improving, fort2019deep}.

\textit{Exploitation.} Conversely, a large $\mathcal{C}$, $\tfrac{N_{\text{sub}}}{N_{\text{full}}}$, or repartition interval each promote exploitation as the pool of available subnets becomes increasingly static, and the objective narrows from exploring a variety subnets to primarily training a fixed set of shared weights as in a PaI method \citep{Lee2018SNIP, Wang2020GraSP}. This strategy yields two primary, interconnected benefits. First, it fosters \textit{greater network alignment}; by forcing all subnets to co-train the same set of shared core parameters, it encourages them to find common, functionally similar solutions. This is particularly effective as $\mathcal{C}$ is intended to capture the components most critical to performance \citep{michel2019sixteenheadsreallybetter} and is analogous to hard parameter sharing in multi-task learning \citep{Caruana1997MTL}. Second, this alignment, in turn, \textit{stabilizes central model performance}, as the optimization process converges more consistently by exploiting a known set of functionalities.

\textit{The setup for success.} This tradeoff reveals \TwIST's hyperparameters can be tuned to prioritize competing objectives. Leaning into exploration (via small $\mathcal{C}$, low $\tfrac{N_{\text{sub}}}{N_{\text{full}}}$, or frequent repartitioning) promotes functional diversity and robustness. In contrast, leaning into exploitation (via large $\mathcal{C}$, high $\tfrac{N_{\text{sub}}}{N_{\text{full}}}$, or infrequent repartitioning) fosters network alignment for a more stable and rapidly converging central model. As our work focuses on training for pruning, we choose hyperparameters (See Section~\ref{sec:experiments}) that lean into exploration. This choice provides the necessary stochasticity to avoid poor local minima and form a population of robust subnets, while still exploiting the shared structure $\mathcal{C}$ to maintain training stability and central model performance.

\section{Implementation}
\label{sec:implementation}

This section presents \TwIST's implementation. We begin with a careful treatment of the three \TwIST{} variants that differ in their fidelity to physical partitioning. We then break down the pruning techniques in the \TwIST{} framework, and provide the hardware configurations used in our experiments. 

\subsection{\TwIST{} Variants}\label{sec:twist_variants}
Our \TwIST{} variants include Masked \TwIST, True \TwIST, and Hybrid \TwIST. Each variant reflects a different level of faithfulness to physically partitioning parameters and requires a different degree of modification to the source model. Masked \TwIST{} is the simplest to implement, while both True \TwIST{} and Hybrid \TwIST{} require direct modification of the Hugging Face source code. We describe the three variants in order of increasing implementation complexity.

\textbf{Masked \TwIST.} Masked \TwIST{} is a simulated version of \TwIST. Instead of physically scattering the parameters across workers, we emulate subnet training by masking out activations that correspond to inactive blocks.  

For the feedforward layers, we implement this by defining a \texttt{MaskingHook}, which zeros out activations specified by a mask matrix. For the attention layers, we make use of the existing \texttt{head\_mask} argument in Hugging Face's implementation. In both cases, the module output is multiplied element-wise by the mask $\bm{M}$, producing:
\begin{equation*}
\bm{y} = \bm{M} \odot f(\bm{x}),
\end{equation*}
so that only the active subnet contributes to the output.  
During evaluation, we disable masking by setting every element of $\bm{M}$ to 1.

This method does not require any source code modification, making it the fastest and most portable way to prototype \TwIST's behavior while maintaining functional equivalence to True \TwIST{} at the level of gradient flow and scaling.

\textbf{True \TwIST.} True \TwIST{} introduces real physical partitioning of the model parameters and often requires changes to the transformer source code.  
In Hugging Face's original GPT-2 implementation, each attention layer defines square projection matrices with:
$\bm{W}^Q, \bm{W}^K, \bm{W}^V \in \mathbb{R}^{d_{\text{model}} \times d_{\text{model}}}$.
To support physically smaller subnets, we must untie the width of an attention weight matrix from its height. For example, $\bm{W}^Q$ is the concatentation of weights for several heads and thus the second dimension of $\bm{W}^Q$ varies with the number of heads while the first remains fixed as it represents the token embedding dimension. Additionally, we define each layer's width independently of other layers to add capability for sharing some layers while splitting others. 

At runtime, a \textit{central hardware accelerator} maintains the complete model parameters, while $S-1$ workers each hold a smaller physical subset of those parameters. The central accelerator handles parameter partitioning, scattering, and aggregation, whereas each worker locally trains its assigned subnet. This design yields memory and communication savings because each worker stores, updates, and sends only the subset of parameters allocated to its subnet.

\textbf{Hybrid \TwIST.} Hybrid \TwIST{} combines the physical subnets of True \TwIST{} with the masking strategy of Masked \TwIST. The central accelerator participates in training as a regular worker while maintaining the complete parameter set. To prevent duplication of model copies on the central device, we apply masking to the central model so that it behaves like a smaller subnet during training. The remaining $S-1$ workers train physically smaller subnets, constructed and synchronized in the same way as in True \TwIST.  


Figure~\ref{fig:TwIST_variant} summarizes the three \TwIST{} variants.

\begin{figure}[htbp]
    \centering
    \includegraphics[width=0.95\columnwidth]{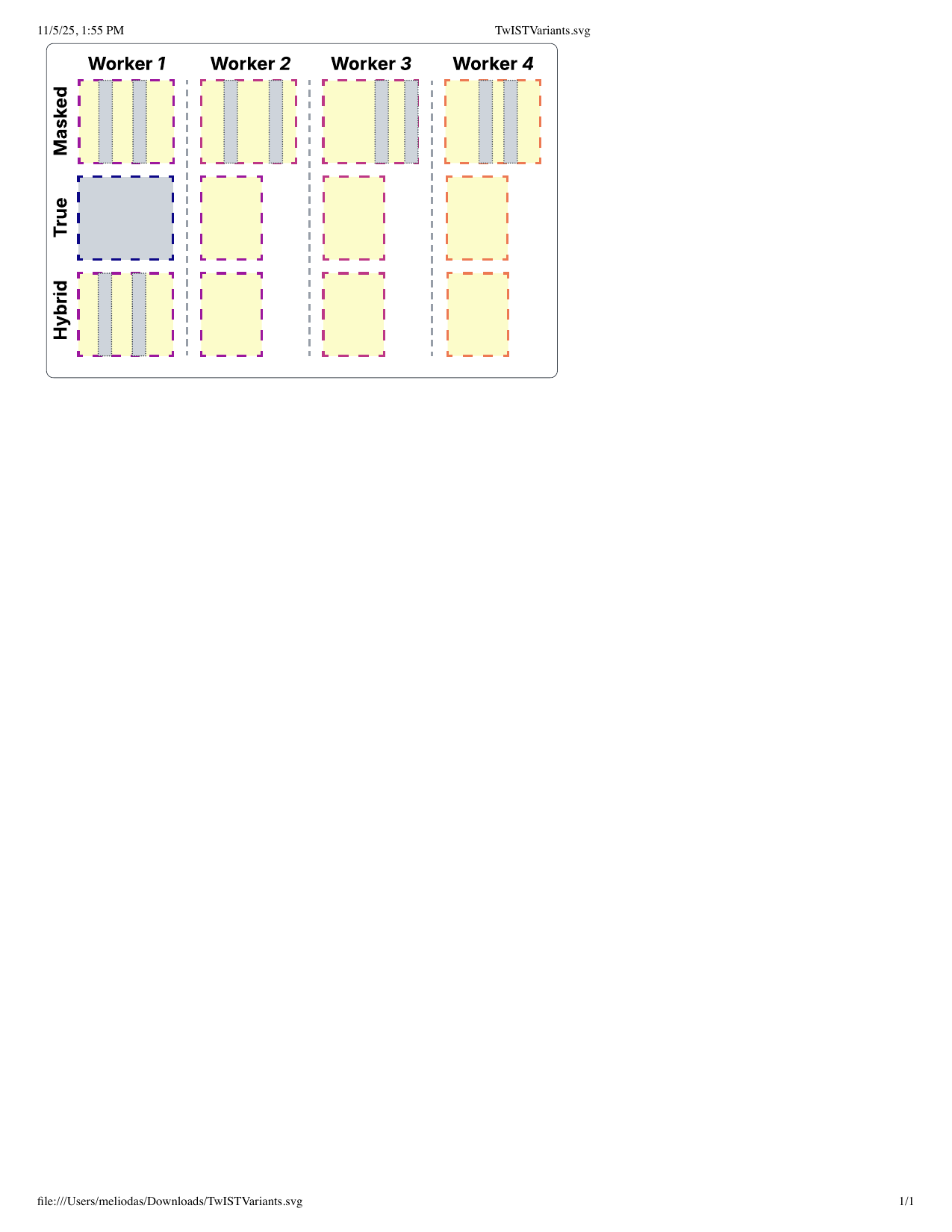}
    \caption{Visualization of the three \TwIST{} variants and their training dynamics in the case of $S = 4$, where Worker $1$ is the central accelerator. In Masked \TwIST, subnets are simulated by masking activations within a single shared model. In True \TwIST, each worker trains a physically smaller subnet that is scattered from and later synchronized with the central model. In Hybrid \TwIST, the central model participates in training as a masked subnet while the remaining workers train physical subnets. Within each repartition interval, the yellow regions indicate active parameters being updated during training, while the grey regions denote inactive parameters that are frozen or masked out.}
    \label{fig:TwIST_variant}
\end{figure}

\subsection{Pruning Implementation}\label{sec:pruning_modules}

The pruning pipeline in \TwIST{} focuses on the same transformer modules that are partitioned during subnetwork training. To maintain flexibility, pruning is implemented through a modular architecture composed of several interoperable components. We describe the key components below and summarize how they are integrated into full multi-stage pruning workflows in the experimental section.

\textbf{Deployment.} To generate the pruned model for deployment, we mirror the subnet generation procedure used during \TwIST{} training. These randomly sampled subnets allow cheap and efficient deployment of pruned models without requiring any complex search procedures or retraining, forming the foundation of \TwIST{} by allowing the deployment of models of varying sizes from a single checkpoint.

\textbf{Training Backends.} We provide two distinct backends for training models. The \texttt{DDP} module trains the given model under data parallelism, where all workers share identical parameters. The \texttt{twist} module trains the model with the Masked \TwIST{} methodology in Section~\ref{sec:twist_variants}. 

\textbf{Evaluation Mode.} Our evaluation implementation varies depending on whether we want to evaluate the subnetwork or the full model. To evaluate the subnetwork, on one hand, we mask parameters as needed and apply the appropriate scaling hooks before performing evaluation on each worker’s subnet. Each subnet is evaluated independently, and the resulting test losses are averaged across workers. To evaluate the full model, on the other hand, we assess the model directly without additional scaling or masking.

\textbf{Hardware Setup.} We set deterministic seeds to ensure reproducibility across our experiments. Our experiments used four Tesla P100 GPUs, each with 16 GB of memory, connected via Peripheral Component Interconnect express (PCIe) for synchronized parallel execution.

\section{Experiments}
\label{sec:experiments}

\begin{table*}[htbp]
\small
\centering
\begin{tabular}{lcccccccc}
\toprule
\multirow{2}{*}{\textbf{Setting}} & \multirow{2}{*}{\textbf{Ratio ($\bm{\kappa}$)}} &
\multicolumn{5}{c}{\textbf{Data Parallelism}} &
\multicolumn{2}{c}{\textbf{TwIST} (Ours)} \\
\cmidrule(lr){3-7}\cmidrule(lr){8-9}
& & SparseGPT & Wanda & Z-Pruner & Block Prune & $SE$ & $SE$ & $SE_{6/12}$ \\
\midrule

\multirow{4}{*}{\texttt{attn}}
  &  8/12 & \textbf{16.14} & 16.42 & 16.27 & 21.90 & 22.91 & 17.35 & 17.29 \\
  &  6/12 & \textbf{16.81} & 19.33 & 17.75 & 30.20 & 32.56 & 17.74 & 17.74 \\
  &  4/12 & 20.06 & 86.09 & 27.42 & 78.26 & 61.12 & \textbf{18.25} & 18.88 \\
  &  3/12 & 25.76 & 57.72 & 44.12 & 133.41 & 273.46 & \textbf{18.49} & 19.90 \\
\midrule

\multirow{4}{*}{\texttt{both}}
  &  8/12 & \textbf{16.60} & 18.30 & 17.27 & 66.44 & 54.19 & 19.67 & 20.02 \\
  &  6/12 & \textbf{19.11} & 36.19 & 24.85 & 247.01 & 176.71 & 21.32 & 21.32 \\
  &  4/12 & 31.64 & 173.66 & 109.11 & 617.98 & 974.31 & \textbf{23.14} & 25.53 \\
  &  3/12 & 58.16 & 284.59 & 238.09 & 1156.67 & 2587.42 & \textbf{24.86} & 24.32 \\
\midrule

\multirow{4}{*}{\texttt{ffn}}
  &  8/12 & \textbf{16.41} & 17.55 & 16.85 & 33.83 & 31.60 & 18.06 & 18.48 \\
  &  6/12 & \textbf{17.99} & 24.57 & 20.65 & 62.93 & 63.23 & 18.90 & 18.90 \\
  &  4/12 & 24.00 & 63.14 & 41.57 & 162.74 & 174.95 & \textbf{19.69} & 21.39 \\
  &  3/12 & 32.63 & 146.65 & 88.64 & 298.84 & 361.33 & \textbf{20.51} & 23.51 \\
\bottomrule
\end{tabular}
\caption{Performance of subnets in terms of Perplexity (PPL).}
\label{tab:subnet_performance}
\end{table*}

We construct our experiments to verify: a) \TwIST{} allows us to identify effective subnets at varying sparsity levels without any post-training pruning (Section~\ref{sec:twist_subnet_quality}); b) \TwIST{} shows less communication cost and speeds up the training process compared to the standard data-parallel method (Section~\ref{sec:twist_training_performance}).

\textbf{Setup.} Our main experiments use the decoder-only GPT-2 causal language model with 124M parameters \citep{radford2019language}. Within each experiment, we initialize the model with GPT-2's original configuration (12 transformer layers, 12 attention heads per layer, and a default hidden size of 768) and finetune its pretrained checkpoint for 3 epochs on the next token prediction task. We train the model using the Adam optimizer with a learning rate of $1 \times 10^{-4}$. 

For $\TwIST{}$, we set $|C| = 0$ for all layers; and share the first two and last two layers (i.e. $N_\text{sub} = N_\text{full} = 12$). The remaining 10 layers are partitioned and have the same number of blocks across all subnetworks. For these 10 layers, we vary $N_\text{sub}$ across experiments. For each configuration, we evaluate three subsampling settings: attention-only (\texttt{attn}), feedforward-only (\texttt{ffn}), and combined (\texttt{both}), where subsampling is applied exclusively to the respective area. We fix the repartition interval at 15 training batches.

For the training dataset, we use the official training split of WikiText-103-raw-v1, a standard benchmark for language modeling \citep{merity2016pointer}. By default, we use the GPT-2 BPE tokenizer to preprocess the dataset, ensure each training sequence consists of 1024 consecutive tokens, and set the batch size to 2.

\textbf{Baseline Pipelines and Evaluation Metrics.} We apply \TwIST{} in a \textbf{two-stage pipeline} where we first train the model by \TwIST{} and then extract a random subnet from the model using the module introduced in Section~\ref{sec:pruning_modules}. For our two-stage pipeline baselines, we first train the model by the standard data-parallel method \citep{li2020pytorchdistributedexperiencesaccelerating} and then obtain a subnet either by random subnet extraction or by the following post-training pruning methods:
\begin{itemize}[leftmargin=*]
  \item \textbf{SparseGPT} is a one-shot pruning approach that zeros a chosen subset of weights, then updates the kept weights by solving a local quadratic (Hessian-weighted) reconstruction problem so the layer’s responses on the calibration data match the original responses as closely as possible \citep{frantar2023sparsegpt}.
  \item \textbf{Wanda} is a one-shot pruning approach that ranks weights by combining their magnitudes with input activation statistics. Pruning decisions are made using a small calibration set as well\citep{sun2024simpleeffectivepruningapproach}.
  \item \textbf{Z-Pruner} is a zero-shot structured pruning technique that does not rely on calibration data. It estimates weight importance using analytic, data-free proxies (e.g., norm-based sensitivity) and removes less important channels or heads in a single step \citep{bhuiyan2025zprunerposttrainingpruninglarge}.
  \item \textbf{Block Prune} is a one-shot pruning method that prunes on weight matrices corresponding to attention and feedforward projections by partitioning each matrix into parameter blocks, and estimating the empirical Fisher of each block using a full pass over the dataset. In which the block with the lowest Fisher information are masked out \citep{michel2019sixteenheadsreallybetter}.
\end{itemize}

\subsection{Subnet Quality} 
\label{sec:twist_subnet_quality}

We evaluate subnet quality using evaluation perplexity on the WikiText-103 test split and report the results in Table~\ref{tab:subnet_performance}. For subnet extraction ($SE$), each attention and feedforward layer is divided into 12 parameter blocks, following the same block partitioning procedure described in Section~\ref{sec:subnet_generator}. To ensure a fair comparison, all pipelines prune the same parameter types (\texttt{attn}, \texttt{ffn}, or \texttt{both}) to identical sparsity levels. We define the subnet ratio, $\kappa$, by the ratio $X/12$, where $X$ indicates the number of the remaining parameter blocks after pruning or $SE$. A smaller ratio corresponds to higher sparsity (i.e., fewer active parameters in the network). When the training and extraction sparsity differ, we denote the non-default setting as $SE_{X/12}$, where $X/12$ represents the sparsity ratio used during training. For example, $SE_{6/12}$ indicates that the \TwIST{} model was trained with a sparsity level of $6/12$, while retaining the capability to deploy subnets at different sparsity levels.


\TwIST{} achieves perplexity scores remarkably close to State-Of-The-Art (SOTA) one-shot methods like SparseGPT \citep{frantar2023sparsegpt}, but at effectively \textit{zero post-training cost}. The SOTA PTP baselines incur substantial overhead, requiring calibration data and complex, layer-wise reconstructions involving costly inverse Hessian ($\bm{H}^{-1}$) computations or approximations (e.g., $O(d_{\text{model}}^3)$) \citep{frantar2023sparsegpt}. In sharp contrast, \TwIST{} bypasses this entirely, using a simple random sampling strategy with negligible overhead, yet achieves highly competitive results.

Furthermore, \TwIST{}'s advantage emerges strongly under aggressive pruning ($\kappa \le 6/12$), where it consistently and significantly outperforms all baselines. For instance, at $\kappa = 4/12$ (\texttt{both}), \TwIST{} (23.14 PPL) drastically outperforms SparseGPT (31.64 PPL), while other methods collapse (PPL > 100). Critically, unlike the PTP baselines, \TwIST{} is a \textbf{structured pruning} method that removes entire blocks. This creates genuinely smaller, dense matrices, enabling tangible inference speedups and memory savings on commodity hardware. Thus, \TwIST{} delivers high-sparsity subnets that are not only accurate but also genuinely faster in real-world deployments.

\begin{figure}[htbp]                                      
\centering
\includegraphics[width=0.95\columnwidth]{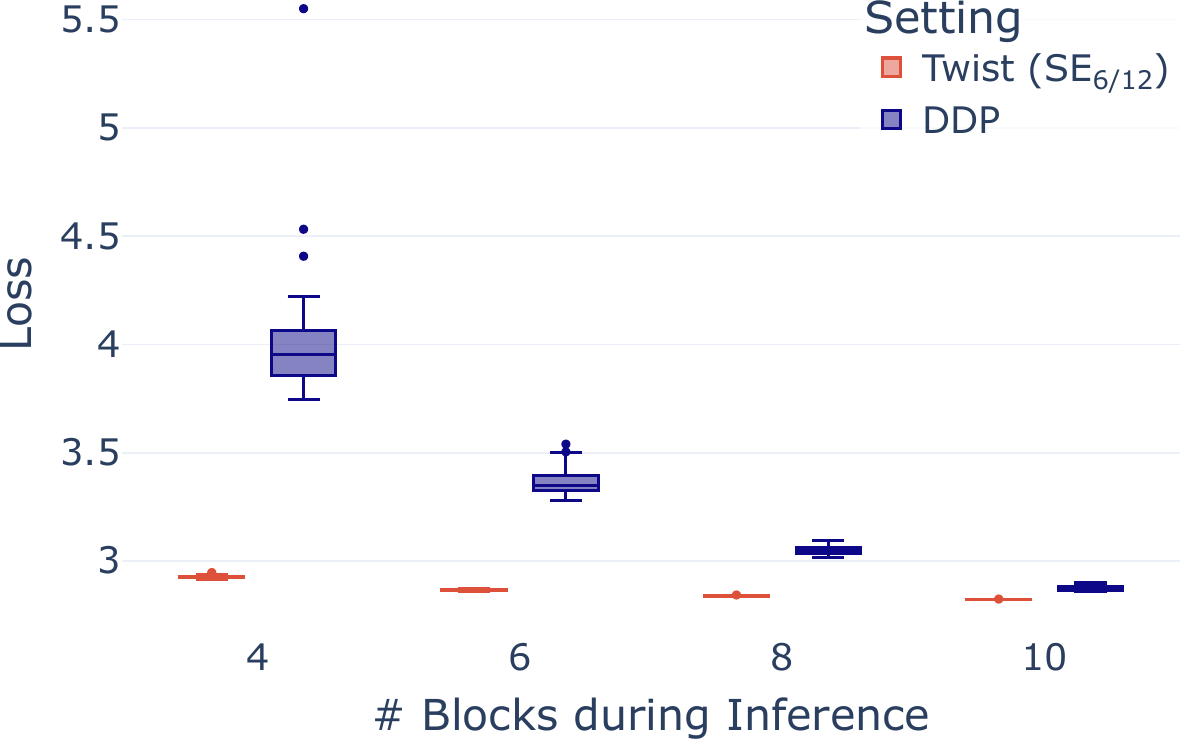} 
\caption{Distribution of eval loss for randomly generated subnets in the \texttt{attn} configuration. The distributions for \TwIST{} ($SE_{6/12}$) are compared against a \texttt{DDP} baseline across various subnet ratios. The $SE_{6/12}$ variant of \TwIST{} is presented for a direct comparison, as both this method and \texttt{DDP} involve only a single training pass.}
\label{fig:subnet_loss_distribution_attn}
\end{figure}

\textbf{System Stability.} Figure~\ref{fig:subnet_loss_distribution_attn} plots the eval loss distributions for subnets identified by Twist ($SE_{6/12}$) and the \texttt{DDP} baseline. These results, shown for the \texttt{attn} setting, demonstrate that the \TwIST{} distributions are sharply concentrated, indicating low variance across different random subnet generations. In contrast, \texttt{DDP} exhibits significantly higher variance. This variance in \texttt{DDP}-generated subnets becomes more pronounced as the level of sparsity increases.

These findings suggest that the performance of \texttt{DDP} subnets is sensitive to the specific subnet initialization, a characteristic consistent with the LTH. Conversely, the low variance of \TwIST{} suggests that its method identifies subnets that are more structurally consistent and less dependent on the random generation process. This stability is crucial: it indicates that the performance of any single sampled subnet is a strong proxy for the performance of other subnets, making the results from \TwIST{} highly reliable and reproducible. We note that similar distributional trends were observed in the \texttt{ffn} and \texttt{both} settings (See Appendix~\ref{sec:additional_system_stability}).

\begin{figure}[hbp]
\centering
\includegraphics[width=0.75\columnwidth]{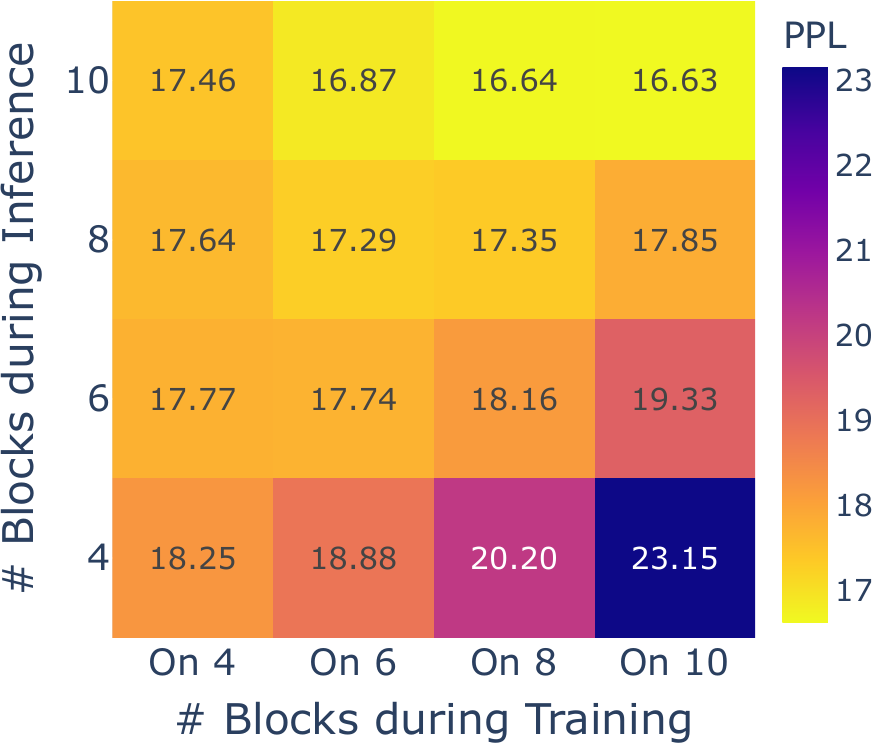} 
\caption{Heatmap of subnet robustness for the \texttt{attn} setting. Brighter colors (yellow) signify lower PPL (better performance), and darker colors (blue) signify higher PPL.}
\label{fig:robustness_heatmap_attn}
\end{figure}
\textbf{Architectural Robustness.} Next, we investigate architectural robustness, the model's ability to generalize to subnet configurations it was not trained for. Figure~\ref{fig:robustness_heatmap_attn} presents a heatmap of subnet performance across various mismatched training and evaluation sparsity targets for the \texttt{attn} setting. See Appendix~\ref{sec:additional_architectural_robustness} for the \texttt{both} and \texttt{ffn} settings.

In Figure~\ref{fig:robustness_heatmap_attn}, the horizontal coordinate ($X$) represents the number of parameter blocks (out of 12) used per layer during the training phase of \TwIST{}. The vertical coordinate ($Y$) represents the number of blocks used for evaluation. A mismatch occurs when $Y \neq X$, with the diagonal representing the matched baseline (where $Y = X$).

The heatmaps show similar trends across pruned parameter types. As expected, performance is generally optimal along the diagonal ($Y = X$). Yet, two key asymmetries emerge. First, ``inferencing downward'' (training on a large subnet and evaluating on a smaller one, $X > Y$) causes a sharp performance degradation. This is most severe when the mismatch is large (e.g., $X = 10$, $Y = 4$), where PPL can increase by $\approx 42$ points (from 18 to 60). Second, ``inferencing upward'' (training on a small subnet and evaluating on a larger one, $X < Y$) often maintains or even improves performance compared to the matched baseline. In other words, we find that it is more difficult to prune (go smaller) than to expand (go bigger) at inference time. In practice, this implies one should be more conservative when selecting a training size for an unknown target device.

Most interestingly, training at a mid-level sparsity (e.g., $X = 6$) yields the most uniformly low perplexity across all evaluation sizes ($Y$). This identifies a ``sweet spot'' for training: if the target deployment sparsity is unknown, training \TwIST{} at a mid-level sparsity provides a highly robust parent model that minimizes worst-case degradation, transferring well to both leaner and denser subnet configurations.

The practical benefits of this robustness are significant: a single \TwIST{}-trained parent model can deploy a spectrum of ``tickets'' tailored to the computational budget of each device, enabling a flexible and efficient distribution paradigm that moves away from a one-size-fits-all deployment. When combined with the system stability in Figure~\ref{fig:subnet_loss_distribution_attn}, \TwIST{} makes models resilient to hardware failures in distributed, model-parallel deployments \citep{shoeybi2019megatronlm, lepikhin2020gshard}. In such large-scale settings, the failure of a node (which holds certain parameters) is analogous to ``inferencing downward'' to a smaller subnet. Our results suggest models trained with \TwIST{} can gracefully handle such failures, a critical feature for large-scale training and inference \citep{pytorch2024faulttolerant, nebius2025faulttolerant}.

\subsection{Training Efficiency} 
\label{sec:twist_training_performance}


\begin{figure}[htbp]
\centering
\includegraphics[width=0.95\columnwidth]{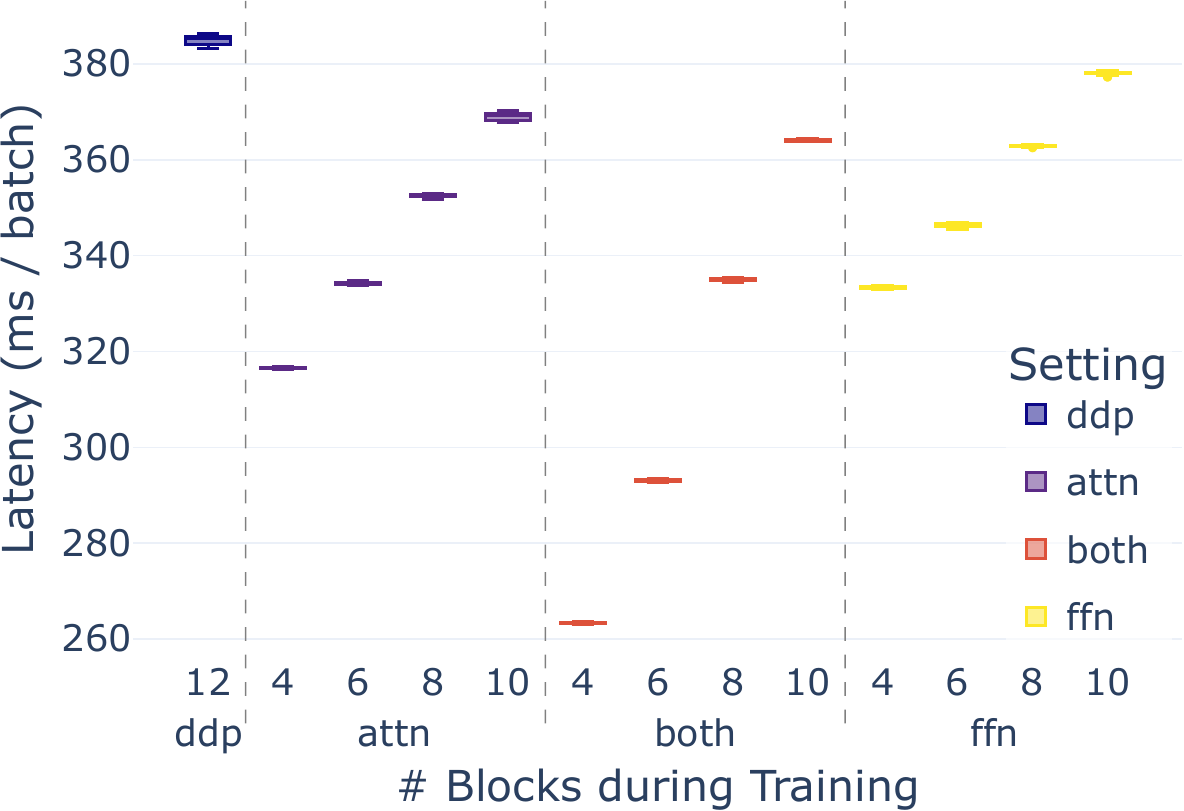}
\caption{Training latency ablation.}
\label{fig:latency_ablation}
\end{figure}

\begin{table}[htbp]
\small
\centering
\begin{tabular}{l r c c c c c}
\toprule
\textbf{Setting} & $\bm{\kappa}$ & \textbf{memory} & \textbf{comm} & \textbf{wall clock} \\
\midrule

\multirow{4}{*}{\texttt{attn}}
 & 10/12 & 1.87 & 44.35 & 4:19:06 \\
 &  8/12 & 1.82 & 43.20 & 4:08:30 \\
 &  6/12 & 1.79 & 42.05 & 3:52:27 \\
 &  4/12 & 1.72 & 40.90 & 3:41:12 \\
\midrule

\multirow{4}{*}{\texttt{both}}
 & 10/12 & 1.77 & 42.05 & 4:14:03 \\
 &  8/12 & 1.64 & 38.60 & 3:49:50 \\
 &  6/12 & 1.5 & 35.14 & 3:25:14 \\
 &  4/12 & 1.34 & 31.69 & 3:04:39 \\
\midrule

\multirow{4}{*}{\texttt{ffn}}
 & 10/12 & 1.82 & 43.20 & 4:30:31 \\
 &  8/12 & 1.73 & 40.90 & 4:14:19 \\
 &  6/12 & 1.63 & 38.60 & 4:06:53 \\
 &  4/12 & 1.53 & 36.29 & 3:54:19 \\

\midrule

\multirow{1}{*}{\texttt{DDP}}
& 12/12 & 1.88 & 45.50 & 4:35:42 \\

\bottomrule
\end{tabular}
\caption{Tabulation of memory (GB), communication (TB), and wall clock (HH:MM) across pruning settings.}
\label{tab:ablation_metrics}
\end{table}
Theoretically, \TwIST{} reduces the number of model parameters, which is expected to decrease memory usage and communication costs (see Appendix~\ref{sec:memory_and_communication} for a detailed derivation). We empirically validate the resulting performance gain by measuring training latency. Figure~\ref{fig:latency_ablation} plots the latency, measured in milliseconds per batch (ms / batch), where lower values indicate faster performance. We compare \TwIST{} to the Distributed Data Parallel (\texttt{DDP}) method, representing full-model fine-tuning and serving as our baseline. \texttt{DDP} exhibits the slowest speed at approximately 385 ms/batch.

As expected, we observe throughput increases as the number of blocks in a subnet decreases from ten to four: splitting only attention blocks (\texttt{attn}), splitting only feedforward network blocks (\texttt{ffn}), and splitting (\texttt{both}). Compared to the \texttt{DDP} baseline, four block subnet strategies offer significant efficiency gains. The \texttt{ffn} method ($\approx$334 ms/batch) achieves a 1.15 speedup (a 13.2\% time reduction), while the \texttt{attn} method ($\approx$317 ms/batch) reaches a 1.21 speedup (a 17.7\% time reduction). Our proposed \texttt{both} approach ($\approx$263 ms/batch) is markedly superior, delivering a 1.46 speedup and cutting training time by 31.7\% relative to the full-model baseline, marking a computational advantage.

Table~\ref{tab:ablation_metrics} reports memory (GB), communication volume (TB), and wall clock time across \TwIST{} with subnet ratio $\kappa$ and the \texttt{DDP} baseline on four compute nodes. Memory is the amount of GPU memory allocated on each compute node. Communication is the total inter-node training traffic, defined in Appendix~\ref{sec:memory_and_communication}. Wall clock time is the training duration, including evaluation, for three epochs.

Similar to the trend we observe in throughput, all three metrics improve monotonically as $\kappa$ decreases. The \texttt{both} setting at $\kappa = 4 / 12$ yields the largest gains relative to \texttt{DDP}, with memory save of 0.54 GB (28.7\%); communication decrease of 13.81 TB (30.4\%); and wall clock time reduction of about 1.5 hours (33.0\%), corresponding to a $1.49 \times$ speedup. Interestingly, although \texttt{attn} communicates more than \texttt{ffn} at the same $\kappa$ (for $\kappa = 4 / 12$: 40.90 TB versus 36.29 TB), it still finishes faster with a 13 minutes advantage. This discrepancy likely arises because attention blocks are computationally heavier than feedforward blocks, so removing them reduces a greater portion of the overall computational per subnetwork. 

\section{Conclusion and Future Work}
\label{sec:conclusion_future}

We introduced \TwIST{}, a distributed system where subnets are trained in parallel and periodically aggregated. This method is motivated by the \textit{golden lottery ticket hypothesis} and validates that randomly sampled subnets from a \TwIST{}-trained network can achieve high performance \textit{without} fine-tuning. As shown in Table~\ref{tab:subnet_performance}, this enables robust, zero-cost pruning at deployment, achieving perplexity scores competitive with SOTA methods. \TwIST{} effectively shifts the search for sparse models from an expensive post-training step (which often requires calibration data and complex operations like inverting the Hessian matrix) to the training process itself.

\TwIST's true advantage emerges under aggressive pruning ($\kappa \le 6/12$), where it consistently outperforms all baselines (e.g., 23.14 PPL vs. 31.64 for SparseGPT at $\kappa = 4/12$). Critically, as a \textit{structured pruning} method, \TwIST{} removes entire parameter blocks. This design produces genuinely smaller, dense matrices that translate directly to tangible inference speedups and memory savings on commodity hardware (e.g., CPUs) that lack efficient sparse computation support, offering a practical path to high-sparsity models.

This study was limited by budget, precluding validation on multibillion-parameter models. Future work should explore more sophisticated subnet assignment strategies, optimal aggregation scheduling, and the use of dynamic architectures.

\clearpage

\bibliography{refs}

\begin{thebibliography}{71}
\providecommand{\natexlab}[1]{#1}
\providecommand{\url}[1]{\texttt{#1}}
\expandafter\ifx\csname urlstyle\endcsname\relax
  \providecommand{\doi}[1]{doi: #1}\else
  \providecommand{\doi}{doi: \begingroup \urlstyle{rm}\Url}\fi

\bibitem[Achiam et~al.(2023)Achiam, Adler, Agarwal, Ahmad, Akkaya, Aleman, Almeida, Altenschmidt, Altman, Anadkat, et~al.]{achiam2023gpt}
Achiam, J., Adler, S., Agarwal, S., Ahmad, L., Akkaya, I., Aleman, F.~L., Almeida, D., Altenschmidt, J., Altman, S., Anadkat, S., et~al.
\newblock Gpt-4 technical report.
\newblock \emph{arXiv preprint arXiv:2303.08774}, 2023.

\bibitem[Ahmadian et~al.(2023)Ahmadian, Dash, Chen, Venkitesh, Gou, Blunsom, {\"U}st{\"u}n, and Hooker]{ahmadian2023intriguing}
Ahmadian, A., Dash, S., Chen, H., Venkitesh, B., Gou, Z.~S., Blunsom, P., {\"U}st{\"u}n, A., and Hooker, S.
\newblock Intriguing properties of quantization at scale.
\newblock \emph{Advances in Neural Information Processing Systems}, 36:\penalty0 34278--34294, 2023.

\bibitem[Beck \& Tetruashvili(2013)Beck and Tetruashvili]{beck2013convergence}
Beck, A. and Tetruashvili, L.
\newblock On the convergence of block coordinate descent type methods.
\newblock \emph{SIAM journal on Optimization}, 23\penalty0 (4):\penalty0 2037--2060, 2013.

\bibitem[Bhuiyan et~al.(2025{\natexlab{a}})Bhuiyan, Adib, Bhuiyan, Kabir, Farazi, Rahman, and Mohammed]{bhuiyan2025z}
Bhuiyan, S.~B., Adib, M. S.~H., Bhuiyan, M.~A., Kabir, M.~R., Farazi, M., Rahman, S., and Mohammed, N.
\newblock Z-pruner: Post-training pruning of large language models for efficiency without retraining.
\newblock \emph{arXiv preprint arXiv:2508.15828}, 2025{\natexlab{a}}.

\bibitem[Bhuiyan et~al.(2025{\natexlab{b}})Bhuiyan, Adib, Bhuiyan, Kabir, Farazi, Rahman, and Mohammed]{bhuiyan2025zprunerposttrainingpruninglarge}
Bhuiyan, S.~B., Adib, M. S.~H., Bhuiyan, M.~A., Kabir, M.~R., Farazi, M., Rahman, S., and Mohammed, N.
\newblock Z-pruner: Post-training pruning of large language models for efficiency without retraining, 2025{\natexlab{b}}.
\newblock URL \url{https://arxiv.org/abs/2508.15828}.

\bibitem[Blalock et~al.(2020)Blalock, Gonzalez~Ortiz, Frankle, and Guttag]{blalock2020state}
Blalock, D., Gonzalez~Ortiz, J.~J., Frankle, J., and Guttag, J.
\newblock What is the state of neural network pruning?
\newblock \emph{Proceedings of machine learning and systems}, 2:\penalty0 129--146, 2020.

\bibitem[Bommarito~II \& Katz(2022)Bommarito~II and Katz]{bommarito2022gpt}
Bommarito~II, M. and Katz, D.~M.
\newblock Gpt takes the bar exam.
\newblock \emph{arXiv preprint arXiv:2212.14402}, 2022.

\bibitem[Brown et~al.(2020)Brown, Mann, Ryder, Subbiah, Kaplan, Dhariwal, Neelakantan, Shyam, Sastry, Askell, et~al.]{brown2020language}
Brown, T., Mann, B., Ryder, N., Subbiah, M., Kaplan, J.~D., Dhariwal, P., Neelakantan, A., Shyam, P., Sastry, G., Askell, A., et~al.
\newblock Language models are few-shot learners.
\newblock \emph{Advances in neural information processing systems}, 33:\penalty0 1877--1901, 2020.

\bibitem[Caruana(1997)]{Caruana1997MTL}
Caruana, R.
\newblock Multitask learning.
\newblock In \emph{Machine Learning}, volume~28, pp.\  41--75, 1997.
\newblock \doi{10.1023/A:1007379606734}.

\bibitem[Chen et~al.(2021)Chen, Tworek, Jun, Yuan, Pinto, Kaplan, Edwards, Burda, Joseph, Brockman, et~al.]{chen2021evaluating}
Chen, M., Tworek, J., Jun, H., Yuan, Q., Pinto, H. P. D.~O., Kaplan, J., Edwards, H., Burda, Y., Joseph, N., Brockman, G., et~al.
\newblock Evaluating large language models trained on code.
\newblock \emph{arXiv preprint arXiv:2107.03374}, 2021.

\bibitem[Chi et~al.(2023)Chi, Fan, Chen, Rudnicky, and Ramadge]{Chi2023Latent}
Chi, T.-C., Fan, T.-H., Chen, L.-W., Rudnicky, A.~I., and Ramadge, P.~J.
\newblock Latent positional information is in the self-attention variance of transformer language models without positional embeddings.
\newblock \emph{arXiv preprint arXiv:2305.13571}, 2023.
\newblock Accepted by ACL 2023.

\bibitem[Dean et~al.(2012)Dean, Corrado, Monga, Chen, Devin, Mao, Ranzato, Senior, Tucker, Yang, et~al.]{dean2012large}
Dean, J., Corrado, G., Monga, R., Chen, K., Devin, M., Mao, M., Ranzato, M., Senior, A., Tucker, P., Yang, K., et~al.
\newblock Large scale distributed deep networks.
\newblock \emph{Advances in neural information processing systems}, 25, 2012.

\bibitem[Dettmers et~al.(2022)Dettmers, Lewis, Belkada, and Zettlemoyer]{dettmers2022gpt3}
Dettmers, T., Lewis, M., Belkada, Y., and Zettlemoyer, L.
\newblock Gpt3. int8 (): 8-bit matrix multiplication for transformers at scale.
\newblock \emph{Advances in neural information processing systems}, 35:\penalty0 30318--30332, 2022.

\bibitem[Devlin et~al.(2019)Devlin, Chang, Lee, and Toutanova]{devlin2019bert}
Devlin, J., Chang, M.-W., Lee, K., and Toutanova, K.
\newblock Bert: Pre-training of deep bidirectional transformers for language understanding.
\newblock In \emph{Proceedings of the 2019 conference of the North American chapter of the association for computational linguistics: human language technologies, volume 1 (long and short papers)}, pp.\  4171--4186, 2019.

\bibitem[Dun et~al.(2022)Dun, Wolfe, Jermaine, and Kyrillidis]{dun2022resist}
Dun, C., Wolfe, C.~R., Jermaine, C.~M., and Kyrillidis, A.
\newblock Res{IST}: Layer-wise decomposition of resnets for distributed training.
\newblock In \emph{Uncertainty in Artificial Intelligence}, pp.\  610--620. PMLR, 2022.

\bibitem[Dun et~al.(2023)Dun, Hipolito, Jermaine, Dimitriadis, and Kyrillidis]{dun2023efficient}
Dun, C., Hipolito, M., Jermaine, C., Dimitriadis, D., and Kyrillidis, A.
\newblock Efficient and light-weight federated learning via asynchronous distributed dropout.
\newblock In \emph{International Conference on Artificial Intelligence and Statistics}, pp.\  6630--6660. PMLR, 2023.

\bibitem[Evci et~al.(2019)Evci, Pedregosa, Gomez, and Elsen]{evci2019difficulty}
Evci, U., Pedregosa, F., Gomez, A., and Elsen, E.
\newblock The difficulty of training sparse neural networks.
\newblock \emph{arXiv preprint arXiv:1906.10732}, 2019.

\bibitem[Evci et~al.(2020)Evci, Gale, Menick, Castro, and Elsen]{evci2020rigging}
Evci, U., Gale, T., Menick, J., Castro, P.~S., and Elsen, E.
\newblock Rigging the lottery: Making all tickets winners.
\newblock In \emph{International conference on machine learning}, pp.\  2943--2952. PMLR, 2020.

\bibitem[Farber \& Asanovic(1997)Farber and Asanovic]{farber1997parallel}
Farber, P. and Asanovic, K.
\newblock Parallel neural network training on multi-spert.
\newblock In \emph{Proceedings of 3rd International Conference on Algorithms and Architectures for Parallel Processing}, pp.\  659--666. IEEE, 1997.

\bibitem[Fort et~al.(2019)Fort, Hu, and Lakshminarayanan]{fort2019deep}
Fort, S., Hu, H., and Lakshminarayanan, B.
\newblock Deep ensembles: A loss landscape perspective.
\newblock \emph{arXiv preprint arXiv:1912.02757}, 2019.

\bibitem[Frankle \& Carbin(2018)Frankle and Carbin]{frankle2018lottery}
Frankle, J. and Carbin, M.
\newblock The lottery ticket hypothesis: Finding sparse, trainable neural networks.
\newblock \emph{arXiv preprint arXiv:1803.03635}, 2018.

\bibitem[Frankle et~al.(2020)Frankle, Dziugaite, Roy, and Carbin]{frankle2020pruning}
Frankle, J., Dziugaite, G.~K., Roy, D.~M., and Carbin, M.
\newblock Pruning neural networks at initialization: Why are we missing the mark?
\newblock \emph{arXiv preprint arXiv:2009.08576}, 2020.

\bibitem[Frantar \& Alistarh(2023)Frantar and Alistarh]{frantar2023sparsegpt}
Frantar, E. and Alistarh, D.
\newblock Sparsegpt: Massive language models can be accurately pruned in one-shot.
\newblock In \emph{International conference on machine learning}, pp.\  10323--10337. PMLR, 2023.

\bibitem[Frantar et~al.(2022)Frantar, Ashkboos, Hoefler, and Alistarh]{frantar2022gptq}
Frantar, E., Ashkboos, S., Hoefler, T., and Alistarh, D.
\newblock Gptq: Accurate post-training quantization for generative pre-trained transformers.
\newblock \emph{arXiv preprint arXiv:2210.17323}, 2022.

\bibitem[Gadhikar et~al.(2023)Gadhikar, Mukherjee, and Burkholz]{gadhikar2023random}
Gadhikar, A.~H., Mukherjee, S., and Burkholz, R.
\newblock Why random pruning is all we need to start sparse.
\newblock In \emph{International Conference on Machine Learning}, pp.\  10542--10570. PMLR, 2023.

\bibitem[Gale et~al.(2019)Gale, Elsen, and Hooker]{gale2019state}
Gale, T., Elsen, E., and Hooker, S.
\newblock The state of sparsity in deep neural networks.
\newblock \emph{arXiv preprint arXiv:1902.09574}, 2019.

\bibitem[Gupta et~al.(2006)Gupta, Smith, and Shalley]{gupta2006interplay}
Gupta, A.~K., Smith, K.~G., and Shalley, C.~E.
\newblock The interplay between exploration and exploitation.
\newblock \emph{Academy of Management Journal}, 49\penalty0 (4):\penalty0 693--706, 2006.
\newblock \doi{10.5465/amj.2006.22083026}.
\newblock URL \url{https://doi.org/10.5465/amj.2006.22083026}.

\bibitem[Han et~al.(2015{\natexlab{a}})Han, Mao, and Dally]{han2015deep}
Han, S., Mao, H., and Dally, W.~J.
\newblock Deep compression: Compressing deep neural networks with pruning, trained quantization and huffman coding.
\newblock \emph{arXiv preprint arXiv:1510.00149}, 2015{\natexlab{a}}.

\bibitem[Han et~al.(2015{\natexlab{b}})Han, Pool, Tran, and Dally]{han2015learning}
Han, S., Pool, J., Tran, J., and Dally, W.
\newblock Learning both weights and connections for efficient neural network.
\newblock \emph{Advances in neural information processing systems}, 28, 2015{\natexlab{b}}.

\bibitem[Hassibi et~al.(1993)Hassibi, Stork, and Wolff]{hassibi1993optimal}
Hassibi, B., Stork, D.~G., and Wolff, G.~J.
\newblock Optimal brain surgeon and general network pruning.
\newblock In \emph{IEEE international conference on neural networks}, pp.\  293--299. IEEE, 1993.

\bibitem[He et~al.(2015)He, Zhang, Ren, and Sun]{he2015delving}
He, K., Zhang, X., Ren, S., and Sun, J.
\newblock Delving deep into rectifiers: Surpassing human-level performance on imagenet classification.
\newblock In \emph{Proceedings of the IEEE international conference on computer vision}, pp.\  1026--1034, 2015.

\bibitem[Hu et~al.(2023)Hu, Tang, Kyrillidis, and Jermaine]{hu2023federated}
Hu, E., Tang, Y., Kyrillidis, A., and Jermaine, C.
\newblock Federated learning over images: vertical decompositions and pre-trained backbones are difficult to beat.
\newblock In \emph{Proceedings of the IEEE/CVF International Conference on Computer Vision}, pp.\  19385--19396, 2023.

\bibitem[Huang et~al.(2019)Huang, Cheng, Bapna, Firat, Chen, Chen, Lee, Ngiam, Le, Wu, et~al.]{huang2019gpipe}
Huang, Y., Cheng, Y., Bapna, A., Firat, O., Chen, D., Chen, M., Lee, H., Ngiam, J., Le, Q.~V., Wu, Y., et~al.
\newblock Gpipe: Efficient training of giant neural networks using pipeline parallelism.
\newblock \emph{Advances in neural information processing systems}, 32, 2019.

\bibitem[{Hugging Face Documentation Team}(2025)]{HuggingFace_ModelEstimator_2025}
{Hugging Face Documentation Team}.
\newblock Model memory estimator.
\newblock \url{https://huggingface.co/docs/accelerate/en/usage_guides/model_size_estimator}, 2025.
\newblock Accessed: 21 October 2025.

\bibitem[Kedia et~al.(2024)Kedia, Zaidi, Khyalia, Jung, Goka, and Lee]{Kedia2024Transformers}
Kedia, A., Zaidi, M.~A., Khyalia, S., Jung, J., Goka, H., and Lee, H.
\newblock Transformers get stable: An end-to-end signal propagation theory for language models.
\newblock \emph{arXiv preprint arXiv:2403.09635}, 2024.
\newblock Accepted at ICML 2024.

\bibitem[Kim et~al.(2024)Kim, Kim, Kim, Castells, Choi, Shin, and Song]{kim2024shortened}
Kim, B.-K., Kim, G., Kim, T.-H., Castells, T., Choi, S., Shin, J., and Song, H.-K.
\newblock Shortened llama: Depth pruning for large language models with comparison of retraining methods.
\newblock \emph{arXiv preprint arXiv:2402.02834}, 2024.
\newblock URL \url{https://arxiv.org/abs/2402.02834}.

\bibitem[Kumar et~al.(2024)Kumar, Luo, and Sellke]{kumar2024no}
Kumar, T., Luo, K., and Sellke, M.
\newblock No free prune: Information-theoretic barriers to pruning at initialization.
\newblock \emph{arXiv preprint arXiv:2402.01089}, 2024.

\bibitem[LeCun et~al.(1989)LeCun, Denker, and Solla]{lecun1989optimal}
LeCun, Y., Denker, J., and Solla, S.
\newblock Optimal brain damage.
\newblock \emph{Advances in neural information processing systems}, 2, 1989.

\bibitem[Lee et~al.(2019)Lee, Ajanthan, and Torr]{Lee2018SNIP}
Lee, N., Ajanthan, T., and Torr, P. H.~S.
\newblock {SNIP}: Single-shot network pruning based on connection sensitivity.
\newblock In \emph{International Conference on Learning Representations (ICLR)}, 2019.
\newblock URL \url{https://arxiv.org/abs/1810.02340}.
\newblock Published on arXiv as 1810.02340.

\bibitem[Lepikhin et~al.(2020)Lepikhin, Lee, Xu, Chen, Firat, Huang, Krikun, Shazeer, and Chen]{lepikhin2020gshard}
Lepikhin, D., Lee, H., Xu, Y., Chen, D., Firat, O., Huang, Y., Krikun, M., Shazeer, N., and Chen, Z.
\newblock {GShard}: Scaling giant models with conditional computation and automatic sharding.
\newblock \emph{arXiv preprint arXiv:2006.16668}, 2020.

\bibitem[Li et~al.(2020{\natexlab{a}})Li, Zhao, Varma, Salpekar, Noordhuis, Li, Paszke, Smith, Vaughan, Damania, and Chintala]{li2020pytorchdistributedexperiencesaccelerating}
Li, S., Zhao, Y., Varma, R., Salpekar, O., Noordhuis, P., Li, T., Paszke, A., Smith, J., Vaughan, B., Damania, P., and Chintala, S.
\newblock Pytorch distributed: Experiences on accelerating data parallel training, 2020{\natexlab{a}}.
\newblock URL \url{https://arxiv.org/abs/2006.15704}.

\bibitem[Li et~al.(2020{\natexlab{b}})Li, Zhao, Varma, Salpekar, Noordhuis, Li, Paszke, Smith, Vaughan, Damania, et~al.]{li2020pytorch}
Li, S., Zhao, Y., Varma, R., Salpekar, O., Noordhuis, P., Li, T., Paszke, A., Smith, J., Vaughan, B., Damania, P., et~al.
\newblock Pytorch distributed: Experiences on accelerating data parallel training.
\newblock \emph{arXiv preprint arXiv:2006.15704}, 2020{\natexlab{b}}.

\bibitem[Liu et~al.(2021)Liu, Chen, Chen, Atashgahi, Yin, Kou, Shen, Pechenizkiy, Wang, and Mocanu]{liu2021sparse}
Liu, S., Chen, T., Chen, X., Atashgahi, Z., Yin, L., Kou, H., Shen, L., Pechenizkiy, M., Wang, Z., and Mocanu, D.~C.
\newblock Sparse training via boosting pruning plasticity with neuroregeneration.
\newblock \emph{Advances in Neural Information Processing Systems}, 34:\penalty0 9908--9922, 2021.

\bibitem[Liu et~al.(2022)Liu, Chen, Chen, Shen, Mocanu, Wang, and Pechenizkiy]{liu2022unreasonable}
Liu, S., Chen, T., Chen, X., Shen, L., Mocanu, D.~C., Wang, Z., and Pechenizkiy, M.
\newblock The unreasonable effectiveness of random pruning: Return of the most naive baseline for sparse training.
\newblock \emph{arXiv preprint arXiv:2202.02643}, 2022.

\bibitem[Liu et~al.(2018)Liu, Sun, Zhou, Huang, and Darrell]{liu2018rethinking}
Liu, Z., Sun, M., Zhou, T., Huang, G., and Darrell, T.
\newblock Rethinking the value of network pruning.
\newblock \emph{arXiv preprint arXiv:1810.05270}, 2018.

\bibitem[McMahan et~al.(2017)McMahan, Moore, Ramage, Hampson, and Ag{\"u}era~y Arcas]{mcmahan2017communication}
McMahan, H.~B., Moore, E., Ramage, D., Hampson, S., and Ag{\"u}era~y Arcas, B.
\newblock Communication-efficient learning of deep networks from decentralized data.
\newblock In \emph{Proceedings of the 20th International Conference on Artificial Intelligence and Statistics (AISTATS)}, volume~54, pp.\  1273--1282, 2017.
\newblock URL \url{https://arxiv.org/abs/1602.05629}.

\bibitem[Merity et~al.(2016)Merity, Xiong, Bradbury, and Socher]{merity2016pointer}
Merity, S., Xiong, C., Bradbury, J., and Socher, R.
\newblock Pointer sentinel mixture models, 2016.

\bibitem[Michel et~al.(2019)Michel, Levy, and Neubig]{michel2019sixteenheadsreallybetter}
Michel, P., Levy, O., and Neubig, G.
\newblock Are sixteen heads really better than one?, 2019.
\newblock URL \url{https://arxiv.org/abs/1905.10650}.

\bibitem[Nebius(2025)]{nebius2025faulttolerant}
Nebius.
\newblock Fault-tolerant training: How we build reliable clusters for distributed {AI} workloads.
\newblock Nebius Blog, August 2025.
\newblock URL \url{https://nebius.com/blog/posts/how-we-build-reliable-clusters}.

\bibitem[Pang et~al.(2019)Pang, Xu, Du, Chen, and Zhu]{pang2019improving}
Pang, T., Xu, K., Du, C., Chen, N., and Zhu, J.
\newblock Improving adversarial robustness via promoting ensemble diversity.
\newblock In \emph{International Conference on Machine Learning}, pp.\  4970--4979. PMLR, 2019.

\bibitem[Panigrahi et~al.(2024)Panigrahi, Saunshi, Lyu, Miryoosefi, Reddi, Kale, and Kumar]{panigrahi2024efficient}
Panigrahi, A., Saunshi, N., Lyu, K., Miryoosefi, S., Reddi, S., Kale, S., and Kumar, S.
\newblock Efficient stagewise pretraining via progressive subnetworks.
\newblock \emph{arXiv preprint arXiv:2402.05913}, 2024.

\bibitem[Prasanna et~al.(2020)Prasanna, Rogers, and Rumshisky]{prasanna2020bert}
Prasanna, S., Rogers, A., and Rumshisky, A.
\newblock When bert plays the lottery, all tickets are winning.
\newblock \emph{arXiv preprint arXiv:2005.00561}, 2020.

\bibitem[Radford et~al.(2019)Radford, Wu, Child, Luan, Amodei, and Sutskever]{radford2019language}
Radford, A., Wu, J., Child, R., Luan, D., Amodei, D., and Sutskever, I.
\newblock Language models are unsupervised multitask learners.
\newblock 2019.

\bibitem[Raina et~al.(2009)Raina, Madhavan, and Ng]{raina2009large}
Raina, R., Madhavan, A., and Ng, A.~Y.
\newblock Large-scale deep unsupervised learning using graphics processors.
\newblock In \emph{Proceedings of the 26th annual international conference on machine learning}, pp.\  873--880, 2009.

\bibitem[Renda et~al.(2020)Renda, Frankle, and Carbin]{renda2020comparing}
Renda, A., Frankle, J., and Carbin, M.
\newblock Comparing rewinding and fine-tuning in neural network pruning.
\newblock \emph{arXiv preprint arXiv:2003.02389}, 2020.

\bibitem[Sanh et~al.(2020)Sanh, Wolf, and Rush]{sanh2020movement}
Sanh, V., Wolf, T., and Rush, A.
\newblock Movement pruning: Adaptive sparsity by fine-tuning.
\newblock \emph{Advances in neural information processing systems}, 33:\penalty0 20378--20389, 2020.

\bibitem[Shoeybi et~al.(2019)Shoeybi, Patwary, Puri, LeGresley, Casper, and Catanzaro]{shoeybi2019megatronlm}
Shoeybi, M., Patwary, M., Puri, R., LeGresley, P., Casper, J., and Catanzaro, B.
\newblock {Megatron-LM}: Training multi-billion parameter language models using model parallelism.
\newblock \emph{arXiv preprint arXiv:1909.08053}, 2019.

\bibitem[Su et~al.(2020)Su, Chen, Cai, Wu, Gao, Wang, and Lee]{su2020sanity}
Su, J., Chen, Y., Cai, T., Wu, T., Gao, R., Wang, L., and Lee, J.~D.
\newblock Sanity-checking pruning methods: Random tickets can win the jackpot.
\newblock \emph{Advances in neural information processing systems}, 33:\penalty0 20390--20401, 2020.

\bibitem[Sun et~al.(2023)Sun, Liu, Bair, and Kolter]{sun2023simple}
Sun, M., Liu, Z., Bair, A., and Kolter, J.~Z.
\newblock A simple and effective pruning approach for large language models.
\newblock \emph{arXiv preprint arXiv:2306.11695}, 2023.

\bibitem[Sun et~al.(2024)Sun, Liu, Bair, and Kolter]{sun2024simpleeffectivepruningapproach}
Sun, M., Liu, Z., Bair, A., and Kolter, J.~Z.
\newblock A simple and effective pruning approach for large language models, 2024.
\newblock URL \url{https://arxiv.org/abs/2306.11695}.

\bibitem[Taboga(2021)]{Taboga2021ChebyshevWLLN}
Taboga, M.
\newblock Chebyshev's weak law of large numbers, 2021.
\newblock URL \url{https://www.statlect.com/asymptotic-theory/law-of-large-numbers}.

\bibitem[Vaswani et~al.(2017)Vaswani, Shazeer, Parmar, Uszkoreit, Jones, Gomez, Kaiser, and Polosukhin]{vaswani2017attention}
Vaswani, A., Shazeer, N., Parmar, N., Uszkoreit, J., Jones, L., Gomez, A.~N., Kaiser, {\L}., and Polosukhin, I.
\newblock Attention is all you need.
\newblock \emph{Advances in neural information processing systems}, 30, 2017.

\bibitem[Wang et~al.(2020)Wang, Sun, Xu, Wu, Li, Xu, and Wu]{Wang2020GraSP}
Wang, S., Sun, T., Xu, S., Wu, T., Li, L., Xu, S., and Wu, P.
\newblock {GraSP}: Gradient signal preservation for pruning neural networks.
\newblock In \emph{Advances in Neural Information Processing Systems (NeurIPS)}, 2020.
\newblock URL \url{https://arxiv.org/abs/2002.07376}.
\newblock Published on arXiv as 2002.07376.

\bibitem[Wei et~al.(2022)Wei, Tay, Bommasani, Raffel, Zoph, Borgeaud, Yogatama, Bosma, Zhou, Metzler, et~al.]{wei2022emergent}
Wei, J., Tay, Y., Bommasani, R., Raffel, C., Zoph, B., Borgeaud, S., Yogatama, D., Bosma, M., Zhou, D., Metzler, D., et~al.
\newblock Emergent abilities of large language models.
\newblock \emph{arXiv preprint arXiv:2206.07682}, 2022.

\bibitem[Wolfe et~al.(2024)Wolfe, Yang, Liao, Chowdhury, Dun, Bayer, Segarra, and Kyrillidis]{wolfe2024gist}
Wolfe, C.~R., Yang, J., Liao, F., Chowdhury, A., Dun, C., Bayer, A., Segarra, S., and Kyrillidis, A.
\newblock {GIST}: Distributed training for large-scale graph convolutional networks.
\newblock \emph{Journal of Applied and Computational Topology}, 8\penalty0 (5):\penalty0 1363--1415, 2024.

\bibitem[Wright et~al.(2024)Wright, Huang, Huang, Cala, and Petersen]{pytorch2024faulttolerant}
Wright, L., Huang, H., Huang, C.-C., Cala, M., and Petersen, E.
\newblock Fault tolerant llama: training with 2000 synthetic failures every ~15 seconds and no checkpoints on {Crusoe L40S}.
\newblock PyTorch Blog, 2024.
\newblock URL \url{https://pytorch.org/blog/fault-tolerant-llama-training-with-2000-synthetic-failures-every-15-seconds-and-no-checkpoints-on-crusoe-l40s/}.

\bibitem[Xiong et~al.(2020)Xiong, Yang, He, Zheng, Zheng, Xing, Zhang, Lan, Wang, and Liu]{Xiong2020LayerNorm}
Xiong, R., Yang, Y., He, D., Zheng, K., Zheng, S., Xing, C., Zhang, H., Lan, Y., Wang, L., and Liu, T.-Y.
\newblock On layer normalization in the transformer architecture.
\newblock \emph{arXiv preprint arXiv:2002.04745}, 2020.

\bibitem[Yang et~al.(2025)Yang, Zhen, Ganesh, Galstyan, Huybrechts, M{\"u}ller, K{\"u}bler, Swaminathan, Mouchtaris, Bodapati, et~al.]{yang2025wanda++}
Yang, Y., Zhen, K., Ganesh, B., Galstyan, A., Huybrechts, G., M{\"u}ller, M., K{\"u}bler, J.~M., Swaminathan, R.~V., Mouchtaris, A., Bodapati, S.~B., et~al.
\newblock Wanda++: Pruning large language models via regional gradients.
\newblock \emph{arXiv preprint arXiv:2503.04992}, 2025.

\bibitem[Yao et~al.(2022)Yao, Yazdani~Aminabadi, Zhang, Wu, Li, and He]{yao2022zeroquant}
Yao, Z., Yazdani~Aminabadi, R., Zhang, M., Wu, X., Li, C., and He, Y.
\newblock Zeroquant: Efficient and affordable post-training quantization for large-scale transformers.
\newblock \emph{Advances in neural information processing systems}, 35:\penalty0 27168--27183, 2022.

\bibitem[Yuan et~al.(2019)Yuan, Wolfe, Dun, Tang, Kyrillidis, and Jermaine]{yuan2019distributed}
Yuan, B., Wolfe, C.~R., Dun, C., Tang, Y., Kyrillidis, A., and Jermaine, C.~M.
\newblock Distributed learning of deep neural networks using independent subnet training.
\newblock \emph{arXiv preprint arXiv:1910.02120}, 2019.

\bibitem[Zheng et~al.(2025)Zheng, Ren, Xia, Xiao, and Xie]{zheng2025dense2moe}
Zheng, Y., Ren, Y., Xia, X., Xiao, X., and Xie, X.
\newblock Dense2moe: Restructuring diffusion transformer to moe for efficient text-to-image generation.
\newblock \emph{arXiv preprint arXiv:2510.09094}, 2025.
\newblock URL \url{https://arxiv.org/abs/2510.09094}.

\end{thebibliography}
\bibliographystyle{mlsys2025}

\clearpage
\appendix

\section{Background and Notation} \label{sec:notation}
This section introduces the fundamental components and notation for the transformer architecture, primarily following the conventions from the paper by \citet{panigrahi2024efficient}. 

\subsection{Key Components}

\textbf{Layer Normalization.} As defined in \citet{panigrahi2024efficient}, a layer normalization function $f_{\text{ln}} : \mathbb{R}^{d_{\text{model}}} \to \mathbb{R}^{d_{\text{model}}}$ with learnable parameters $\bm{\gamma}, \bm{b} \in \mathbb{R}^{d_{\text{model}}}$ operates on an input vector $\bm{x} \in \mathbb{R}^{d_{\text{model}}}$. The process first normalizes $\bm{x}$ to $\bm{z} = (\bm{x}-\mu)/\sigma$, where $\mu$ and $\sigma$ are the mean and standard deviation of the elements in $\bm{x}$. The final output $\bm{y}_{\text{ln}}$ is then computed as $\bm{y}_{\text{ln}} = \bm{\gamma} \odot \bm{z} + \bm{b}$.

\textbf{Multi-Head Attention.} From \citet{vaswani2017attention}, the scaled dot-product attention function for a single head is defined as
\begin{equation*}
\text{Attention}(\bm{Q}, \bm{K}, \bm{V}) = \text{softmax}\left(\frac{\bm{Q}\bm{K}^T}{\sqrt{d_{\text{head}}}}\right) \bm{V},
\end{equation*}
where $\bm{Q} \in \mathbb{R}^{N \times d_{\text{head}}}$, $\bm{K} \in \mathbb{R}^{N \times d_{\text{head}}}$, and $\bm{V} \in \mathbb{R}^{N \times d_{\text{head}}}$. The output of a single head is a matrix of weighted sums of the value vectors, where the weights are determined by the dot-product similarity between the queries and keys.

Multi-head attention allows a model to jointly attend to information from various representation subspaces at different positions. This is a significant improvement over single-head attention. A multi-head attention layer, denoted as $f_{\text{attn}}$, processes a sequence of input vectors $\{\bm{x}_n\}_{n=1}^N$ and outputs a sequence $\{\bm{y}_n\}_{n=1}^N$. The layer uses $H$ attention heads, where each head $h$ has its own parameter matrices $\{\bm{W}^Q_h, \bm{W}^K_h, \bm{W}^V_h \in \mathbb{R}^{d_{\text{model}} \times d_{\text{head}}}\}$.

The output is the concatenation of the outputs from each head, projected by a final matrix $\bm{C}^{\text{attn}} \in \mathbb{R}^{H d_{\text{head}} \times d_{\text{model}}}$:
\begin{equation*}
\bm{Y}_{\text{attn}} = \text{Concat}(\text{head}_1, \dots, \text{head}_H) \bm{C}^{\text{attn}},
\end{equation*}
where $\text{head}_h = \text{Attention}(\bm{Q}_h, \bm{K}_h, \bm{V}_h)$. The query, key, and value matrices for each head are derived from the input as $\bm{Q}_h = \bm{X} \bm{W}^Q_h$, $\bm{K}_h = \bm{X} \bm{W}^K_h$, and $\bm{V}_h = \bm{X} \bm{W}^V_h$, with $\bm{X} \in \mathbb{R}^{N \times d_{\text{model}}}$ being the matrix of input vectors.

\textbf{Feedforward.} A feedforward network (FFN) layer, $f_{\text{ffn}} : \mathbb{R}^{d_{\text{model}}} \to \mathbb{R}^{d_{\text{model}}}$, is defined with parameters $\{\bm{W}^{\text{ffn}} \in \mathbb{R}^{d_{\text{inner}} \times d_{\text{model}}}, \bm{C}^{\text{ffn}} \in \mathbb{R}^{d_{\text{model}} \times d_{\text{inner}}}\}$ and uses the $\sigma_{\text{relu}}$ activation function. For an input vector $\bm{x} \in \mathbb{R}^{d_{\text{model}}}$, the output $\bm{y} \in \mathbb{R}^{d_{\text{model}}}$ is given by
\begin{equation*}
\bm{y}_{\text{ffn}} = \bm{C}^{\text{ffn}}\sigma_{\text{relu}}(\bm{W}^{\text{ffn}} \bm{x}).
\end{equation*}

\subsection{Transformer Layer Architecture}
A pre-layernorm transformer layer integrates the above components in a sequential manner as described in \citet{panigrahi2024efficient}. Given an input sequence matrix $\bm{X} \in \mathbb{R}^{N \times d_{\text{model}}}$, the layer produces an output matrix $\bm{Y} \in \mathbb{R}^{N \times d_{\text{model}}}$ through the following steps:
\begin{enumerate}
    \item \textbf{Attention Layer Normalization}: The input vectors are first normalized. Let $\bm{X}_{\text{attnln}}$ be the matrix where each row is the result of applying layer normalization to the corresponding row of $\bm{X}$: $\bm{X}_{\text{attnln}} = f_{\text{ln}}(\bm{X}; \bm{\gamma}_{\text{attn}}, \bm{b}_{\text{attn}})$.
    \item \textbf{Multi-Head Attention}: The normalized vectors are passed through the multi-head attention layer: $\bm{Y}_{\text{attn}} = f_{\text{attn}}(\bm{X}_{\text{attnln}})$.
    \item \textbf{Residual Connection}: A residual connection is added: $\bm{Y}_{\text{attnlayer}} = \bm{X} + \bm{Y}_{\text{attn}}$.
    \item \textbf{FFN Layer Normalization}: The result is normalized before the FFN: $\bm{X}_{\text{ffnln}} = f_{\text{ln}}(\bm{Y}_{\text{attnlayer}}; \bm{\gamma}_{\text{ffn}}, \bm{b}_{\text{ffn}})$.
    \item \textbf{FFN Function}: The normalized vectors are processed by the FFN. Since the FFN operates on individual vectors, this is applied row-wise: $\bm{Y}_{\text{ffn}} = f_{\text{ffn}}(\bm{X}_{\text{ffnln}})$.
    \item \textbf{Final Output}: A final residual connection yields the output of the layer: $\bm{Y} = \bm{Y}_{\text{attnlayer}} + \bm{Y}_{\text{ffn}}$.
\end{enumerate}

\subsection{Initialization}
The weights for the transformer layer are initialized following \citet{he2015delving}:
\begin{align*}
\bm{W}^Q_h, \bm{W}^K_h, \bm{W}^V_h &\sim \mathcal{N}\left(0, \frac{1}{d_{\text{model}}} \bm{I}\right) \\
\bm{C}^{\text{attn}} &\sim \mathcal{N}\left(0, \frac{1}{H d_{\text{head}}} \bm{I}\right) \\
\bm{W}^{\text{ffn}} &\sim \mathcal{N}\left(0, \frac{2}{d_{\text{model}}} \bm{I}\right) \\
\bm{C}^{\text{ffn}} &\sim \mathcal{N}\left(0, \frac{1}{d_{\text{inner}}} \bm{I}\right)
\end{align*}
The layer normalization parameters $\bm{\gamma}$ and $\bm{b}$ for both $f_{\text{LN}}^{\text{attn}}$ and $f_{\text{LN}}^{\text{ffn}}$ are initialized to 1 and 0, respectively.

For much of the proofs below, we follow in He's convention \citep{he2015delving} and assume $\bm{X}$ has i.i.d. components.

\begin{lemma}
\label{lma:pre_activation}
Let $\bm{x} \in \mathbb{R}^{d_{\text{in}}}$ be a random vector with i.i.d. components. Let $\bm{W} \in \mathbb{R}^{d_{\text{out}} \times d_{\text{in}}}$ be a random matrix with i.i.d. components, $\bm{W}_{ji} \sim \mathcal{N}(0, \sigma_W^2)$. If we let $\bm{y} := \bm{W} \bm{x}$, then the expected squared norm of the output is
\begin{equation*}
\mathbb{E}[\norm{\bm{y}}^2] = d_{\text{out}} \sigma_W^2 \mathbb{E}[\norm{\bm{x}}^2]
\end{equation*}
\end{lemma}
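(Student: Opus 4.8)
The plan is to compute $\mathbb{E}[\norm{\bm{y}}^2]$ by expanding the squared norm coordinate-wise and exploiting the independence of $\bm{W}$ and $\bm{x}$, together with the independence of the entries of $\bm{W}$. First I would write $\norm{\bm{y}}^2 = \sum_{j=1}^{d_{\text{out}}} y_j^2$ where $y_j = \sum_{i=1}^{d_{\text{in}}} \bm{W}_{ji} x_i$, so that $y_j^2 = \sum_{i,k} \bm{W}_{ji} \bm{W}_{jk} x_i x_k$. Taking expectations and using that $\bm{W}$ is independent of $\bm{x}$, each term factors as $\mathbb{E}[\bm{W}_{ji}\bm{W}_{jk}]\,\mathbb{E}[x_i x_k]$.

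Next I would invoke independence and zero-mean of the $\bm{W}$ entries: $\mathbb{E}[\bm{W}_{ji}\bm{W}_{jk}] = \sigma_W^2$ if $i = k$ and $0$ otherwise. This collapses the double sum to the diagonal, giving $\mathbb{E}[y_j^2] = \sigma_W^2 \sum_{i=1}^{d_{\text{in}}} \mathbb{E}[x_i^2] = \sigma_W^2\, \mathbb{E}[\norm{\bm{x}}^2]$. Summing over the $d_{\text{out}}$ coordinates $j$ (each contributing the same value, since the rows of $\bm{W}$ are identically distributed) yields $\mathbb{E}[\norm{\bm{y}}^2] = d_{\text{out}}\,\sigma_W^2\,\mathbb{E}[\norm{\bm{x}}^2]$, which is the claim. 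I would not actually need the i.i.d. assumption on the components of $\bm{x}$ for this particular identity — only that $\mathbb{E}[\norm{\bm{x}}^2] = \sum_i \mathbb{E}[x_i^2]$, which is automatic — but it is stated for consistency with the surrounding development and with He's convention.

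There is no real obstacle here; the only points requiring a little care are the interchange of expectation and the finite sums (trivially justified by linearity, as all sums are finite), and making explicit the independence assumptions that kill the off-diagonal cross terms. If one wanted to be fully rigorous about integrability, one could note that the conclusion is an identity in $[0,\infty]$ and holds regardless of whether the quantities are finite. I expect the author's proof to be essentially this short computation, possibly phrased directly in terms of $\mathbb{E}[\bm{y}\bm{y}^\top]$ or via $\mathbb{E}[\norm{\bm{y}}^2] = \mathbb{E}[\mathrm{tr}(\bm{x}^\top \bm{W}^\top \bm{W} \bm{x})] = \mathbb{E}[\mathrm{tr}(\bm{W}^\top \bm{W}\, \bm{x}\bm{x}^\top)]$ and then using $\mathbb{E}[\bm{W}^\top \bm{W}] = d_{\text{out}}\sigma_W^2 \bm{I}$.
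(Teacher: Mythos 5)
Your proof is correct and is essentially the same computation the paper gives: expand $\norm{\bm{y}}^2$ coordinate-wise, kill the off-diagonal cross terms using independence and zero mean of the $\bm{W}_{ji}$, and sum the surviving diagonal terms. Your side observation that the i.i.d.\ assumption on $\bm{x}$ is not actually used here is accurate, and the trace-based variant you mention would also work, but neither changes the substance --- the two arguments are the same.
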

\begin{proof}
We expand the squared norm and use the linearity of expectation:
\begin{align*}
\mathbb{E}[\norm{\bm{y}}^2] &= \mathbb{E} \left[ \sum_{j=1}^{d_{\text{out}}} \left(\sum_{i=1}^{d_{\text{in}}} \bm{W}_{ji} \bm{x}_i\right)^2 \right] \\
&= \sum_{j=1}^{d_{\text{out}}} \mathbb{E} \left[ \sum_{i, k} \bm{W}_{ji} \bm{W}_{jk} \bm{x}_i \bm{x}_k \right].
\end{align*}
Since the components of $\bm{W}$ are i.i.d. with zero mean and are independent of $\bm{x}$, the cross-terms vanish on expectation: $\mathbb{E}[\bm{W}_{ji}\bm{W}_{jk}\bm{x}_i\bm{x}_k] = \mathbb{E}[\bm{W}_{ji}]\mathbb{E}[\bm{W}_{jk}]\mathbb{E}[\bm{x}_i\bm{x}_k] = 0$. We are left only with the terms where $i = k$:
\begin{align*}
\mathbb{E}[\norm{\bm{y}}^2] &= \sum_{j=1}^{d_{\text{out}}} \sum_{i=1}^{d_{\text{in}}} \mathbb{E}[\bm{W}_{ji}^2 \bm{x}_i^2] \\
&= \sum_{j=1}^{d_{\text{out}}} \sum_{i=1}^{d_{\text{in}}} \mathbb{E}[\bm{W}_{ji}^2] \mathbb{E}[\bm{x}_i^2] \\
&= \sum_{j=1}^{d_{\text{out}}} \sum_{i=1}^{d_{\text{in}}} \sigma_W^2 \mathbb{E}[\bm{x}_i^2] \\
&= d_{\text{out}} \sigma_W^2 \sum_{i=1}^{d_{\text{in}}} \mathbb{E}[\bm{x}_i^2] \\
&= d_{\text{out}} \sigma_W^2 \mathbb{E}[\norm{\bm{x}}^2]. \qedhere
\end{align*}
\end{proof}

\begin{lemma}
\label{lma:post_activation}
Let $\bm{x} \in \mathbb{R}^{d_{\text{in}}}$ be a random vector with i.i.d. components. Let $\bm{W} \in \mathbb{R}^{d_{\text{out}} \times d_{\text{in}}}$ be a random matrix with i.i.d. components, $\bm{W}_{ji} \sim \mathcal{N}(0, \sigma_W^2)$. If $\sigma_{\text{relu}}(\cdot)$ is the element-wise ReLU activation function, then the expected squared norm of the output is
\begin{equation*}
\mathbb{E}[\norm{\sigma_{\text{relu}}(\bm{W}\bm{x})}^2] = \frac{1}{2} d_{\text{out}} \sigma_W^2 \mathbb{E}[\norm{\bm{x}}^2]
\end{equation*}
\end{lemma}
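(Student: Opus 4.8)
The plan is to reduce Lemma~\ref{lma:post_activation} to Lemma~\ref{lma:pre_activation} by tracking what the ReLU does to the second moment coordinate-by-coordinate. Write $\bm{z} := \bm{W}\bm{x}$, so that $\sigma_{\text{relu}}(\bm{W}\bm{x})$ has $j$-th component $\max(0, \bm{z}_j)$. Then
\begin{equation*}
\mathbb{E}[\norm{\sigma_{\text{relu}}(\bm{z})}^2] = \sum_{j=1}^{d_{\text{out}}} \mathbb{E}[\max(0,\bm{z}_j)^2] = \sum_{j=1}^{d_{\text{out}}} \mathbb{E}[\bm{z}_j^2 \cdot \bm{1}\{\bm{z}_j > 0\}].
\end{equation*}
The key structural fact is that each $\bm{z}_j = \sum_i \bm{W}_{ji}\bm{x}_i$ has a distribution symmetric about $0$: conditioned on $\bm{x}$, it is a zero-mean Gaussian (since the $\bm{W}_{ji}$ are i.i.d.\ $\mathcal{N}(0,\sigma_W^2)$ and independent of $\bm{x}$), hence symmetric; and this symmetry survives integrating over $\bm{x}$. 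For any random variable $Z$ whose conditional law given $\bm{x}$ is symmetric about $0$, we have $\mathbb{E}[Z^2 \bm{1}\{Z>0\} \mid \bm{x}] = \tfrac12 \mathbb{E}[Z^2 \mid \bm{x}]$, because $Z^2$ is invariant under $Z \mapsto -Z$ while the events $\{Z>0\}$ and $\{Z<0\}$ swap and carry equal probability (the atom at $Z=0$ contributes nothing to $Z^2\bm{1}\{\cdot\}$). Taking expectations over $\bm{x}$ gives $\mathbb{E}[\bm{z}_j^2 \bm{1}\{\bm{z}_j>0\}] = \tfrac12 \mathbb{E}[\bm{z}_j^2]$.

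Summing over $j$, this yields $\mathbb{E}[\norm{\sigma_{\text{relu}}(\bm{W}\bm{x})}^2] = \tfrac12 \sum_j \mathbb{E}[\bm{z}_j^2] = \tfrac12 \mathbb{E}[\norm{\bm{W}\bm{x}}^2]$, and Lemma~\ref{lma:pre_activation} identifies $\mathbb{E}[\norm{\bm{W}\bm{x}}^2] = d_{\text{out}}\sigma_W^2 \mathbb{E}[\norm{\bm{x}}^2]$, completing the proof. So the whole argument is: (i) expand the norm coordinatewise; (ii) invoke conditional Gaussianity / symmetry to get the factor $\tfrac12$ per coordinate; (iii) recombine and cite the previous lemma.

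The main obstacle — really the only subtle point — is justifying the factor of $\tfrac12$ cleanly. The naive ``ReLU kills half the mass'' intuition is only exactly correct because $\bm{z}_j$ is symmetric about zero, and one must be a little careful that symmetry holds after marginalizing over $\bm{x}$, not just conditionally. The clean way is to condition on $\bm{x}$ first (where $\bm{z}_j \mid \bm{x} \sim \mathcal{N}(0, \sigma_W^2 \norm{\bm{x}}^2)$ is manifestly symmetric), apply the symmetry identity there, and only then take the outer expectation; the tower property does the rest. One should also note that independence of the coordinates of $\bm{x}$ is not actually needed for this lemma — only that $\bm{x}$ is independent of $\bm{W}$ — but since Lemma~\ref{lma:pre_activation} is stated under the i.i.d.\ assumption, it is harmless to carry that hypothesis through.
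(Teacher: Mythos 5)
Your proposal is correct and follows essentially the same route as the paper's proof: expand the squared norm coordinatewise, use symmetry of each pre-activation $\bm{z}_j$ about zero to get $\mathbb{E}[\max(0,\bm{z}_j)^2]=\tfrac12\mathbb{E}[\bm{z}_j^2]$, sum, and cite Lemma~\ref{lma:pre_activation}. The only difference is cosmetic: you condition on $\bm{x}$ to exhibit the Gaussian symmetry and then apply the tower property, whereas the paper asserts symmetry of $\bm{z}_j$ directly from the symmetry of the $\bm{W}_{ji}$ and their independence from $\bm{x}$.
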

\begin{proof}
Let $\bm{y} = \bm{W}\bm{x}$. Since each $\bm{W}_{ji}$ is drawn from a distribution symmetric about 0 and is independent of $\bm{x}$, each pre-activation component $\bm{y}_j = \sum_i \bm{W}_{ji}\bm{x}_i$ also has a distribution symmetric about 0. For any such random variable, this implies $\mathbb{E}[\max(0, \bm{y}_j)^2] = \frac{1}{2}\mathbb{E}[\bm{y}_j^2]$. The expected squared norm is then
\begin{align*}
\mathbb{E}[\norm{\sigma_{\text{relu}}(\bm{y})}^2] &= \sum_{j=1}^{d_{\text{out}}} \mathbb{E}[\max(0, \bm{y}_j)^2] \\
&= \frac{1}{2} \sum_{j=1}^{d_{\text{out}}} \mathbb{E}[\bm{y}_j^2] = \frac{1}{2}\mathbb{E}[\norm{\bm{y}}^2].
\end{align*}
The result follows by substituting $\mathbb{E}[\norm{\bm{y}}^2] = d_{\text{out}} \sigma_W^2 \mathbb{E}[\norm{\bm{x}}^2]$ from Lemma \ref{lma:pre_activation}.
\end{proof}

\begin{theorem}
\label{thm:ffn_squared_scale_factor}
Let $\bm{y}_{\text{ffn}} = \bm{C}^{\text{ffn}}\sigma_{\text{relu}}(\bm{W}^{\text{ffn}} \bm{x})$ be a two-layer FFN with inner dimension $d_{\text{inner}}$ and input $\bm{x} \in \mathbb{R}^{d_{\text{model}}}$ with i.i.d. components. Let $\bm{y}'_{\text{ffn}}$ be the output after reducing the inner dimension to $d'_{\text{inner}} \leq d_{\text{inner}}$ by taking a subset of the original weights $\bm{W}^{\text{ffn}}$ and $\bm{C}^{\text{ffn}}$. The expected squared output norm scales as
\begin{equation*}
\mathbb{E} [\norm{\bm{y}'_{\text{ffn}}}^2] = \frac{d'_{\text{inner}}}{d_{\text{inner}}} \mathbb{E} [\norm{\bm{y}_{\text{ffn}}}^2].
\end{equation*}
\end{theorem}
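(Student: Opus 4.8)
The plan is to apply the two norm lemmas twice in sequence, tracking how the intermediate dimension enters the variance bookkeeping. First I would apply Lemma~\ref{lma:post_activation} with $\bm{W} = \bm{W}^{\text{ffn}} \in \mathbb{R}^{d_{\text{inner}} \times d_{\text{model}}}$ to the full network: the hidden activation $\bm{h} = \sigma_{\text{relu}}(\bm{W}^{\text{ffn}}\bm{x})$ satisfies $\mathbb{E}[\norm{\bm{h}}^2] = \tfrac{1}{2} d_{\text{inner}} \sigma_{\bm{W}^{\text{ffn}}}^2 \mathbb{E}[\norm{\bm{x}}^2]$. Then I would apply Lemma~\ref{lma:pre_activation} with $\bm{W} = \bm{C}^{\text{ffn}} \in \mathbb{R}^{d_{\text{model}} \times d_{\text{inner}}}$ to get $\mathbb{E}[\norm{\bm{y}_{\text{ffn}}}^2] = d_{\text{model}} \sigma_{\bm{C}^{\text{ffn}}}^2 \mathbb{E}[\norm{\bm{h}}^2] = \tfrac{1}{2} d_{\text{model}} d_{\text{inner}} \sigma_{\bm{C}^{\text{ffn}}}^2 \sigma_{\bm{W}^{\text{ffn}}}^2 \mathbb{E}[\norm{\bm{x}}^2]$.

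Next I would repeat the identical computation for the subsampled network, where the inner dimension is $d'_{\text{inner}}$ and the weights are a fixed subset of the original rows of $\bm{W}^{\text{ffn}}$ and corresponding columns of $\bm{C}^{\text{ffn}}$. The key point is that subsampling does not change the per-entry variances $\sigma_{\bm{W}^{\text{ffn}}}^2$ and $\sigma_{\bm{C}^{\text{ffn}}}^2$ — it only changes the sum ranges in the lemmas, i.e.\ it replaces $d_{\text{inner}}$ by $d'_{\text{inner}}$ in both applications' dimension factor. Actually, one needs a small care here: in the second application (Lemma~\ref{lma:pre_activation} applied to $\bm{C}^{\text{ffn}}$), the relevant contracted dimension $d_{\text{in}}$ is the inner dimension, so it becomes $d'_{\text{inner}}$; but the hidden vector $\bm{h}'$ fed in has only $d'_{\text{inner}}$ i.i.d.\ components, and its expected squared norm is $\tfrac{1}{2} d'_{\text{inner}} \sigma_{\bm{W}^{\text{ffn}}}^2 \mathbb{E}[\norm{\bm{x}}^2]$ by the first application. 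The single $d_{\text{inner}} \mapsto d'_{\text{inner}}$ replacement already accounts for everything, because the factor $d_{\text{inner}}$ appears exactly once in the final product (the $d_{\text{model}}$ factor from $\bm{C}^{\text{ffn}}$'s output dimension is untouched). So $\mathbb{E}[\norm{\bm{y}'_{\text{ffn}}}^2] = \tfrac{1}{2} d_{\text{model}} d'_{\text{inner}} \sigma_{\bm{C}^{\text{ffn}}}^2 \sigma_{\bm{W}^{\text{ffn}}}^2 \mathbb{E}[\norm{\bm{x}}^2]$.

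Finally I would take the ratio of the two expressions; all common factors ($\tfrac12$, $d_{\text{model}}$, the two variances, $\mathbb{E}[\norm{\bm{x}}^2]$) cancel, leaving exactly $\mathbb{E}[\norm{\bm{y}'_{\text{ffn}}}^2] / \mathbb{E}[\norm{\bm{y}_{\text{ffn}}}^2] = d'_{\text{inner}} / d_{\text{inner}}$, as claimed. The main subtlety — not really an obstacle, but the place to be careful — is the independence assumption needed to chain the lemmas: Lemma~\ref{lma:post_activation} and Lemma~\ref{lma:pre_activation} each require the weight matrix to be i.i.d.\ and independent of its input, so I should note that $\bm{C}^{\text{ffn}}$ is independent of $\bm{h}$ (which depends only on $\bm{W}^{\text{ffn}}$ and $\bm{x}$), and that $\bm{h}$ has i.i.d.\ components. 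The latter follows because the rows of $\bm{W}^{\text{ffn}}$ are i.i.d.\ and ReLU acts entrywise, so the components of $\bm{h}$ are i.i.d.; this is exactly the hypothesis needed to invoke Lemma~\ref{lma:pre_activation} in the second step, and it is the one place where one relies on the He-style initialization structure rather than pure linear algebra.
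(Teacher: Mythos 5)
Your proof is correct and follows essentially the same route as the paper: chain Lemma~\ref{lma:post_activation} (inner ReLU layer) with Lemma~\ref{lma:pre_activation} (outer linear layer), observe that subsampling preserves per-entry weight variances, and conclude by a ratio in which the unchanged factors cancel. The only cosmetic difference is that the paper plugs in the specific He-initialization values $\text{Var}(\bm{W}^{\text{ffn}}_{ji})=2/d_{\text{model}}$ and $\text{Var}(\bm{C}^{\text{ffn}}_{kj})=1/d_{\text{inner}}$ so the intermediate expression collapses directly to $(d'_{\text{inner}}/d_{\text{inner}})\,\mathbb{E}[\norm{\bm{x}}^2]$, recovering the full network as the $d'_{\text{inner}}=d_{\text{inner}}$ special case, whereas you keep $\sigma_{\bm{W}^{\text{ffn}}}^2$ and $\sigma_{\bm{C}^{\text{ffn}}}^2$ symbolic and let them cancel in the ratio — which has the minor side benefit of making it transparent that the scaling holds for any i.i.d.\ zero-mean initialization. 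One small technicality in your last paragraph: the components of $\bm{h}=\sigma_{\text{relu}}(\bm{W}^{\text{ffn}}\bm{x})$ are i.i.d.\ only conditionally on $\bm{x}$, not marginally (they share the randomness of $\bm{x}$), but this is harmless because the proof of Lemma~\ref{lma:pre_activation} never actually uses the i.i.d.\ hypothesis on its input vector — it only needs the weight matrix to be zero-mean, i.i.d., and independent of the input.
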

\begin{proof}
Let $\bm{W}' \in \mathbb{R}^{d'_{\text{inner}} \times d_{\text{model}}}$ and $\bm{C}' \in \mathbb{R}^{d_{\text{model}} \times d'_{\text{inner}}}$ be the submatrices of the original weights. By our initialization, the variances are $\text{Var}(\bm{W}^{\text{ffn}}_{ji}) = \frac{2}{d_{\text{model}}}$ and $\text{Var}(\bm{C}^{\text{ffn}}_{kj}) = \frac{1}{d_{\text{inner}}}$. Since $\bm{W}'$ and $\bm{C}'$ are subsets of these weights, their elements retain the same variances. Now, we derive the expected squared norm of the modified network's output using Lemma \ref{lma:pre_activation} and \ref{lma:post_activation}.
\begin{align*}
\mathbb{E} [\norm{\bm{y}'_{\text{ffn}}}^2]
&= \mathbb{E} [\norm{\bm{C}'\sigma_{\text{relu}}(\bm{W}' \bm{x})}^2] \\
&= d_{\text{model}} \text{Var}(\bm{C}') \mathbb{E} [\norm{\sigma_{\text{relu}}(\bm{W}' \bm{x})}^2] \\
&= d_{\text{model}} \frac{1}{d_{\text{inner}}} \mathbb{E} [\norm{\sigma_{\text{relu}}(\bm{W}' \bm{x})}^2] \\
&= d_{\text{model}} \frac{1}{d_{\text{inner}}} \left( \frac{1}{2} d'_{\text{inner}} \text{Var}(\bm{W}') \mathbb{E} [\norm{\bm{x}}^2] \right) \\
&= d_{\text{model}} \frac{1}{d_{\text{inner}}} \left( \frac{d'_{\text{inner}}}{2} \frac{2}{d_{\text{model}}} \mathbb{E} [\norm{\bm{x}}^2] \right) \\
&= \frac{d'_{\text{inner}}}{d_{\text{inner}}} \mathbb{E} [\norm{\bm{x}}^2].
\end{align*}
For the original network, we can set $d'_{\text{inner}} = d_{\text{inner}}$ in the above derivation, which yields $\mathbb{E} [\norm{\bm{y}_{\text{ffn}}}^2] = \mathbb{E} [\norm{\bm{x}}^2]$. Substituting this back into the previous result gives the theorem.
\end{proof}

\begin{lemma}
\label{lma:ffn_subset_out}
Let $\bm{x} \in \mathbb{R}^{d_{\text{model}}}$ be a random vector with i.i.d. components. Let $\bm{y}_{\text{ffn}}$ and $\bm{y}'_{\text{ffn}}$ be as in Thm.~\ref{thm:ffn_squared_scale_factor}. Then for each $i \in \{1, \dots, d_{\text{model}} \}$,
\begin{equation*}
\text{Var}(\bm{y}'_{\text{ffn}_i}) = \frac{d'_{\text{inner}}}{d_{\text{inner}}} \text{Var}(\bm{y}_{\text{ffn}_i}).
\end{equation*}
\end{lemma}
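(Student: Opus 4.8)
The plan is to reduce the per-coordinate statement to the aggregate squared-norm statement already proved in Theorem~\ref{thm:ffn_squared_scale_factor}, using two observations: that each output coordinate has mean zero, and that the output coordinates are exchangeable.

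First I would establish the \emph{zero-mean} property. Write $\bm{y}_{\text{ffn}} = \bm{C}^{\text{ffn}} \bm{h}$ with $\bm{h} := \sigma_{\text{relu}}(\bm{W}^{\text{ffn}}\bm{x}) \in \mathbb{R}^{d_{\text{inner}}}$. Since the entries of $\bm{C}^{\text{ffn}}$ are i.i.d. zero-mean and independent of $\bm{h}$, conditioning on $\bm{h}$ gives $\mathbb{E}[\bm{y}_{\text{ffn}_i} \mid \bm{h}] = \sum_j \mathbb{E}[\bm{C}^{\text{ffn}}_{ij}]\,\bm{h}_j = 0$, hence $\mathbb{E}[\bm{y}_{\text{ffn}_i}] = 0$ and $\text{Var}(\bm{y}_{\text{ffn}_i}) = \mathbb{E}[\bm{y}_{\text{ffn}_i}^2]$. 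The identical argument applies to $\bm{y}'_{\text{ffn}}$.

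Next I would invoke \emph{exchangeability of the output coordinates}. Conditional on $\bm{h}$, each $\bm{y}_{\text{ffn}_i} = \sum_j \bm{C}^{\text{ffn}}_{ij}\bm{h}_j$ is a linear form in the $i$-th row of $\bm{C}^{\text{ffn}}$, and the rows are i.i.d.; so the coordinates $\{\bm{y}_{\text{ffn}_i}\}$ are i.i.d. given $\bm{h}$, and marginally exchangeable. In particular $\mathbb{E}[\bm{y}_{\text{ffn}_i}^2]$ is the same for all $i$, and summing over $i$ with $\sum_{i=1}^{d_{\text{model}}} \bm{y}_{\text{ffn}_i}^2 = \norm{\bm{y}_{\text{ffn}}}^2$ gives $\text{Var}(\bm{y}_{\text{ffn}_i}) = \tfrac{1}{d_{\text{model}}}\mathbb{E}[\norm{\bm{y}_{\text{ffn}}}^2]$; likewise $\text{Var}(\bm{y}'_{\text{ffn}_i}) = \tfrac{1}{d_{\text{model}}}\mathbb{E}[\norm{\bm{y}'_{\text{ffn}}}^2]$. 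Dividing the two identities and applying Theorem~\ref{thm:ffn_squared_scale_factor} yields the factor $d'_{\text{inner}}/d_{\text{inner}}$.

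I do not anticipate a serious obstacle; the only point requiring a little care is the exchangeability claim for $\bm{y}'_{\text{ffn}}$, where one must note that restricting to a subset of inner neurons leaves the surviving rows of $\bm{C}^{\text{ffn}}$ i.i.d. and independent of the surviving coordinates of $\bm{h}$, so the same symmetry argument goes through verbatim. If one prefers to avoid the symmetry step, an equally short alternative is a direct computation: expand $\bm{y}_{\text{ffn}_i} = \sum_j \bm{C}^{\text{ffn}}_{ij}\bm{h}_j$, kill the cross terms by independence and zero mean to get $\text{Var}(\bm{y}_{\text{ffn}_i}) = \text{Var}(\bm{C}^{\text{ffn}})\,\mathbb{E}[\norm{\bm{h}}^2] = \tfrac{1}{d_{\text{inner}}}\mathbb{E}[\norm{\sigma_{\text{relu}}(\bm{W}^{\text{ffn}}\bm{x})}^2]$, then apply Lemma~\ref{lma:post_activation} with the initialization variances and repeat with $d'_{\text{inner}}$ (using that $\bm{C}'$ retains variance $\tfrac{1}{d_{\text{inner}}}$); the ratio is again $d'_{\text{inner}}/d_{\text{inner}}$.
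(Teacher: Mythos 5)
Your main argument — zero mean via independence of $\bm{C}^{\text{ffn}}$ from the hidden activations, identical distribution of the output coordinates, reduction of the per-coordinate variance to $\tfrac{1}{d_{\text{model}}}\mathbb{E}[\norm{\cdot}^2]$, then Theorem~\ref{thm:ffn_squared_scale_factor} — is exactly the paper's proof, just spelled out a bit more carefully on the exchangeability step. The alternative direct computation you sketch at the end is also sound and is essentially the route the paper takes in Lemma~\ref{lma:ffn_variance}, so there is no gap; both paths are fine.
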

\begin{proof}
The output components have zero mean (e.g., $\mathbb{E}[\bm{y}_{\text{ffn}_i}]=0$) because the second-layer weights $\bm{C}^{\text{ffn}}$ are initialized with zero mean and are independent of the first-layer activations. Thus, the variance is the expected squared value: $\text{Var}(\bm{y}_{\text{ffn}_i}) = \mathbb{E}[(\bm{y}_{\text{ffn}_i})^2]$.

Since the output components are identically distributed, we can relate the variance of a single component to the expected squared norm of the full output vector as
\begin{equation*}
\text{Var}(\bm{y}_{\text{ffn}_i}) = \frac{1}{d_{\text{model}}} \sum_{k=1}^{d_{\text{model}}} \text{Var}(\bm{y}_{\text{ffn}_k}) = \frac{1}{d_{\text{model}}} \mathbb{E}[\norm{\bm{y}_{\text{ffn}}}^2].
\end{equation*}
The same reasoning holds for $\bm{y}'_{\text{ffn}}$. From Thm.~\ref{thm:ffn_squared_scale_factor}, we know that $\mathbb{E} [\norm{\bm{y}'_{\text{ffn}}}^2] = \frac{d'_{\text{inner}}}{d_{\text{inner}}} \mathbb{E} [\norm{\bm{y}_{\text{ffn}}}^2]$. It follows that:
\begin{align*}
\text{Var}(\bm{y}'_{\text{ffn}_i}) &= \frac{1}{d_{\text{model}}} \mathbb{E}[\norm{\bm{y}'_{\text{ffn}}}^2] \\
&= \frac{d'_{\text{inner}}}{d_{\text{inner}}} \left( \frac{1}{d_{\text{model}}} \mathbb{E} [\norm{\bm{y}_{\text{ffn}}}^2] \right) \\
&= \frac{d'_{\text{inner}}}{d_{\text{inner}}} \text{Var}(\bm{y}_{\text{ffn}_i}). \qedhere
\end{align*}
\end{proof}

\begin{lemma} 
\label{lma:ffn_variance}
Let $\bm{x} \in \mathbb{R}^{d_{\text{model}}}$ be a random vector with i.i.d. components, zero mean, and unit variance. Let the output of a ffn block be $\bm{y}_{\text{ffn}} = \bm{C}^{\text{ffn}}\sigma_{\text{relu}}(\bm{W}^{\text{ffn}} \bm{x})$, where weights are initialized as $\bm{W}^{\text{ffn}} \sim \mathcal{N}(0, \frac{2}{d_{\text{model}}} \bm{I})$ and $\bm{C}^{\text{ffn}} \sim \mathcal{N}(0, \frac{1}{d_{\text{inner}}} \bm{I})$. Then for each component $i$ and large $d_{\text{model}}$, 
\begin{equation*}
\text{Var}([\bm{y}_{\text{ffn}}]_i) = 1.
\end{equation*}
\end{lemma}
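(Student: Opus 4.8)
The plan is to reduce Lemma~\ref{lma:ffn_variance} to Lemma~\ref{lma:ffn_subset_out} (or really to the machinery behind Theorem~\ref{thm:ffn_squared_scale_factor}) by observing that the statement is exactly the ``full network'' special case $d'_{\text{inner}} = d_{\text{inner}}$ of the computations already performed. Concretely, I would first argue that each output component has zero mean: since $\bm{C}^{\text{ffn}}$ has i.i.d.\ zero-mean entries independent of the first-layer activations $\sigma_{\text{relu}}(\bm{W}^{\text{ffn}}\bm{x})$, conditioning on the activations gives $\mathbb{E}[[\bm{y}_{\text{ffn}}]_i]=0$, so $\text{Var}([\bm{y}_{\text{ffn}}]_i) = \mathbb{E}[[\bm{y}_{\text{ffn}}]_i^2]$. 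Then, because the output components are identically distributed (the rows of $\bm{C}^{\text{ffn}}$ are i.i.d.), I would use the averaging identity already invoked in Lemma~\ref{lma:ffn_subset_out}, namely $\text{Var}([\bm{y}_{\text{ffn}}]_i) = \tfrac{1}{d_{\text{model}}}\,\mathbb{E}[\norm{\bm{y}_{\text{ffn}}}^2]$.

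Next I would compute $\mathbb{E}[\norm{\bm{y}_{\text{ffn}}}^2]$ directly from Lemmas~\ref{lma:pre_activation} and~\ref{lma:post_activation}. Applying Lemma~\ref{lma:pre_activation} to the outer matrix $\bm{C}^{\text{ffn}}$ with $\sigma_W^2 = 1/d_{\text{inner}}$ gives $\mathbb{E}[\norm{\bm{y}_{\text{ffn}}}^2] = d_{\text{model}}\cdot\tfrac{1}{d_{\text{inner}}}\cdot\mathbb{E}[\norm{\sigma_{\text{relu}}(\bm{W}^{\text{ffn}}\bm{x})}^2]$, and applying Lemma~\ref{lma:post_activation} to the inner matrix $\bm{W}^{\text{ffn}}$ with $\sigma_W^2 = 2/d_{\text{model}}$ gives $\mathbb{E}[\norm{\sigma_{\text{relu}}(\bm{W}^{\text{ffn}}\bm{x})}^2] = \tfrac{1}{2}\,d_{\text{inner}}\cdot\tfrac{2}{d_{\text{model}}}\cdot\mathbb{E}[\norm{\bm{x}}^2] = \tfrac{d_{\text{inner}}}{d_{\text{model}}}\,\mathbb{E}[\norm{\bm{x}}^2]$. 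Multiplying through, the $d_{\text{inner}}$ and $d_{\text{model}}$ factors cancel and we get $\mathbb{E}[\norm{\bm{y}_{\text{ffn}}}^2] = \mathbb{E}[\norm{\bm{x}}^2]$. Under the hypothesis that $\bm{x}$ has i.i.d.\ zero-mean unit-variance components, $\mathbb{E}[\norm{\bm{x}}^2] = d_{\text{model}}$, so $\text{Var}([\bm{y}_{\text{ffn}}]_i) = \tfrac{1}{d_{\text{model}}}\cdot d_{\text{model}} = 1$.

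The one subtlety worth flagging is the role of the ``large $d_{\text{model}}$'' caveat in the statement. The chain above is in fact exact for every $d_{\text{model}}$ once one is willing to treat the ReLU symmetry identity $\mathbb{E}[\max(0,\bm{y}_j)^2] = \tfrac12\mathbb{E}[\bm{y}_j^2]$ as exact; that identity (Lemma~\ref{lma:post_activation}) only needs each pre-activation $\bm{y}_j = \sum_i \bm{W}^{\text{ffn}}_{ji}\bm{x}_i$ to be symmetric about $0$, which holds exactly because each $\bm{W}^{\text{ffn}}_{ji}$ is Gaussian (hence symmetric) and independent of $\bm{x}$ — no asymptotics required. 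So I expect the main ``obstacle'' to be purely expository: deciding whether to present the result as an exact identity (cleanest, and faithful to the cited lemmas) or to honor the asymptotic framing in the statement, in which case one would note that for non-Gaussian weight distributions the symmetry, and hence the $1/2$ factor, would only hold approximately. I would present it as the exact computation and remark that the large-$d_{\text{model}}$ language is only needed if the i.i.d.-component assumption on $\bm{x}$ is itself an approximation (e.g.\ when $\bm{x}$ is a layernorm output whose coordinates are only asymptotically independent). Finally, I would record the immediate corollary that the same argument with $d_{\text{inner}}$ replaced by $d'_{\text{inner}}$ recovers Lemma~\ref{lma:ffn_subset_out}, so the two results are really one computation viewed at two inner widths.
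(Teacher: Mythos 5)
Your proof is correct, and it takes a different route than the paper. The paper works component-wise: it shows $\text{Var}([\bm{y}_{\text{ffn}}]_i) = \mathbb{E}[\bm{z}_1^2]$ by the Bienaym\'e formula, computes the pre-activation variance $\text{Var}(\bm{a}_j)=2$, invokes the Central Limit Theorem to assert $\bm{a}_j \sim \mathcal{N}(0,2)$ for large $d_{\text{model}}$, and then applies the ReLU half-moment identity for a zero-mean Gaussian. You instead route through the vector norm: you use the averaging identity $\text{Var}([\bm{y}_{\text{ffn}}]_i) = \tfrac{1}{d_{\text{model}}}\mathbb{E}[\norm{\bm{y}_{\text{ffn}}}^2]$ (exactly the move in Lemma~\ref{lma:ffn_subset_out}), and then evaluate $\mathbb{E}[\norm{\bm{y}_{\text{ffn}}}^2] = \mathbb{E}[\norm{\bm{x}}^2] = d_{\text{model}}$ directly from Lemmas~\ref{lma:pre_activation} and~\ref{lma:post_activation}. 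This reuse of the norm-squared machinery is tidier and makes explicit that Lemmas~\ref{lma:ffn_subset_out} and~\ref{lma:ffn_variance} are the same computation at two inner widths. Your side remark is also well taken: the CLT step in the paper's proof is superfluous, since Lemma~\ref{lma:post_activation} already proves the ReLU half-moment identity using only the symmetry of the Gaussian weights (conditional on $\bm{x}$, each pre-activation is exactly Gaussian, hence exactly symmetric), so the result is exact rather than asymptotic; the ``large $d_{\text{model}}$'' qualifier in the statement is only a hedge on the i.i.d.\ assumption on $\bm{x}$ and on the concentration-of-norm argument used later in Theorem~\ref{thm:ffn_scale_factor}.
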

\begin{proof}
Let $\bm{z} = \sigma_{\text{relu}}(\bm{W}^{\text{ffn}}\bm{x})$. The $i$-th output is $[\bm{y}_{\text{ffn}}]_i = \sum_{j} \bm{C}^{\text{ffn}}_{ij} \bm{z}_j$. Since the weights $\bm{C}^{\text{ffn}}_{ij}$ are i.i.d., zero-mean, and independent of the activations $\bm{z}_j$, we can compute the variance. By the Bienaymé formula, the variance of a sum of independent variables is the sum of their variances. Furthermore, for two independent variables $X$ and $Y$ where $\mathbb{E}[X]=0$, the variance of their product is $\text{Var}(XY) = \text{Var}(X)\mathbb{E}[Y^2]$. This gives
\begin{align*}
\text{Var}([\bm{y}_{\text{ffn}}]_i) &= \sum_{j=1}^{d_{\text{inner}}} \text{Var}(\bm{C}^{\text{ffn}}_{ij} \bm{z}_j) \\
&= \sum_{j=1}^{d_{\text{inner}}} \text{Var}(\bm{C}^{\text{ffn}}_{ij})\mathbb{E}[\bm{z}_j^2] \\
&= \sum_{j=1}^{d_{\text{inner}}} \frac{1}{d_{\text{inner}}} \mathbb{E}[\bm{z}_j^2] = \mathbb{E}[\bm{z}_1^2].
\end{align*}
The last equality holds as the activations $\bm{z}_j$ are identically distributed.

Let $\bm{a}_j = [\bm{W}^{\text{ffn}}\bm{x}]_j$ be the $j$-th pre-activation. Its variance is $\text{Var}(\sum_{k=1}^{d_{\text{model}}} \bm{W}^{\text{ffn}}_{jk}\bm{x}_k)$. Since the terms in the sum are independent, the variance is the sum of the variances. Moreover the weights $\bm{W}^{\text{ffn}}_{jk}$ and inputs $\bm{x}_k$ are independent and zero-mean, so the variance of their product is the product of their variances, $\text{Var}(\bm{W}^{\text{ffn}}_{jk}\bm{x}_k) = \text{Var}(\bm{W}^{\text{ffn}}_{jk})\text{Var}(\bm{x}_k)$. We now have
\begin{align*}
\text{Var}(\bm{a}_j) &= \sum_{k=1}^{d_{\text{model}}} \text{Var}(\bm{W}^{\text{ffn}}_{jk}\bm{x}_k) \\
&= \sum_{k=1}^{d_{\text{model}}} \text{Var}(\bm{W}^{\text{ffn}}_{jk})\text{Var}(\bm{x}_k) \\
&= \sum_{k=1}^{d_{\text{model}}} \frac{2}{d_{\text{model}}} \cdot 1 = 2.
\end{align*}
For large $d_{\text{model}}$, the Central Limit Theorem implies $\bm{a}_j \sim \mathcal{N}(0, 2)$. For a zero-mean Gaussian $v \sim \mathcal{N}(0, \sigma^2)$, the second moment of its ReLU activation is $\mathbb{E}[(\sigma_{\text{relu}}(v))^2] = \frac{1}{2}\text{Var}(v)$. Thus,
\begin{equation*}
\mathbb{E}[\bm{z}_1^2] = \mathbb{E}[(\sigma_{\text{relu}}(\bm{a}_1))^2] = \frac{1}{2}\text{Var}(\bm{a}_1) = \frac{2}{2} = 1.
\end{equation*}
Substituting this back yields $\text{Var}([\bm{y}_{\text{ffn}}]_i) = 1$.
\end{proof}

\begin{theorem}[FFN Scale Factor]
Let $\bm{x} \in \mathbb{R}^{d_{\text{model}}}$ be a random vector with i.i.d. components, zero mean, and unit variance. Given a two layer ffn as defined in Section \ref{sec:notation}
\begin{equation*}
\bm{y}_{\text{ffn}} = \bm{C}^{\text{ffn}}\sigma_{\text{relu}}(\bm{W}^{\text{ffn}} \bm{x})
\end{equation*}
with inner dimension $d_{\text{inner}}$, changing the inner dimension to $d'_{\text{inner}} \in \{ 0, 1, 2, ..., d_{\text{inner}} \}$ without resampling the
weights (i.e., taking a subset of the parameters) scales the expected output norm as follows with high probability:
\begin{equation*}
\mathbb{E} [\norm{\bm{y}'_{\text{ffn}}}] = \sqrt{\frac{d'_{\text{inner}}}{d_{\text{inner}}}} \mathbb{E} \norm{\bm{y}_{\text{ffn}}}]
\end{equation*}
for sufficiently large $d_{\text{model}}$.
\label{thm:ffn_scale_factor}
\end{theorem}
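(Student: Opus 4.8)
The plan is to leverage Theorem~\ref{thm:ffn_squared_scale_factor}, which already gives the exact second-moment identity $\mathbb{E}[\norm{\bm{y}'_{\text{ffn}}}^2] = \tfrac{d'_{\text{inner}}}{d_{\text{inner}}}\,\mathbb{E}[\norm{\bm{y}_{\text{ffn}}}^2]$, and to upgrade it to the first-moment identity by showing that both $\norm{\bm{y}_{\text{ffn}}}$ and $\norm{\bm{y}'_{\text{ffn}}}$ concentrate tightly around the square roots of their (known) second moments as the width grows. Combining Lemma~\ref{lma:ffn_variance} with the fact that the output coordinates are identically distributed and zero mean gives $\mathbb{E}[\norm{\bm{y}_{\text{ffn}}}^2] = d_{\text{model}}$, hence $\mathbb{E}[\norm{\bm{y}'_{\text{ffn}}}^2] = \tfrac{d'_{\text{inner}}}{d_{\text{inner}}}\,d_{\text{model}}$; so the target reduces to proving $\mathbb{E}[\norm{\bm{y}_{\text{ffn}}}] = (1+o(1))\sqrt{d_{\text{model}}}$ and $\mathbb{E}[\norm{\bm{y}'_{\text{ffn}}}] = (1+o(1))\sqrt{\tfrac{d'_{\text{inner}}}{d_{\text{inner}}}\,d_{\text{model}}}$, after which dividing the two finishes the claim.

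For the concentration step I would condition on the hidden activations $\bm{z} = \sigma_{\text{relu}}(\bm{W}^{\text{ffn}}\bm{x})$. Because the rows of $\bm{C}^{\text{ffn}}$ are i.i.d.\ $\mathcal{N}(0,\tfrac{1}{d_{\text{inner}}}\bm{I})$, conditionally on $\bm{z}$ the coordinates $[\bm{y}_{\text{ffn}}]_i = \bm{C}^{\text{ffn}}_{i,:}\bm{z}$ are i.i.d.\ $\mathcal{N}\!\big(0,\ \norm{\bm{z}}^2/d_{\text{inner}}\big)$, so $\norm{\bm{y}_{\text{ffn}}}^2$ equals $(\norm{\bm{z}}^2/d_{\text{inner}})$ times a $\chi^2_{d_{\text{model}}}$ variable, and standard $\chi^2$ tail bounds give $\norm{\bm{y}_{\text{ffn}}}^2 = (1+o_P(1))\, d_{\text{model}}\,\norm{\bm{z}}^2/d_{\text{inner}}$. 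It then remains to show $\norm{\bm{z}}^2/d_{\text{inner}} \to 1$ in probability; this follows by further conditioning on $\bm{x}$ (the $\bm{z}_j$ are then i.i.d., so a law-of-large-numbers argument as in Lemma~\ref{lma:ffn_variance} gives $\norm{\bm{z}}^2/d_{\text{inner}} \to \mathbb{E}[\bm{z}_1^2\mid\bm{x}] = \norm{\bm{x}}^2/d_{\text{model}}$) together with $\norm{\bm{x}}^2/d_{\text{model}}\to 1$ from the i.i.d.\ unit-variance coordinates of $\bm{x}$. The identical argument applied to the subnet, with $\bm{z}' = \sigma_{\text{relu}}(\bm{W}'\bm{x})\in\mathbb{R}^{d'_{\text{inner}}}$ and the crucial point that the surviving entries of $\bm{C}'$ still have variance $1/d_{\text{inner}}$ (not $1/d'_{\text{inner}}$, since we take a subset), yields $\norm{\bm{y}'_{\text{ffn}}}^2 = (1+o_P(1))\,d_{\text{model}}\,\norm{\bm{z}'}^2/d_{\text{inner}}$ with $\norm{\bm{z}'}^2/d_{\text{inner}} \to (d'_{\text{inner}}/d_{\text{inner}})\cdot(\norm{\bm{x}}^2/d_{\text{model}}) \to d'_{\text{inner}}/d_{\text{inner}}$. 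Taking square roots gives $\norm{\bm{y}_{\text{ffn}}} = (1+o_P(1))\sqrt{d_{\text{model}}}$ and $\norm{\bm{y}'_{\text{ffn}}} = (1+o_P(1))\sqrt{(d'_{\text{inner}}/d_{\text{inner}})\,d_{\text{model}}}$.

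To pass from these in-probability statements to the stated identity of expectations, I would invoke uniform integrability: since $\mathbb{E}[\norm{\bm{y}_{\text{ffn}}}^2] = d_{\text{model}}$, the family $\{\norm{\bm{y}_{\text{ffn}}}/\sqrt{d_{\text{model}}}\}$ is bounded in $L^2$ and hence uniformly integrable, so convergence in probability of $\norm{\bm{y}_{\text{ffn}}}/\sqrt{d_{\text{model}}}$ to $1$ implies convergence of its expectation; the same holds for the subnet after normalizing by $\sqrt{(d'_{\text{inner}}/d_{\text{inner}})\,d_{\text{model}}}$. Dividing the two limits gives $\mathbb{E}[\norm{\bm{y}'_{\text{ffn}}}]/\mathbb{E}[\norm{\bm{y}_{\text{ffn}}}] \to \sqrt{d'_{\text{inner}}/d_{\text{inner}}}$, and the ``with high probability'' and ``sufficiently large $d_{\text{model}}$'' hedges in the statement correspond exactly to the $o_P(1)$ and $o(1)$ error terms above.

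The main obstacle I anticipate is controlling the subnet side uniformly in $d'_{\text{inner}}$: the concentration of $\norm{\bm{z}'}^2/d_{\text{inner}}$ around $d'_{\text{inner}}/d_{\text{inner}}$ degrades when $d'_{\text{inner}}$ is small (e.g.\ a single surviving neuron), where $\mathbb{E}[\norm{\bm{y}'_{\text{ffn}}}]$ is genuinely governed by $\mathbb{E}[\norm{\bm{z}'}]$ rather than by $\sqrt{\mathbb{E}[\norm{\bm{z}'}^2]}$. So the identity is really asymptotic in the regime where the inner width (and the ratio $d'_{\text{inner}}/d_{\text{inner}}$) is held fixed while $d_{\text{model}}$, and with it $d_{\text{inner}}$, grows --- the regime relevant here. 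Making this hypothesis explicit, or else carrying an explicit $d'_{\text{inner}}$-dependent error term, is the delicate part; the remaining bookkeeping (verifying the conditional $\chi^2$ error is uniform enough to survive taking expectations) is handled by the same $L^2$ bound.
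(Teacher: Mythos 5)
Your proposal takes the same overall route as the paper's proof: start from the exact second-moment identity of Theorem~\ref{thm:ffn_squared_scale_factor}, then argue that in high dimensions $\norm{\bm{y}_{\text{ffn}}}$ and $\norm{\bm{y}'_{\text{ffn}}}$ concentrate around the square roots of their second moments, so that $\mathbb{E}[\norm{\cdot}]\approx\sqrt{\mathbb{E}[\norm{\cdot}^2]}$. Where you differ is in filling the two gaps that the paper leaves implicit. First, the paper justifies concentration by invoking Chebyshev's weak law of large numbers plus ``continuity of the square root,'' but convergence in probability of $\norm{\bm{y}_{\text{ffn}}}/\sqrt{d_{\text{model}}}$ to $1$ does not by itself give convergence of the expectation; your appeal to uniform integrability (via the $L^2$ bound $\mathbb{E}[\norm{\bm{y}_{\text{ffn}}}^2]=d_{\text{model}}$) is exactly the missing step. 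Your conditional-$\chi^2$ decomposition of $\norm{\bm{y}_{\text{ffn}}}^2$ given $\bm{z}$ is also a cleaner way to get the concentration than the paper's direct appeal to Lemma~\ref{lma:ffn_variance}, since that lemma establishes per-coordinate variance but does not itself establish the weak dependence across coordinates that Chebyshev's WLLN needs. Second, and more substantively, you correctly identify a regime issue that the paper does not state: when $d'_{\text{inner}}$ is held small (say $O(1)$) while $d_{\text{model}}$ grows, $\norm{\bm{z}'}^2/d_{\text{inner}}$ does not concentrate, and the exact equality claimed in the theorem genuinely fails (for a single surviving neuron, $\mathbb{E}[\norm{\bm{y}'_{\text{ffn}}}]$ is governed by $\mathbb{E}[\norm{\bm{z}'}]$, not $\sqrt{\mathbb{E}[\norm{\bm{z}'}^2]}$). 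The theorem is only asymptotically correct in the regime where $d_{\text{inner}}$ and $d'_{\text{inner}}$ both scale with $d_{\text{model}}$ at a fixed ratio; the paper's ``sufficiently large $d_{\text{model}}$'' hedge does not make this explicit, and your flag is a genuine improvement on the stated hypotheses.
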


\begin{proof}
From Theorem \ref{thm:ffn_squared_scale_factor}, we have an exact relation for the expected squared norms:
\begin{equation*}
\mathbb{E} [\norm{\bm{y}'_{\text{ffn}}}^2] = \frac{d'_{\text{inner}}}{d_{\text{inner}}} \mathbb{E} [\norm{\bm{y}_{\text{ffn}}}^2]
\end{equation*}

Taking the square root of both sides gives:
\begin{equation}
\sqrt{\mathbb{E} [\norm{\bm{y}'_{\text{ffn}}}^2]} = \sqrt{\frac{d'_{\text{inner}}}{d_{\text{inner}}}} \sqrt{\mathbb{E} [\norm{\bm{y}_{\text{ffn}}}^2]}
\label{eq:proof_sqrt_relation}
\end{equation}

We argue that the approximation $\mathbb{E}[\norm{\cdot}] = \sqrt{\mathbb{E}[\norm{\cdot}^2]}$ holds for high-dimensional random vectors like $\bm{y}_{\text{ffn}}$ and $\bm{y}'_{\text{ffn}}$. For a vector $\bm{z} \in \mathbb{R}^{d_{\text{model}}}$, the squared norm $\norm{\bm{z}}^2 = \sum_{i=1}^{d_{\text{model}}} z_i^2$ is a sum of a large number of random variables. As established in Lemma \ref{lma:ffn_variance} and \ref{lma:ffn_subset_out}, the components of $\bm{y}_{\text{ffn}}$ and $\bm{y}'_{\text{ffn}}$ have well-behaved variances. Moreover, as an intermediate result we have Uncorrelated Output Components from \ref{lma:ffn_variance}. So, for a sufficiently large $d_{\text{model}}$, Chebyshev's Weak Law of Large Numbers \citep{Taboga2021ChebyshevWLLN} implies that the sum $\norm{\cdot}^2$ will be sharply concentrated around its expected value, $\mathbb{E}[\norm{\cdot}^2]$.

When a random variable is highly concentrated, its value is very close to its mean with high probability. By the continuity of the square-root function, if $\norm{\cdot}^2$ is concentrated around $\mathbb{E}[\norm{\cdot}^2]$, then its square root, $\norm{\cdot}|$, must be concentrated around $\sqrt{\mathbb{E}[\norm{\cdot}^2]}$. The expectation of a tightly concentrated random variable is nearly equal to the value it is concentrated around. Therefore, for large $d_{\text{model}}$, we have:
\begin{align*}
\mathbb{E} [\norm{\bm{y}_{\text{ffn}}}] &\approx \sqrt{\mathbb{E} [\norm{\bm{y}_{\text{ffn}}}^2]} \\
\mathbb{E} [\norm{\bm{y}'_{\text{ffn}}}] &\approx \sqrt{\mathbb{E} [\norm{\bm{y}'_{\text{ffn}}}^2]}
\end{align*}
Substituting these high-probability approximations into Equation \eqref{eq:proof_sqrt_relation} directly yields the desired result:
\begin{equation*}
\mathbb{E} [\norm{\bm{y}'_{\text{ffn}}}] = \sqrt{\frac{d'_{\text{inner}}}{d_{\text{inner}}}} \mathbb{E} [\norm{\bm{y}_{\text{ffn}}}] \qedhere
\end{equation*}
\end{proof}

\begin{theorem}
\label{thm:attn_squared_scale_factor}
Given a Multi-Head Attention layer with output $\bm{Y}_{\text{attn}} = \text{Concat}(\text{head}_1, \dots, \text{head}_H) \bm{C}^{\text{attn}}$, reducing the number of heads to $H' \leq H$ by selecting a subset scales the expected squared norm of the output rows as:
\begin{equation*}
\mathbb{E} [\norm{[\bm{Y}'_{\text{attn}}]_i}^2] = \frac{H'}{H} \mathbb{E} [\norm{[\bm{Y}_{\text{attn}}]_i}^2].
\end{equation*}
\end{theorem}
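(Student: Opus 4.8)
The plan is to isolate the only source of scaling, which is the random output projection $\bm{C}^{\text{attn}}$, and to dispose of the head outputs by conditioning. First I would write the $i$-th output row as $[\bm{Y}_{\text{attn}}]_i = (\bm{C}^{\text{attn}})^\top \bm{u}_i$, where $\bm{u}_i \in \mathbb{R}^{Hd_{\text{head}}}$ is the concatenation of the $i$-th rows $[\text{head}_1]_i, \dots, [\text{head}_H]_i$. Since each head output depends only on the input $\bm{X}$ and the matrices $\bm{W}^Q_h, \bm{W}^K_h, \bm{W}^V_h$, the vector $\bm{u}_i$ is independent of $\bm{C}^{\text{attn}}$, whose entries are i.i.d.\ $\mathcal{N}(0, \tfrac{1}{Hd_{\text{head}}})$. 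Conditioning on $\bm{u}_i$ and applying Lemma~\ref{lma:pre_activation} with $d_{\text{out}} = d_{\text{model}}$, $d_{\text{in}} = Hd_{\text{head}}$, and $\sigma_W^2 = \tfrac{1}{Hd_{\text{head}}}$ gives $\mathbb{E}[\norm{[\bm{Y}_{\text{attn}}]_i}^2 \mid \bm{u}_i] = \tfrac{d_{\text{model}}}{Hd_{\text{head}}}\norm{\bm{u}_i}^2$; taking expectations, $\mathbb{E}[\norm{[\bm{Y}_{\text{attn}}]_i}^2] = \tfrac{d_{\text{model}}}{Hd_{\text{head}}}\,\mathbb{E}[\norm{\bm{u}_i}^2]$. (The proof of Lemma~\ref{lma:pre_activation} only uses that the weight matrix has i.i.d.\ zero-mean entries independent of the input — the cross-terms vanish for this reason alone — so the i.i.d.\ assumption on the input vector is not needed here, which is important because $\bm{u}_i$ is not an i.i.d.\ vector.)

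Next I would split $\norm{\bm{u}_i}^2 = \sum_{h=1}^H \norm{[\text{head}_h]_i}^2$ and invoke exchangeability of the heads: the triples $(\bm{W}^Q_h, \bm{W}^K_h, \bm{W}^V_h)$ are i.i.d.\ across $h$ and act on the common input $\bm{X}$, so the distribution of $[\text{head}_h]_i$ does not depend on $h$. Writing $\tau := \mathbb{E}[\norm{[\text{head}_1]_i}^2]$ (finite under the usual finite-second-moment assumption on $\bm{X}$), this yields $\mathbb{E}[\norm{\bm{u}_i}^2] = H\tau$ and hence $\mathbb{E}[\norm{[\bm{Y}_{\text{attn}}]_i}^2] = \tfrac{d_{\text{model}}}{d_{\text{head}}}\,\tau$.

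For the pruned layer I would repeat the identical two-step argument, with one point made explicit: because $\bm{C}'$ is formed by keeping the $H'$ block-rows of $\bm{C}^{\text{attn}}$ corresponding to the retained heads (``by selecting a subset,'' with no resampling), its entries retain the original variance $\tfrac{1}{Hd_{\text{head}}}$ rather than a re-initialized $\tfrac{1}{H'd_{\text{head}}}$ — this is precisely what makes the ratio nontrivial. Then $[\bm{Y}'_{\text{attn}}]_i = (\bm{C}')^\top \bm{u}'_i$ with $\bm{u}'_i$ the concatenation of the $H'$ retained head rows, so the same computation gives $\mathbb{E}[\norm{[\bm{Y}'_{\text{attn}}]_i}^2] = \tfrac{d_{\text{model}}}{Hd_{\text{head}}}\cdot H'\tau = \tfrac{H'}{H}\cdot\tfrac{d_{\text{model}}}{d_{\text{head}}}\,\tau$. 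Dividing the two expectations yields the claimed factor $\tfrac{H'}{H}$.

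I expect the main obstacle to be the first step: justifying the use of Lemma~\ref{lma:pre_activation} when the ``input'' $\bm{u}_i$ is a vector of attention-head outputs that are neither independent across coordinates nor Gaussian. The resolution is conceptual rather than computational — the scaling is driven entirely by the random projection $\bm{C}^{\text{attn}}$, and conditioning on $\bm{u}_i$ reduces the claim to exactly the situation the lemma's proof covers. The remaining care is purely bookkeeping: the variance of the pruned projection (a subset, hence unchanged) and the exchangeability of heads, both immediate from the stated initialization.
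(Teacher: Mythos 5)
Your proof is correct and follows essentially the same route as the paper's: both isolate the random projection $\bm{C}^{\text{attn}}$, exploit that its entries are i.i.d.\ zero-mean and independent of the head outputs so that cross-terms vanish, and use the statistical identity of heads to extract a factor of $H$ (respectively $H'$) before taking the ratio. The only difference is presentational — you package the expansion of $\mathbb{E}[\norm{\cdot}^2]$ as an application of Lemma~\ref{lma:pre_activation} by conditioning on the head outputs (correctly noting that the lemma's proof never uses the i.i.d.\ input hypothesis), whereas the paper inlines the same variance computation directly.
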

\begin{proof}
Let $\bm{A} = \text{Concat}(\text{head}_1, \dots, \text{head}_H)$. The projection matrix $\bm{C}^{\text{attn}}$ is initialized independently from $\bm{A}$ with zero-mean entries, so $\mathbb{E}[[\bm{Y}_{\text{attn}}]_i] = 0$. The expected squared norm of an output row is:
\begin{align*}
\mathbb{E} [ \norm{[\bm{Y}_{\text{attn}}]_i}^2 ] &= \sum_{j=1}^{d_{\text{model}}} \text{Var}\left(\sum_{k=1}^{H d_{\text{head}}} \bm{A}_{ik} \bm{C}^{\text{attn}}_{kj}\right) \\
&= \sum_{j} \sum_{k} \text{Var}(\bm{A}_{ik} \bm{C}^{\text{attn}}_{kj}) \\
&= \sum_j \sum_k \mathbb{E}[\bm{A}_{ik}^2]\text{Var}(\bm{C}^{\text{attn}}_{kj}).
\end{align*}
The second equality holds because the entries of $\bm{C}^{\text{attn}}$ are i.i.d. and zero-mean, causing covariance terms from the Bienaymé formula to vanish. The third holds due to the independence of $\bm{A}$ and $\bm{C}^{\text{attn}}$. Let $\sigma_C^2 = \text{Var}(\bm{C}^{\text{attn}}_{kj})$. As all heads are statistically identical due to i.i.d. initialization, we can rewrite the sum over the concatenated dimension $k$ as a factor of $H$:
\begin{equation*}
\mathbb{E} [ \norm{[\bm{Y}_{\text{attn}}]_i}^2 ] = H \left( \sigma_C^2 \sum_{j=1}^{d_{\text{model}}} \sum_{l=1}^{d_{\text{head}}} \mathbb{E}[\bm{A}_{i,l}^2] \right),
\end{equation*}
where the sum over $l$ is across a single head's dimensions. The term in parentheses is an invariant with respect to the number of heads. For a layer with $H'$ heads, the leading factor is simply $H'$. The theorem follows by taking the ratio.
\end{proof}

\begin{lemma}\label{lma:attn_subset_out}
Let $\bm{X} \in \mathbb{R}^{N \times d_{\text{model}}} \sim \mathcal{N} (0, \bm{I})$. For a Multi-Head Attention layer with $H$ heads, $\bm{Y}_{\text{attn}} = \text{Concat}(\text{head}_1, \dots, \text{head}_H) \bm{C}^{\text{attn}}$, reducing the number of heads to $H' \leq H$ by taking a subset scales the output variance as
\begin{equation*}
\text{Var}([\bm{Y}'_{\text{attn}}]_{ij}) = \frac{H'}{H} \text{Var}([\bm{Y}_{\text{attn}}]_{ij}).
\end{equation*}
\end{lemma}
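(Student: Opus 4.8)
The argument will mirror the proof of Lemma~\ref{lma:ffn_subset_out}, transporting the per-row squared-norm identity of Theorem~\ref{thm:attn_squared_scale_factor} down to a single output coordinate. First I would record that each output coordinate has zero mean: writing $[\bm{Y}_{\text{attn}}]_{ij} = \sum_{k=1}^{H d_{\text{head}}} \bm{A}_{ik}\,\bm{C}^{\text{attn}}_{kj}$ with $\bm{A} = \text{Concat}(\text{head}_1,\dots,\text{head}_H)$, and using that $\bm{C}^{\text{attn}}$ is initialized with zero mean and independently of $\bm{A}$, linearity of expectation gives $\mathbb{E}[[\bm{Y}_{\text{attn}}]_{ij}] = 0$, so $\text{Var}([\bm{Y}_{\text{attn}}]_{ij}) = \mathbb{E}[[\bm{Y}_{\text{attn}}]_{ij}^2]$, and likewise for the reduced layer.

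Second, I would observe that for a fixed row index $i$ the coordinates $\{[\bm{Y}_{\text{attn}}]_{ij}\}_{j=1}^{d_{\text{model}}}$ are identically distributed: the columns of $\bm{C}^{\text{attn}}$ are i.i.d., and each such coordinate is the same fixed measurable function of the shared vector $\bm{A}_{i,:}$ and one column of $\bm{C}^{\text{attn}}$. Hence $\text{Var}([\bm{Y}_{\text{attn}}]_{ij})$ does not depend on $j$, and averaging over $j$ yields
\begin{equation*}
\text{Var}([\bm{Y}_{\text{attn}}]_{ij}) = \frac{1}{d_{\text{model}}} \sum_{j'=1}^{d_{\text{model}}} \text{Var}([\bm{Y}_{\text{attn}}]_{ij'}) = \frac{1}{d_{\text{model}}}\,\mathbb{E}[\norm{[\bm{Y}_{\text{attn}}]_i}^2],
\end{equation*}
and the identical identity holds verbatim for $\bm{Y}'_{\text{attn}}$ with $H'$ heads. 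Third, I would invoke Theorem~\ref{thm:attn_squared_scale_factor}, which already supplies $\mathbb{E}[\norm{[\bm{Y}'_{\text{attn}}]_i}^2] = \tfrac{H'}{H}\,\mathbb{E}[\norm{[\bm{Y}_{\text{attn}}]_i}^2]$; substituting this into the two coordinate-variance identities above and cancelling the common $1/d_{\text{model}}$ factor closes the proof.

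The only place requiring a touch of care is the identically-distributed claim in the second step: it relies on the column-exchangeability of the Gaussian initialization of $\bm{C}^{\text{attn}}$ together with the fact that the attention map $\bm{A}$ (hence $\bm{A}_{i,:}$) is produced from $\bm{X}$ without reference to the output projection, so that no output coordinate $j$ is distinguished. Under the i.i.d. assumptions already in force in this appendix this is immediate, so I do not expect a genuine obstacle — the lemma is essentially a corollary of Theorem~\ref{thm:attn_squared_scale_factor} combined with coordinate symmetry, exactly as Lemma~\ref{lma:ffn_subset_out} was for the feedforward case.
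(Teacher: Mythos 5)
Your proposal is correct, but it takes a genuinely different route from what the paper actually does for this lemma. The paper does not derive Lemma~\ref{lma:attn_subset_out} from Theorem~\ref{thm:attn_squared_scale_factor} at all; instead it performs a direct computation of $\text{Var}([\bm{Y}_{\text{attn}}]_{ij})$ from scratch, using the uniform-softmax approximation $\text{head}_h \approx \tfrac{1}{N}\bm{1}_N\bm{1}_N^\top \bm{X}\bm{W}^V_h$ so that each head's entries have variance $\tfrac{d_{\text{model}}}{N}\sigma_V^2$, and then obtains the closed form $\text{Var}([\bm{Y}_{\text{attn}}]_{ij}) = (Hd_{\text{head}})\bigl(\tfrac{d_{\text{model}}}{N}\sigma_V^2\bigr)\sigma_C^2$ (Eq.~\eqref{eq:attn_out_var}); the $H'/H$ ratio then falls out by replacing $H$ with $H'$. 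You instead transport the row-norm identity of Theorem~\ref{thm:attn_squared_scale_factor} to a single coordinate via the column-exchangeability of $\bm{C}^{\text{attn}}$, exactly as the paper does for Lemma~\ref{lma:ffn_subset_out}. Your route is cleaner in two respects: it avoids the extra modeling assumption that the softmax distributes attention uniformly at initialization, and it never actually uses the hypothesis $\bm{X}\sim\mathcal{N}(0,\bm{I})$, making the ratio hold under weaker conditions than stated. What it gives up is the explicit constant: the paper needs the closed-form Eq.~\eqref{eq:attn_out_var} downstream (Lemma~\ref{lma:attn_variance} cites it directly to conclude $\text{Var}([\bm{Y}_{\text{attn}}]_{ij}) = 1/N$), so if you went your way you would still need to supply that computation separately. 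As a proof of the stated ratio, your argument is sound and arguably the more natural companion to Lemma~\ref{lma:ffn_subset_out}.
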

\begin{proof}
At initialization, we approximate the softmax as attending to all tokens equally \citep{Kedia2024Transformers, Chi2023Latent, Xiong2020LayerNorm}. Thus, $\text{head}_h \approx \frac{1}{N}\bm{1}_{N} \bm{1}_N^{\top} \bm{X} \bm{W}^{V}_h$. Let $\bar{\bm{X}} = \frac{1}{N}\bm{1}_{N} \bm{1}_N^{\top} \bm{X}$. Since $\bm{X}_{lk} \sim \mathcal{N}(0, 1)$ are i.i.d., the elements of $\bar{\bm{X}}$ have $\mathbb{E}[\bar{\bm{x}}_{ik}] = 0$ and $\text{Var}(\bar{\bm{x}}_{ik}) = 1/N$.

The elements of a single head, $[\text{head}_{h}]_{ij} = \sum_{k=1}^{d_{\text{model}}} \bar{\bm{x}}_{ik} [\bm{W}_h^{V}]_{kj}$, are zero-mean. By our initialization, the weights $[\bm{W}_h^{V}]_{kj}$ are i.i.d. with variance $\sigma_V^2$ and are independent of $\bar{\bm{X}}$. So, the variance $\text{Var}([\text{head}_{h}]_{ij})$ is
\begin{equation*}
\sum_{k=1}^{d_{\text{model}}} \text{Var}(\bar{\bm{x}}_{ik}) \text{Var}([\bm{W}_h^{V}]_{kj}) = \frac{d_{\text{model}}}{N}\sigma_V^2.
\end{equation*}
Let $\bm{Z} = \text{Concat}(\text{head}_1, \dots, \text{head}_H)$, and let $\sigma_C^2$ be the variance of $\bm{C}^{\text{attn}}_{kj}$. Notice the elements of the output layer $[\bm{Y}_{\text{attn}}]_{ij} = \sum_{k=1}^{H d_{\text{head}}} \bm{Z}_{ik} \bm{C}^{\text{attn}}_{kj}$ are zero-mean. Since each head is independent, the variance is
\begin{align}
\text{Var}([\bm{Y}_{\text{attn}}]_{ij}) &= \sum_{k=1}^{H d_{\text{head}}} \text{Var}(\bm{Z}_{ik})\text{Var}(\bm{C}^{\text{attn}}_{kj}) \nonumber \\
&= (H d_{\text{head}}) \left(\frac{d_{\text{model}}}{N}\sigma_V^2\right) \sigma_C^2. \label{eq:attn_out_var}
\end{align}
For a layer with a subset of $H'$ heads, the derivation is identical, replacing $H$ with $H'$. Taking the ratio with Eq. \eqref{eq:attn_out_var} gives the result.
\end{proof}

\begin{lemma}
Let $\bm{X} \in \mathbb{R}^{N \times d_{\text{model}}} \sim \mathcal{N} (0, \bm{I})$. Given a Multi-Head Attention layer as defined in Section \ref{sec:notation}. Then for each component of the attention layer output $i \in \{0, 1, \dots , N - 1\}, \ j \in \{ 0, 1, \dots, d_{\text{model}} \}$,
\begin{equation*}
\bm{Var} \left( [\bm{Y}_{\text{attn}}]_{ij} \right) = \frac{1}{N}
\end{equation*}
for sufficiently large $d_{\text{model}}$.
\label{lma:attn_variance}
\end{lemma}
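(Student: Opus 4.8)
The plan is to treat this lemma as the $H'=H$ specialization of the variance computation already carried out inside the proof of Lemma~\ref{lma:attn_subset_out}, now instantiated with the concrete He-style initialization variances from Section~\ref{sec:notation}.

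First I would recall from that proof the intermediate identity it establishes (its Eq.~\eqref{eq:attn_out_var}),
\begin{equation*}
\text{Var}([\bm{Y}_{\text{attn}}]_{ij}) = (H d_{\text{head}})\left(\frac{d_{\text{model}}}{N}\,\sigma_V^2\right)\sigma_C^2,
\end{equation*}
where $\sigma_V^2=\text{Var}([\bm{W}^V_h]_{kj})$ and $\sigma_C^2=\text{Var}(\bm{C}^{\text{attn}}_{kj})$. The derivation is exactly the one already given: the uniform-softmax approximation at initialization yields $\text{head}_h\approx\bar{\bm{X}}\bm{W}^V_h$ with $\text{Var}(\bar{\bm{x}}_{ik})=1/N$, so each head entry has variance $\tfrac{d_{\text{model}}}{N}\sigma_V^2$; then, since $\bm{C}^{\text{attn}}$ has i.i.d.\ zero-mean entries independent of the concatenated heads, the Bienaymé formula collapses $[\bm{Y}_{\text{attn}}]_{ij}=\sum_k \bm{Z}_{ik}\bm{C}^{\text{attn}}_{kj}$ into a sum of $H d_{\text{head}}$ equal variances. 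I would lean entirely on Lemma~\ref{lma:attn_subset_out} for this step rather than reprove it.

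Second I would substitute the initialization values: $\bm{W}^V_h\sim\mathcal{N}(0,\tfrac{1}{d_{\text{model}}}\bm{I})$ gives $\sigma_V^2=\tfrac{1}{d_{\text{model}}}$, and $\bm{C}^{\text{attn}}\sim\mathcal{N}(0,\tfrac{1}{H d_{\text{head}}}\bm{I})$ gives $\sigma_C^2=\tfrac{1}{H d_{\text{head}}}$, whence
\begin{equation*}
\text{Var}([\bm{Y}_{\text{attn}}]_{ij}) = (H d_{\text{head}})\cdot\frac{d_{\text{model}}}{N}\cdot\frac{1}{d_{\text{model}}}\cdot\frac{1}{H d_{\text{head}}} = \frac{1}{N}.
\end{equation*}
Every dimension factor ($H d_{\text{head}}$, $d_{\text{model}}$, and the attention width) telescopes away, which is precisely what the He initialization is engineered to do, and this is the whole content of the claim.

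The only genuinely nontrivial ingredient is the uniform-softmax approximation underlying the per-head variance; this is the source of the ``for sufficiently large $d_{\text{model}}$'' qualifier and is the step I would flag as the main obstacle to a fully rigorous argument. Since it is already imported with citations in Lemma~\ref{lma:attn_subset_out}, I would simply reuse it here; a careful treatment would instead need to show that replacing the softmax weights by $1/N$ perturbs $\text{Var}([\bm{Y}_{\text{attn}}]_{ij})$ by only $o(1/N)$ in this initialization regime, after which the bookkeeping above goes through verbatim.
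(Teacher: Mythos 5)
Your proof is correct and follows essentially the same route as the paper: both invoke Equation~\eqref{eq:attn_out_var} established inside Lemma~\ref{lma:attn_subset_out} and then substitute the He-style initialization variances $\sigma_V^2 = 1/d_{\text{model}}$ and $\sigma_C^2 = 1/(H d_{\text{head}})$ so that all dimension factors cancel to $1/N$. Your additional remark that the uniform-softmax approximation is the only genuinely nontrivial (and non-rigorous) ingredient, and is the source of the ``sufficiently large $d_{\text{model}}$'' qualifier, is an accurate observation the paper itself leaves implicit.
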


\begin{proof}
By Equation \ref{eq:attn_out_var} of Lemma \ref{lma:attn_subset_out}
\begin{align*}
\text{Var} \left( \left[ \bm{Y}_{\text{attn}} \right]_{ij} \right) &= H d_{\text{head}} \frac{d_{\text{model}}}{N} \sigma_x^2 \sigma_V^2 \sigma_C^2 \\
&= H d_{\text{head}} \frac{d_{\text{model}}}{N} \left( 1 \right) \left( \frac{1}{d_{\text{model}}} \right) \left( \frac{1}{H d_{\text{head}}} \right) \\
&= \frac{1}{N} \qedhere
\end{align*}
\end{proof}

\begin{theorem}[Attention Scale Factor]
\label{thm:attn_scale_factor}
Let $\bm{X} \in \mathbb{R}^{N \times d_{\text{model}}} \sim \mathcal{N} (0, \bm{I})$. For a Multi-Head Attention layer with $H$ heads, $\bm{Y}_{\text{attn}} = \text{Concat}(\text{head}_1, \dots, \text{head}_H) \bm{C}^{\text{attn}}$, reducing the number of heads to $H' \leq H$ by taking a subset scales the expected output norm as
\begin{equation*}
\mathbb{E} [\norm{[\bm{Y}'_{\text{attn}}]_i}] = \sqrt{\frac{H'}{H}} \mathbb{E} [\norm{[\bm{Y}_{\text{attn}}]_i}].
\end{equation*}
\end{theorem}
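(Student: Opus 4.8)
The plan is to mirror the argument of Theorem~\ref{thm:ffn_scale_factor}, transplanting it from the FFN setting to multi-head attention. First I would invoke Theorem~\ref{thm:attn_squared_scale_factor}, which supplies the exact identity $\mathbb{E}[\norm{[\bm{Y}'_{\text{attn}}]_i}^2] = \tfrac{H'}{H}\,\mathbb{E}[\norm{[\bm{Y}_{\text{attn}}]_i}^2]$, and take square roots of both sides to get $\sqrt{\mathbb{E}[\norm{[\bm{Y}'_{\text{attn}}]_i}^2]} = \sqrt{H'/H}\,\sqrt{\mathbb{E}[\norm{[\bm{Y}_{\text{attn}}]_i}^2]}$. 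It then remains only to justify, for large $d_{\text{model}}$ and with high probability, the approximation $\mathbb{E}[\norm{\cdot}] \approx \sqrt{\mathbb{E}[\norm{\cdot}^2]}$ for the rows of both $\bm{Y}_{\text{attn}}$ and $\bm{Y}'_{\text{attn}}$; substituting these two approximations into the square-rooted identity yields the claim.

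For the concentration step, a row $[\bm{Y}_{\text{attn}}]_i \in \mathbb{R}^{d_{\text{model}}}$ has squared norm $\sum_{j=1}^{d_{\text{model}}} [\bm{Y}_{\text{attn}}]_{ij}^2$, a sum over a large number of coordinates. By Lemma~\ref{lma:attn_variance}, each coordinate is zero-mean with identical variance $1/N$, so $\mathbb{E}[\norm{[\bm{Y}_{\text{attn}}]_i}^2] = d_{\text{model}}/N$; by Lemma~\ref{lma:attn_subset_out} the reduced-head rows have coordinate variance $H'/(HN)$. To apply Chebyshev's weak law of large numbers, as in Theorem~\ref{thm:ffn_scale_factor}, I need the coordinates within a row to be pairwise uncorrelated. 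Writing $\bm{Z} = \text{Concat}(\text{head}_1,\dots,\text{head}_H)$, for $j \neq j'$ the variables $[\bm{Y}_{\text{attn}}]_{ij} = \sum_k \bm{Z}_{ik}\bm{C}^{\text{attn}}_{kj}$ and $[\bm{Y}_{\text{attn}}]_{ij'} = \sum_k \bm{Z}_{ik}\bm{C}^{\text{attn}}_{kj'}$ share the row $\bm{Z}_{i\cdot}$ but are contracted against distinct, independent, zero-mean columns of $\bm{C}^{\text{attn}}$; conditioning on $\bm{Z}$ the covariance vanishes, and the tower rule then gives $\mathrm{Cov}([\bm{Y}_{\text{attn}}]_{ij},[\bm{Y}_{\text{attn}}]_{ij'}) = 0$. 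This is the attention analog of the ``uncorrelated output components'' fact used from Lemma~\ref{lma:ffn_variance}. Hence $\norm{[\bm{Y}_{\text{attn}}]_i}^2$ concentrates sharply around $d_{\text{model}}/N$, and by continuity of the square root $\norm{[\bm{Y}_{\text{attn}}]_i}$ concentrates around $\sqrt{d_{\text{model}}/N} = \sqrt{\mathbb{E}[\norm{[\bm{Y}_{\text{attn}}]_i}^2]}$, so the expectation of the norm is nearly its square-rooted second moment; the identical reasoning applies verbatim to $\bm{Y}'_{\text{attn}}$.

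The main obstacle is this concentration step: rigorously one must verify that pairwise uncorrelatedness of the coordinates (rather than full independence), together with a uniform bound on their moments, gives a Chebyshev-type variance bound on $\norm{\cdot}^2$, which in turn controls $\mathbb{E}[\norm{\cdot}]$ versus $\sqrt{\mathbb{E}[\norm{\cdot}^2]}$. This is exactly the level of rigor used informally in Theorem~\ref{thm:ffn_scale_factor}, so the same treatment is appropriate here; the one extra wrinkle relative to the FFN case is that the attention-output coordinates depend on a common $\bm{Z}$, so their uncorrelatedness is obtained by conditioning on $\bm{Z}$ rather than from outright independence. A secondary, milder caveat is the equal-attention-at-initialization approximation already inherited from Lemma~\ref{lma:attn_subset_out}, under which the whole computation is carried out.
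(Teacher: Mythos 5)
Your proposal matches the paper's proof essentially line for line: both invoke Theorem~\ref{thm:attn_squared_scale_factor} for the exact squared-norm ratio, take square roots, and then justify $\mathbb{E}[\norm{\cdot}] \approx \sqrt{\mathbb{E}[\norm{\cdot}^2]}$ by a concentration argument citing Lemmas~\ref{lma:attn_variance} and \ref{lma:attn_subset_out}. The one place you go slightly beyond the paper is spelling out why the coordinates of a row are pairwise uncorrelated (conditioning on $\bm{Z}$ and using independence of the columns of $\bm{C}^{\text{attn}}$) --- the paper asserts uncorrelatedness by pointing to the lemmas without this explicit argument, so your version usefully closes that small gap while remaining the same proof.
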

\begin{proof}
We argue similarly to Theorem \ref{thm:ffn_scale_factor}. From Theorem \ref{thm:attn_squared_scale_factor}, the expected squared norms are exactly related by $\mathbb{E} [\norm{[\bm{Y}'_{\text{attn}}]_i}^2] = \frac{H'}{H} \mathbb{E} [\norm{[\bm{Y}_{\text{attn}}]_i}^2]$. Taking the square root gives
\begin{equation}
\sqrt{\mathbb{E} [\norm{[\bm{Y}'_{\text{attn}}]_i}^2]} = \sqrt{\frac{H'}{H}} \sqrt{\mathbb{E} [\norm{[\bm{Y}_{\text{attn}}]_i}^2]}.
\label{eq:attn_proof_sqrt_relation_trimmed}
\end{equation}
For large $d_{\text{model}}$, the squared norm $\norm{[\bm{Y}_{\text{attn}}]_i}^2$ is a sum over many uncorrelated components with finite variance (Lemmas \ref{lma:attn_variance} and \ref{lma:attn_subset_out}). By the Law of Large Numbers, the squared norm concentrates around its mean. The continuous mapping theorem then implies that the norm concentrates around the square root of the mean, justifying the approximation $\mathbb{E}[\norm{\cdot}] \approx \sqrt{\mathbb{E}[\norm{\cdot}^2]}$. Substituting this into Eq. \eqref{eq:attn_proof_sqrt_relation_trimmed} yields the result.
\end{proof}


\section{Additional Experiments}
\label{sec:additional_experiments}

\subsection{Additional System Stability Experiments}
\label{sec:additional_system_stability}

\begin{figure}[htbp]                                      
\centering
\includegraphics[width=\columnwidth]{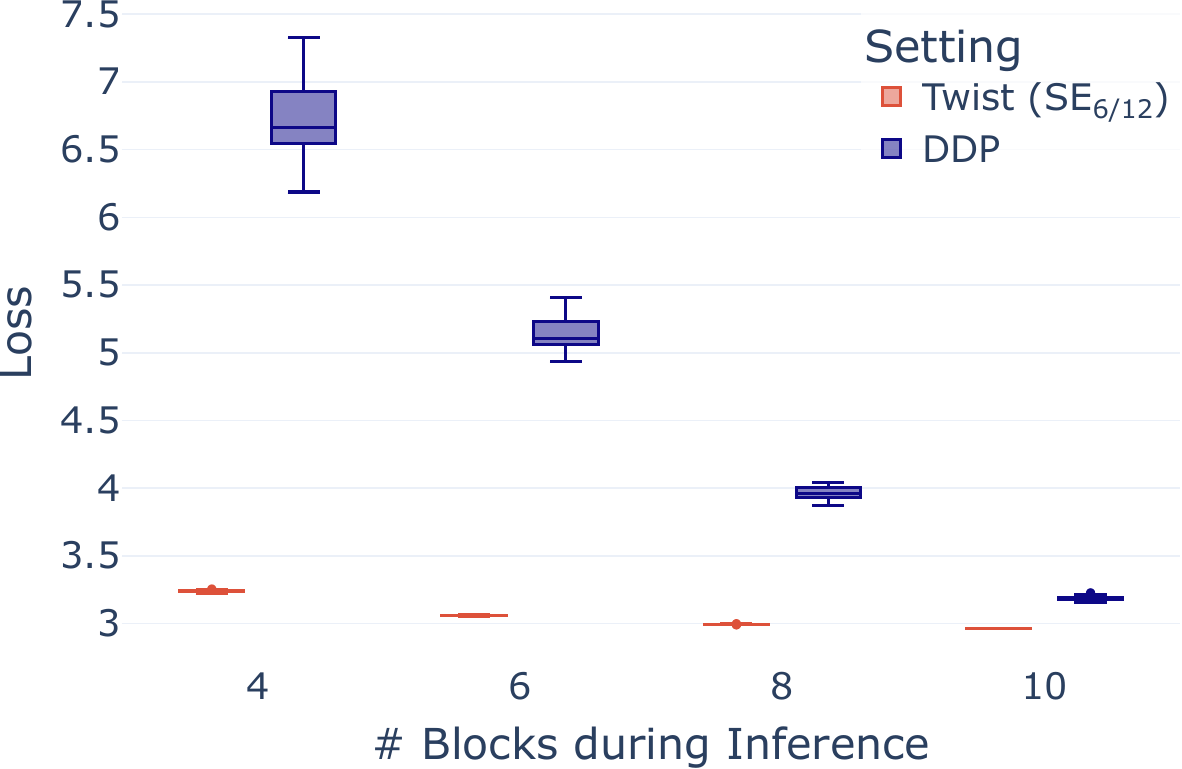} 
\caption{Distribution of eval loss for randomly generated subnets in the \texttt{both} configuration. The distributions for \TwIST{} ($SE_{6/12}$) are compared against a \texttt{DDP} baseline across various subnet ratios. The $SE_{6/12}$ variant of Twist is presented for a direct comparison, as both this method and \texttt{DDP} involve only a single training pass.}
\label{fig:subnet_loss_distribution_both}
\end{figure}

Figure~\ref{fig:subnet_loss_distribution_both} presents a detailed breakdown of the evaluation loss distributions for the \TwIST{} ($SE_{6/12}$) method compared against the \texttt{DDP} baseline. This analysis was performed on randomly generated subnets, varying the number of active blocks during inference (4, 6, 8, and 10). The results demonstrate the significant stability and superior performance of the \TwIST{} method. Across all tested subnet configurations, \TwIST{} maintains a consistent and low evaluation loss (approx. 3.1-3.2) with negligible variance. In contrast, the \texttt{DDP} baseline exhibits both substantially higher loss and high variance, particularly with fewer active blocks. The median loss for \texttt{DDP} is highest at 4 blocks (approx. 6.8) and progressively improves as the number of blocks increases, eventually approaching the loss of \TwIST{} at 10 blocks (approx. 3.2). This comparison highlights that \TwIST{} provides robust performance regardless of the active subnet width, a significant advantage over the more variable and width-dependent \texttt{DDP} baseline.

\begin{figure}[htbp]                                      
\centering
\includegraphics[width=\columnwidth]{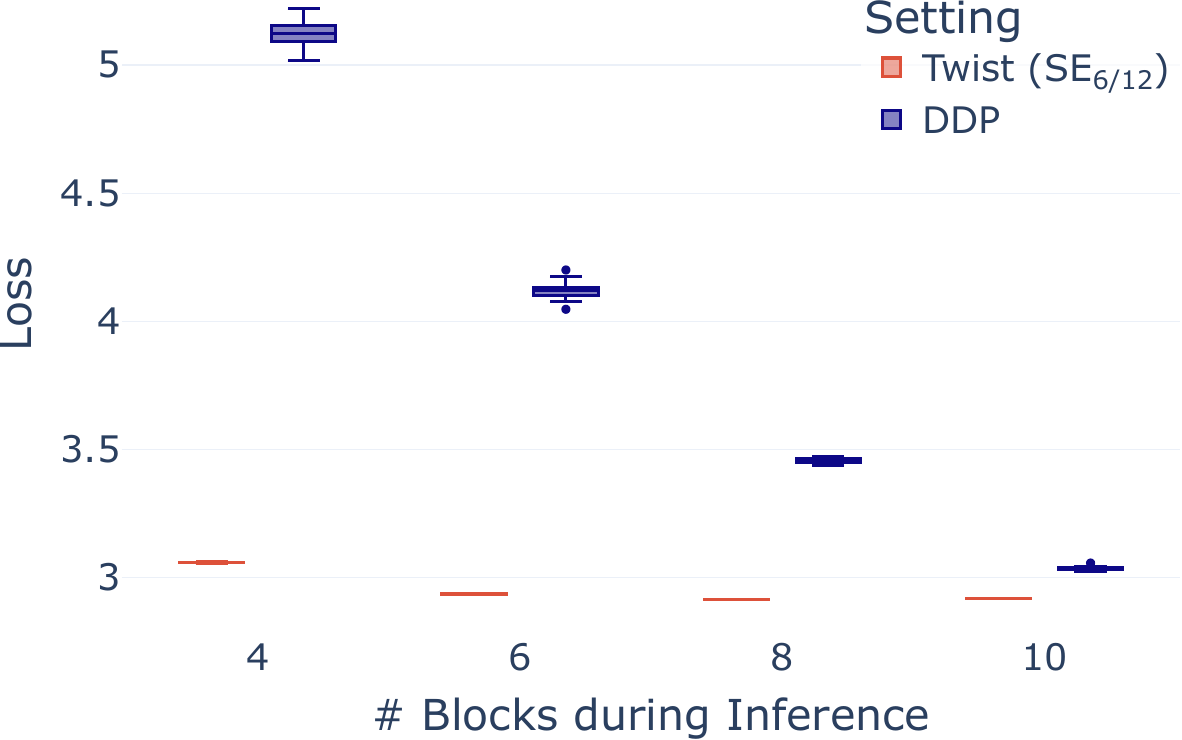} 
\caption{Distribution of eval loss for randomly generated subnets in the \texttt{ffn} configuration. The distributions for \TwIST{} ($SE_{6/12}$) are compared against a \texttt{DDP} baseline across various subnet ratios. The $SE_{6/12}$ variant of Twist is presented for a direct comparison, as both this method and \texttt{DDP} involve only a single training pass.}
\label{fig:subnet_loss_distribution_ffn}
\end{figure}

This finding is further reinforced by the analysis of the \texttt{ffn} configuration, shown in Figure~\ref{fig:subnet_loss_distribution_ffn}. The results exhibit a virtually identical pattern: the TWIST ($SE_{6 / 12}$) method again achieves a consistently low loss (approx. 3.0) with near-zero variance, irrespective of the number of active blocks. The DDP baseline, while demonstrating lower overall losses in this configuration (e.g., a median loss of approx. 5.1 at 4 blocks, compared to ~6.8 in Figure~\ref{fig:subnet_loss_distribution_both}), displays the same critical dependency on network width. Its loss and variance are highest at the leanest width and only converge with \TwIST's superior performance when the full 10 blocks are utilized. Taken together, Figures~\ref{fig:subnet_loss_distribution_both} and \ref{fig:subnet_loss_distribution_ffn} provide strong evidence that the \TwIST{} methodology produces subnets that are robustly performant at arbitrary widths, a key advantage over the width-sensitive \texttt{DDP} baseline.

\subsection{Additional Architectural Robustness Experiments}
\label{sec:additional_architectural_robustness}

\begin{figure}[htb]
\centering
\includegraphics[width=0.75\columnwidth]{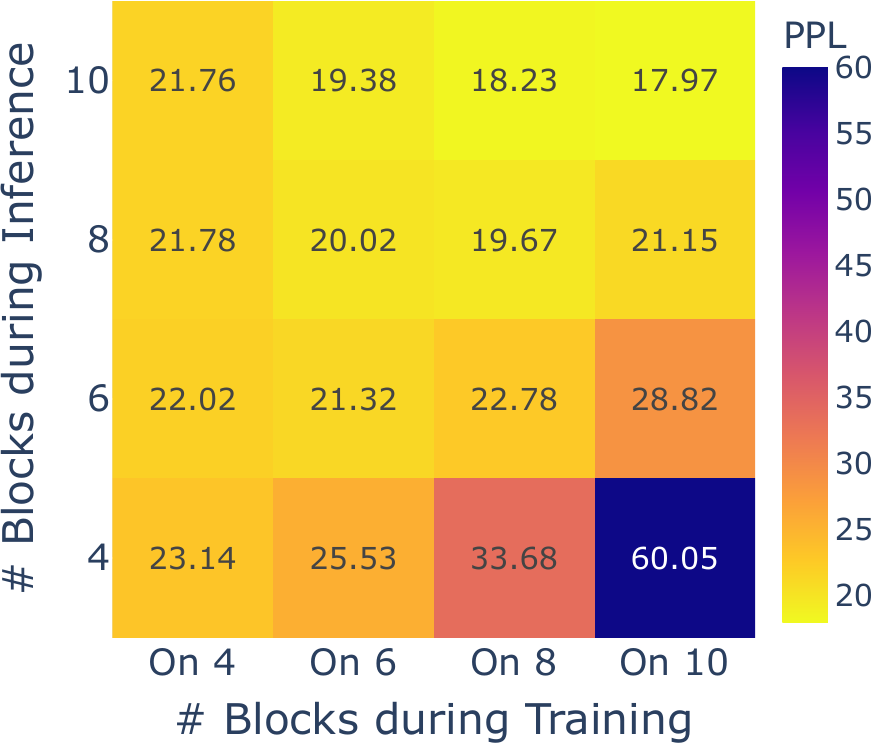} 
\caption{Heatmap of subnet robustness for the \texttt{both} setting. Brighter colors (yellow) signify lower PPL (better performance), and darker colors (blue) signify higher PPL.}
\label{fig:robustness_heatmap_both}
\end{figure}

Figures~\ref{fig:robustness_heatmap_both} and \ref{fig:robustness_heatmap_ffn} further investigate the issue of subnet robustness by analyzing the mismatch between training and inference depths. Figure~\ref{fig:robustness_heatmap_both} provides a perplexity (PPL) heatmap for the \texttt{both} setting, where lower PPL (brighter yellow) is better. The x-axis represents the number of blocks used during training (e.g., ``On 4''), and the y-axis represents the number of blocks used during inference. The plot reveals a severe performance degradation when models trained on wide subnets are inferred on lean ones. For instance, the model trained "On 10" blocks, while performing well at 10-block inference (17.97 PPL), experiences a catastrophic failure at 4-block inference (60.05 PPL). Conversely, the model trained ``On 4'' blocks shows remarkable robustness, maintaining a stable PPL (between 21.76 and 23.14) across all inference depths.

\begin{figure}[hbt]
\centering
\includegraphics[width=0.75\columnwidth]{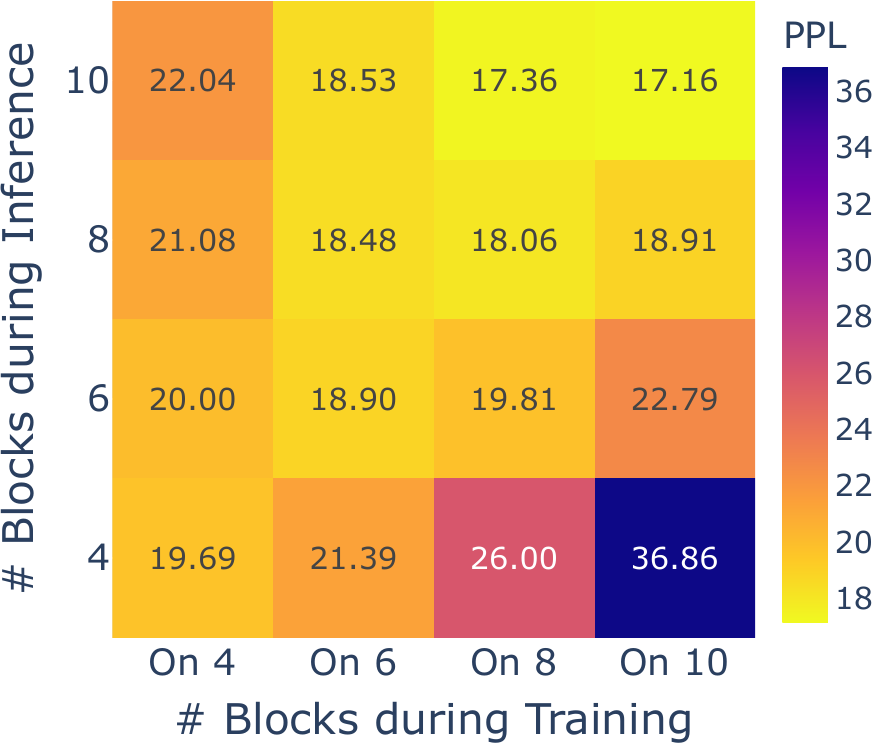} 
\caption{Heatmap of subnet robustness for the \texttt{ffn} setting. Brighter colors (yellow) signify lower PPL (better performance), and darker colors (blue) signify higher PPL.}
\label{fig:robustness_heatmap_ffn}
\end{figure}

Figure~\ref{fig:robustness_heatmap_ffn} confirms this exact finding in the \texttt{ffn} configuration. The identical trend is observed: models trained on wide subnets are ``brittle'' and not robust to leaner inference. The model trained ``On 10'' blocks sees its PPL degrade from 17.16 (at 10-block inference) to 36.86 (at 4-block inference). In stark contrast, the model trained ``On 4'' blocks again proves the more robust, with its PPL remaining in a tight, low range (19.69 to 22.04) regardless of the number of blocks used at inference. Both heatmaps strongly indicate that considering target sparsity from the start of training is crucial for building models that are robust to dynamic inference depths, whereas a traditional training strategy leads to significant performance collapse on smaller subnets.


\section{Computing memory and communication}
\label{sec:memory_and_communication}

We present formula used for computing the number of parameters in GPT-2 style transformer. Let $N_{\text{embd}} = (N_\text{vocab}) (d_\text{model})$ be the number of parameters in the embedding where $N_{\text{vocab}}$ is the size of the vocabulary. Let $N_{\text{attn}} = 4 (d_{\text{attn}}) (d_{\text{model}}) + 3(d_{\text{attn}}) + d_{\text{model}}$ be the number of parameters in an attention layer where $d_{\text{attn}}$ is computed as the number of heads in that layer multiplied by the head dimension. The first term is the total number of parameters in the $\bm{W}^Q, \bm{W}^K, \bm{W}^V, \bm{C}^{\text{attn}}$ weights. The following two terms are the number of parameters in the biases for the $\{ \bm{W}^Q, \bm{W}^K, \bm{W}^V \}$ and $\bm{C}^{\text{attn}}$ weights respectively. Let $N_{\text{ffn}} = 2 (d_{\text{ffn}})(d_{\text{model}}) + d_{\text{ffn}} + d_{\text{model}}$ be the number of parameters in a feedforward layer. Let $N_{\text{ln}} = 2 d_{\text{model}}$ be the number of parameters in a layer norm. Then, the number of parameters in a transformer layer is $N_{\text{layer}}^l = N_{\text{ln}} + \alpha_l N_{\text{attn}} + \beta_l N_{\text{ffn}}$ where $\alpha$ is the sparsity of the attention module and $\beta$ is the sparsity of the feedforward module. Moreoever, the total number of parameters in the transformer is
\begin{equation*}
N_{\text{model}} = N_{\text{embed}} + \sum_{l} N_{\text{layer}}^l + N_{\text{proj}}
\end{equation*}
where $N_{\text{proj}}$ is the number of parameters in the final transformer projection. This depends on the task. For text generation, $N_{\text{proj}} = 0$ assuming tied weights. For text classification, $N_{\text{proj}} = (d_{\text{model}})(N_{\text{labels}})$ where $N_{\text{labels}}$ is the number of categories. 

The number of parameters is roughly proportional to the amount of space a model takes up on a hardware accelerator both at model load time (where dtype determines bytes per param) and at train time (where it is about four times the model size when using Adam) \citep{HuggingFace_ModelEstimator_2025}. Since every parameter is both sent to and from a worker in a communication round, the total communication cost is proportional to the number of communication rounds multiplied by the number of parameters.


\end{document}